\newtheorem{example}{Example}
\newtheorem{definition}{Definition}
\newtheorem{proposition}{Proposition}
\newtheorem{property}{Property}
\theoremstyle{remark}
\newtheorem{remark}{Remark}
\newcommand{\wrt}{w.r.t.\ }
\newcommand{\ie}{i.e.\ }
\newcommand{\cf}{cf.\ }
\newcommand{\embedding}{\ensuremath{E}\xspace}
\newcommand{\method}{\ensuremath{M}\xspace}
\newcommand{\methodembedding}[1]{{#1}-embedding}
\newcommand{\kb}{\ensuremath{\mathcal{K}}\xspace}
\newcommand{\ontology}{\ensuremath{\mathcal{T}}\xspace}
\newcommand{\pattern}{\ensuremath{\phi}\xspace}
\newcommand{\patternset}{\ensuremath{\mathcal{S}}\xspace}
\newcommand{\llang}{\mathcal{L}\xspace}
\newcommand{\abox}{\mathcal{A}\xspace}
\newcommand{\tbox}{\ontology}
\newcommand{\BoxTwoEL}{\text{Box$^2$EL}\xspace}
\newcommand{\BoxEL}{\text{BoxEL}\xspace}
\newcommand{\BoxE}{\text{BoxE}\xspace}
\newcommand{\ELEm}{\text{ELEm}\xspace}
\newcommand{\ELBE}{\text{ELBE}\xspace}
\newcommand{\ExpressivE}{\text{ExpressivE}\xspace}
\newcommand{\EmEL}{\text{EmEL$^{++}$}\xspace}
\newcommand{\loss}{\operatorname{loss}}
\newcommand{\ball}{\operatorname{Ball}}
\newcommand{\boxe}{\operatorname{Box}}
\newcommand{\bump}{\operatorname{Bump}}
\newcommand{\head}{\operatorname{Head}}
\newcommand{\taile}{\operatorname{Tail}}
\newcommand{\norm}[1]{\left\lVert#1\right\rVert}
\newcommand{\dimension}{\ensuremath{d}\xspace}
\newcommand{\complete}{entailed\xspace}
\newcommand{\cent}{\ensuremath{c}}
\newcommand{\radius}{\ensuremath{\rho}}
\newcommand{\xmark}{\ding{56}}
\newcommand{\ALC}{\ensuremath{{\cal ALC}}\xspace}
\newcommand{\EL}{\ensuremath{{\cal E\!L}}\xspace}
\newcommand{\NC}{\ensuremath{{\sf N_C}}\xspace}
\newcommand{\NI}{\ensuremath{{\sf N_I}}\xspace}
\newcommand{\NR}{\ensuremath{{\sf N_R}}\xspace}
\newcommand{\mi}[1]{\ensuremath{\mathit{#1}}}
\newcommand{\mn}[1]{\ensuremath{\mathsf{#1}}}
\newcommand{\Amc}{\ensuremath{\mathcal{A}}\xspace}
\newcommand{\Imc}{\ensuremath{\mathcal{I}}\xspace}
\newcommand{\Kmc}{\ensuremath{\mathcal{K}}\xspace}
\newcommand{\Lmc}{\ensuremath{\mathcal{L}}\xspace}
\newcommand{\Smc}{\ensuremath{\mathcal{S}}\xspace}
\newcommand{\Tmc}{\ensuremath{\mathcal{T}}\xspace}
\newcommand{\Ymc}{\ensuremath{\mathcal{Y}}\xspace}
\newcommand{\Imf}{\ensuremath{\mathfrak{I}}\xspace}
\newcommand\Tstrut{\rule{0pt}{2.4ex}}
\newcommand{\tikzcmark}{%
\tikz[scale=0.15] {
    \draw[line width=0.7,line cap=round] (0.25,0) to [bend left=10] (1,1);
    \draw[line width=0.8,line cap=round] (0,0.35) to [bend right=1] (0.23,0);
}}
\newcommand{\tikzxmark}{%
\tikz[scale=0.15] {
    \draw[line width=0.7,line cap=round] (0,0) to [bend left=6] (1,1);
    \draw[line width=0.7,line cap=round] (0.2,0.95) to [bend right=3] (0.8,0.05);
}}
\renewcommand{\xmark}{\tikzxmark}
\renewcommand{\checkmark}{\tikzcmark}
\newcommand{\new}[1]{#1}
\title{
Knowledge Base Embeddings: Semantics and Theoretical Properties
}
\author{
Camille Bourgaux$^1$\and
Ricardo Guimarães$^2$ \and
Raoul Koudijs$^2$\and
Victor Lacerda$^2$ \and
Ana Ozaki$^{2,3}$\\
\affiliations
$^1$ DI ENS, ENS, CNRS, PSL University  \& Inria, Paris, France\\
$^2$ University of Bergen\\
$^3$ University of Oslo \\
}
\begin{document}

\maketitle

\begin{abstract}
Research on knowledge graph embeddings has recently evolved into
\emph{knowledge base} embeddings, where the goal is not only to map facts into vector spaces but also constrain the models so that they take into account  the relevant conceptual knowledge available. 
This paper examines recent methods that have been proposed to embed  knowledge bases in description logic into vector spaces through the lens of their geometric-based semantics. We identify several relevant theoretical properties, which we draw from the literature and sometimes generalize or unify. We then investigate how concrete embedding methods fit in this theoretical framework.
\end{abstract}

\section{Introduction}
\label{sec:intro}
Knowledge graph (KG) embeddings allow for a continuous representation of KGs in vector spaces, which  can be used for link prediction and related tasks. 
Recent works have expanded this idea to \emph{knowledge base} (KB) embeddings, which take into account  not only   
facts but also  
conceptual knowledge, expressed as a  \emph{TBox}~\citep{Geometric,Kulmanov2019,Cone1,BoxE,EmEL,DBLP:journals/corr/abs-2202-14018,BoxEL,ExpressivE,DBLP:journals/corr/abs-2301-11118}.
Which  theoretical properties are interesting for KB embeddings? Which embedding methods have these properties? How expressive is the ontology language considered? 
These are some of the relevant questions  to  better understand how embedding methods work and which   properties they   offer.

One of the challenges to  study KB embeddings in a uniform way is that the methods differ not only in how they are defined but also in the ontology language and in the properties the authors consider. 
We focus on KBs that can be expressed in \emph{description  logic} (DL),   
and on \emph{region-based embedding methods}, which usually come with a geometric-based semantics.  
Regarding the properties, a basic goal is to determine  whether there is some kind of correspondence between classical models based on interpretations and geometric-based models created by the embedding methods. 
A simple kind of correspondence is whether the existence of a (geometric-based) model within the embedding method implies the KB is satisfiable, and vice-versa,  whether the existence of a classical interpretation that satisfies a given KB implies 
the existence of a  model within the embedding method. The former property is known as \emph{soundness} (see, e.g.,~\cite{BoxEL}) 
and we call the latter   
\emph{completeness}. 
Such correspondence does not require, for example, that
(i) axioms entailed by a given KB hold in the geometric-based model, or conversely,  that (ii) axioms that hold in the geometric-based model are a consequence of the KB  or (iii) are at least consistent with the KB.
\begin{figure}
    \includegraphics[scale=0.28]{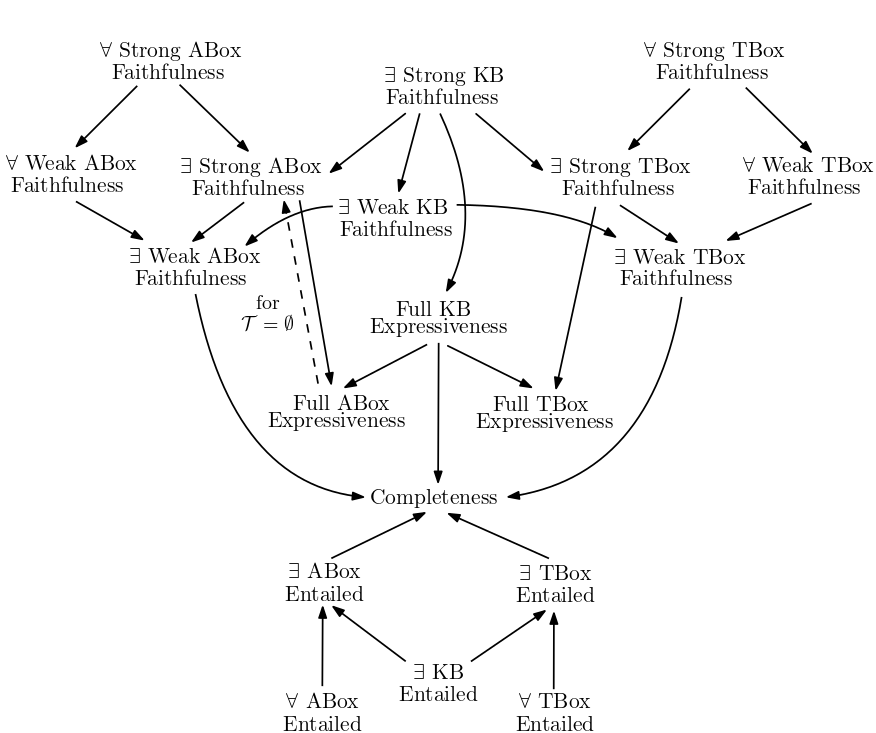}
    \caption{
    Relationships between the properties we consider (except for soundness which is incomparable \new{and KB properties  expressible by combining TBox and ABox properties)}. 
    An arrow from property X to property Y indicates that property X implies property Y. A dashed line indicates that the implication holds when the TBox $\Tmc$ is empty.
    The symbol $\forall$ defines a guarantee, while $\exists$ just posits ability. 
    }\label{fig:inflangdiag}
\end{figure}
These properties strengthen the notion of completeness. We call Property (i) 
\emph{entailment closure}, 
while Properties (ii) and (iii) correspond, respectively, to the notions of \emph{strong} and \emph{weak faithfulness} by~\cite{Cone1}. 
We study two variants for (i)-(iii): one only requires 
the \emph{ability} of an embedding method to produce a geometric-based model with the desired property, that is, whether such a model \emph{exists};
and one where the property should hold as a \emph{guarantee}, that is, in addition to ability, \emph{every}  model should have the property. 
 In the KG literature, 
 \emph{full expressiveness}~\citep{10.5555/3327144.3327341} means that, given any assignment of truth values for facts, there is an embedding model that separates true facts from false ones. We generalize this notion to include ontology languages. 
We study and formalize these different properties, 
proposing a theoretical framework for better understanding 
KB embeddings behaviour. 
 \cref{fig:inflangdiag} illustrates the relationships between the  properties. 
We also study recent KB embedding methods and 
investigate how they fit in the theoretical framework.  
Our study reveals that for many embedding methods, in particular those with an implementation, the theoretical properties stated in the literature do not hold or cannot be combined (e.g., an embedding method can be fully expressive and able to capture some patterns but not within the same  model).

We provide basic definitions in \cref{sec:preli} and present recent region-based embedding methods and their semantics in \cref{sec:embeddingmethods}. In \cref{sec:properties}, we introduce embedding method properties and show how they relate. 
We also show that if the ontology language is \emph{finite} (that is, only finitely many axioms exist in the language), which is a common assumption for KB embedding methods, then multiple properties become equivalent. 
In \cref{sec:methods-properties}, we investigate whether the embedding methods of \cref{sec:embeddingmethods} fit into the theoretical framework of \cref{sec:properties}. 
We conclude in \cref{sec:related}. 
\new{Omitted proofs are available in the appendix.} 

\begin{table}
\begin{tabular*}{\columnwidth}{l @{\extracolsep{\fill}} c l}
\toprule
\bf{Name} & \bf{Syntax} & \bf{Semantics}\\
\midrule
Top  & $\top$ & $\Delta^\Imc$\\
Bottom  & $\bot$ & $\emptyset$\\
Nominal & $\{a\}$ & $\{a^\Imc\}$\\
Negation & $\neg C$ & $\Delta^\Imc\backslash C^\Imc$\\
Conjunction & $C\sqcap D$ & $C^\Imc\cap D^\Imc$\\
Disjunction & $C\sqcup D$ & $C^\Imc\cup D^\Imc$\\
Q. exist.  res. & $\exists R.C$ & $\{d\mid (d,e)\in R^\Imc, e\in C^\Imc\}$\\
Q. univ. res. & $\forall R.C$ & $\{d\mid (d,e)\in R^\Imc\Rightarrow e\in C^\Imc\}$\\

\midrule
Inverse & $r^-$ & $\{(e,d)\mid (d,e)\in r^\Imc\}$\\
Negation & $\neg R$ & $(\Delta^\Imc \times \Delta^\Imc) \backslash R^\Imc$\\
Composition & $R \circ S$ & $\{(d,e)\mid (d,d')\in R^\Imc, (d',e)\in S^\Imc \}$\\
\bottomrule
\end{tabular*}
\caption{Syntax and semantics of common DL 
constructors: $a\in\NI$, $r\in\NR$, $C, D$ are (complex) concepts, and $R, S$ (complex) roles.}\label{constructors}
\end{table}

\section{Basic Definitions}\label{sec:preli}
This section recalls the basics of DL syntax and semantics and the basics of KB embeddings into vector spaces.

\subsection{Description Logic Knowledge Bases}\label{sec:dl-prelim}
\subsubsection*{Syntax} Let $\NC$, $\NR$, and $\NI$ be
pairwise disjoint  {finite}
sets of \emph{concept names}, \emph{role names}, and \emph{individual names} or \emph{entities}, respectively. These sets 
are usually countably infinite in the DL literature ~\citep{DlIntro} but often assumed to be finite in the KG and KB embedding literature \citep{BoxE,BoxEL}. 
 An \emph{ABox} \Amc is a finite set of concept and role assertions of the form $A(a)$ or $r(a,b)$ respectively, where $A\in\NC$, $r\in\NR$ and $a,b\in\NI$. 
 A \emph{TBox} \Tmc is a finite set of axioms whose form depends on the specific DL language. 
 Syntax of common DL constructors and TBox axioms is given in Tables \ref{constructors} and \ref{Axioms}. 
 In the KG embedding literature, 
\emph{(inference) patterns} are often considered. Table~\ref{patterns} presents these patterns and their DL translation. 
We say that a DL language \Lmc is \emph{finite} if
there are finitely many axioms expressible in \Lmc.
A DL \emph{knowledge base} $\kb=\Tmc\cup\Amc$ is the union of a TBox and an ABox. 
 
\begin{table}
\begin{tabular*}{\columnwidth}{l @{\extracolsep{\fill}} c l}
\toprule
\bf{Name} & \bf{Syntax} & \bf{Semantics}\\
\midrule
Concept inclusion & $C_1\sqsubseteq C_2$ & $C_1^\Imc\subseteq C_2^\Imc$\\
Role inclusion & $R\sqsubseteq S$ & $R^\Imc\subseteq S^\Imc$\\
\midrule
Concept assertion & $A(a)$ & $a^\Imc\in A^\Imc$\\
Role assertion & $r(a,b)$ & $(a^\Imc, b^\Imc)\in r^\Imc$\\
\bottomrule
\end{tabular*}
\caption{Syntax and semantics of common TBox and ABox axioms: $A\in\NC$, $r\in \NR$, $a, b\in\NI$, $C_1$, $C_2$ denote (complex) concepts and $R,S$ (complex) roles.}\label{Axioms}
\end{table}

\begin{table}\label{tab:patternsDLtranslation}
\begin{tabular*}{\columnwidth}{l @{\extracolsep{\fill}} c l}
\toprule
\bf{Name} & \bf{Rule form} & \bf{DL}\\
\midrule
Symmetry & $\forall \mathbf{x}\;r(x,y)\to r(y,x)$ & $r\sqsubseteq r^-$
\\
Inversion & $\forall \mathbf{x}\; r(x,y)\leftrightarrow s(y,x)$ & $r\equiv s^-$
\\
Hierarchy& $\forall \mathbf{x}\; r(x,y)\to s(x,y)$ & $r\sqsubseteq s$
\\
Intersection & $\forall \mathbf{x}\; r(x,y)\wedge s(x,y)\to t(x,y)$ & $r\sqcap s\sqsubseteq t$
\\
Composition & $\forall \mathbf{x}\;r(x,y)\wedge s(y,z)\to t(x,z)$  & $r\circ s\sqsubseteq t$
\\
Mut. exclusion & $\forall \mathbf{x}\;r(x,y)\wedge s(x,y)\to\bot$ & $r\sqsubseteq \neg s$
\\
\new{Asymmetry}
& $\forall \mathbf{x}\;r(x,y)\wedge r(y,x)\to\bot$ & $r\sqsubseteq \neg r^-$
\\
\bottomrule
\end{tabular*}
\caption{Common patterns and their DL counterparts, where $r, s , t$ are distinct roles in $\NR$ and $\forall\mathbf{x}$ is a shorthand for $\forall xy$ or $\forall xyz$. 
}\label{patterns}
\end{table}

\subsubsection*{Semantics} The semantics 
of DL KBs is given by interpretations.
An \emph{interpretation} $\Imc$ is a pair $(\Delta^\Imc ,\cdot^\Imc)$ where the \emph{interpretation domain} $\Delta^\Imc$ is a non-empty set and \emph{$\cdot^\Imc$} is a function that maps each $a\in\NI$ to some $a^\Imc\in\Delta^\Imc$, each $A\in\NC$ to some $A^\Imc\subseteq\Delta^\Imc$ and each $r\in\NR$ to some $r^\Imc\subseteq \Delta^\Imc\times\Delta^\Imc$. 
The function $\cdot^\Imc$ is extended to complex concept and roles as explained in Table \ref{constructors} and the satisfaction of TBox axioms and ABox assertions is defined by Table \ref{Axioms}. 
An interpretation \Imc is a \emph{model} of an ABox \Amc ($\Imc\models\Amc$) if it satisfies every assertion in \Amc; it is a model of a TBox \Tmc ($\Imc\models\Tmc$) if it satisfies every axiom in \Tmc; and it is a model of a KB $\kb=\Tmc\cup\Amc$ if $\Imc\models\Tmc$ and $\Imc\models \Amc$. 
A KB $\Kmc$ is \emph{satisfiable} (or \emph{consistent}) if it has a model. An axiom $\alpha$ (being an ABox assertion or a TBox axiom) is \emph{consistent with a KB $\Kmc$} if $\Kmc\cup\{\alpha\}$ is satisfiable, and is \emph{entailed} by $\kb$, 
written $\kb\models\alpha$, if $\Imc\models\alpha$ for every model $\Imc$ of $\kb$. The \emph{deductive closure} of a KB $\kb$ is the (possibly infinite) set of all axioms entailed by $\kb$.

\subsubsection*{ABoxes as TBoxes} 
Some KB embedding methods operate on the TBox 
only and 
encode the ABox into the TBox 
using nominals. Specifically, $A(a)$ is represented by $\{a\}\sqsubseteq A$ and $r(a,b)$ by $\{a\}\sqsubseteq \exists r.\{b\}$. When discussing the embedding methods properties, we still regard these axioms as assertions.

\subsection{Embedding KBs Into Vector Spaces}
\subsubsection*{Vector spaces, regions and transformations} The aim of KG or KB embedding is to learn a low-dimensional representation of the KG or KB components into some \emph{vector space(s)}.  
The \dimension-dimensional vector space $\mathbb{R}^\dimension$ is an Euclidean space whose elements are of the form $\vec{v}=(v_1,\dots,v_\dimension)$ and may be added together (using $+$) or multiplied by 
scalars (using $.$). We use $\vec{u}-\vec{v}$ as a shorthand for $\vec{u}+(-1).\vec{v}$ and say that $\vec{u}\leq\vec{v}$ if $u_i\leq v_i$ for every $1\leq i\leq \dimension$. 
The \emph{distance} between $\vec{u},\vec{v}\in \mathbb{R}^\dimension$ is the usual \emph{Euclidean distance} 
$\norm{\vec{u}-\vec{v}} =$ $\sqrt{(u_1-v_1)^2+\dots+(u_\dimension-v_\dimension)^2}$. 
Finally, $\vec{u}\oplus\vec{v}$ is the vector from $\mathbb{R}^{\dimension+\dimension'}$ that concatenates 
$\vec{u}\in \mathbb{R}^{\dimension}$ and $\vec{v}\in \mathbb{R}^{\dimension'}$.

We focus on region-based embedding methods, whose regions are usually 
convex.
A \emph{region} $X$ of $\mathbb{R}^\dimension$ is a subset of $\mathbb{R}^\dimension$. It is \emph{convex} if for every $\vec{u},\vec{v}\in X$ and $\lambda\in[0,1]$, $(1-\lambda)\vec{u}+\lambda\vec{v}$ is in $X$. 
Examples of convex regions are
\begin{itemize}
    \item \emph{convex cones}: for all $\vec{u},\vec{v}\in X$, $\lambda,\mu\geq 0$, $\lambda\vec{u}+\mu\vec{v}\in X$;
    \item \emph{boxes}: $X=\{\vec{x}\mid \vec{u}\leq\vec{x}\leq\vec{v}\}$ where $\vec{u}$ is the \emph{lower corner} and $\vec{v}$ is the \emph{upper corner} of the box; 
    \item \emph{balls}: an open (resp.\ closed) \dimension-ball of radius $\radius$ and center $\vec{x}$ is the set 
    of all $\vec{y}$ such that $\norm{\vec{y}-\vec{x}}<\radius$ (resp.\ $\leq\radius$).
\end{itemize}  
Some embedding methods rely on \emph{transformations} of $\mathbb{R}^\dimension$, which are functions $f:\mathbb{R}^\dimension\mapsto \mathbb{R}^\dimension$. An \emph{affine transformation} preserves 
convexity and parallelism and is defined by $f(\vec{x})=A \vec{x}+\vec{b}$ with $A$ an invertible matrix and $\vec{b}\in\mathbb{R}^\dimension$. If $A$ is the identity matrix, \ie 
$f(\vec{x})= \vec{x}+\vec{b}$, then $f$ is a \emph{translation}.

\subsubsection*{Embeddings} We consider abstract notions of an embedding for a DL KB and 
an embedding method. We intentionally refrain from giving more precise definitions since the existing embeddings in the literature differ so much in the way in which they embed KBs. 
An \emph{embedding} \embedding is a function that maps the components of a KB 
(such as individual, concept and role names) into abstract structures associated with vector spaces (such as regions or vector transformations). 

\begin{definition}[Embedding method]
An embedding method for $\Lmc$ is an algorithm that given an ABox and a (possibly empty) $\Lmc$-TBox, produces an embedding. We call embeddings generated by a given method \method the \methodembedding{\method}s.
\end{definition}

Embedding methods usually use
\emph{loss functions}
that penalize, e.g.,  
when regions associated 
to concepts or roles, or vectors associated with individuals, are not
placed as expected. 
Embedding methods 
optimize the loss so that the embedding captures the KB knowledge. 
The loss function often uses a \emph{margin parameter} which when less or equal to zero enforces that, for instance, the inclusion between regions is proper when the loss is zero.
\emph{Scoring functions} 
associate a score to facts or axioms, interpreted as how likely the fact or axiom is considered to be true. 
However, it is often difficult to have a fixed and pre-defined threshold
for the scoring function which is rather only used to rank facts or axioms. 

\section{KB Embeddings and Their Semantics}\label{sec:embeddingmethods}

Embeddings are usually used to assess facts or axioms (e.g., to predict plausible facts) but this can be done in different ways.  
Region-based embeddings 
come with a 
geometric-based semantics, but axioms' plausibility is also often evaluated using a scoring function, e.g., considering that an axiom is true if it gets a score above a threshold. 
 This motivates the following definition of embedding semantics, which allows for considering various semantics for a given embedding.

\begin{definition}[Embedding semantics]
A semantics for an embedding method \method is a function $S_\method$, which given an \methodembedding{\method} $\embedding$ and a language $\mathcal{L}$ 
returns a function $S_\method(\embedding, \mathcal{L})$ that maps each sentence in the language $\mathcal{L}$ 
to $1$ (meaning true) or $0$ (meaning false). 
\end{definition}

Here, we focus on region-based embedding methods and their geometric-based semantics, hence  we only consider one semantics for each 
method. Also, we  
consider one language per method.
Thus, we
may omit $S$ and \Lmc and 
write $\embedding\models_{\method}\alpha$ for $S_M(E,\Lmc)(\alpha)=1$
or $\embedding\not\models_{\method} \alpha$ for $S_M(E,\Lmc)(\alpha)=0$. 

We now briefly introduce the embedding methods we will consider in this paper, using the terminology and notation we introduced for embedding and semantics. 
Our focus is on KB embedding methods that can be applied to various DL languages but we also consider two KG embedding methods that are able to capture some patterns (\cf Table \ref{patterns}).

\subsubsection*{Convex geometric models \citep{Geometric}}
This method applies to quasi-chained rules, which include in particular the description logic $\mathcal{ELHI}_\bot$ in normal form 
(concept inclusions of the form $A\sqsubseteq B$, $A_1\sqcap A_2 \sqsubseteq B$, $\exists r^{(-)}.A\sqsubseteq B$ and $A\sqsubseteq \exists r^{(-)}.B$ with $A,A_i\in\NC\cup\{\top\}$ and $B\in\NC\cup\{\bot\}$ and role inclusions of the form $r\sqsubseteq s^{(-)}$, where $s^{(-)}$ can be a role name or its inverse). 
Each 
$a\in\NI$ is embedded as a vector $\embedding(a)\in\mathbb{R}^\dimension$, each 
$A\in\NC$ as a convex region $\embedding(A)\subseteq\mathbb{R}^\dimension$, and each 
$r\in\NR$  as a convex region $\embedding(r)\subseteq\mathbb{R}^{2\dimension}$. 
The semantics 
of this 
method for $\mathcal{ELHI}_\bot$ is given by:
\begin{itemize}
    \item $\embedding\models_{\mi{conv}} A(a)$ iff $\embedding(a)\in\embedding(A)$; 
\item $\embedding\models_{\mi{conv}} r(a,b)$ iff $\embedding(a)\oplus\embedding(b)\in\embedding(r)$; 
\item $\embedding\models_{\mi{conv}} r\sqsubseteq s^{(-)}$ iff $\embedding(r)\subseteq\embedding(s^{(-)})$; 
\item $\embedding\models_{\mi{conv}} C\sqsubseteq D$ iff $\embedding(C)\subseteq\embedding(D)$;  
\end{itemize}
where the embedding function $E$ is extended to
complex concept and role expressions in $\mathcal{ELHI}_\bot$ as follows (see \citep{DBLP:conf/dlog/BourgauxOP21} for a reference using a similar definition for a DL-Lite dialect):
\begin{itemize}
    \item $\embedding(\bot):=\emptyset$, $\embedding(\top):=\mathbb{R}^\dimension$;

\item 
$\embedding(r^-):=\{\vec{x}\oplus \vec{y}\mid \vec{x},\vec{y}\in \mathbb{R}^\dimension, \vec{y}\oplus \vec{x}\in\embedding(r)\}$;
\item $\embedding(A_1\sqcap A_2):=\embedding(A_1)\cap \embedding(A_2)$; and 
\item 
  $\embedding(\exists r^{(-)}.A):=\{\vec{x}\mid \vec{x}\in\mathbb{R}^\dimension, \vec{x}\oplus \vec{y}\in\embedding(r^{(-)}), \vec{y}\in\embedding(A)\}$. 
\end{itemize}

\subsubsection*{Al-cone models \citep{Cone1}}
\new{Since in general the complement of a convex region may not be convex, when dealing with logics with negation, it is useful to consider convex regions which have a natural 
``complementary region'' 
other than their actual complement. 
For this purpose, the authors of this method 
consider 
axis-aligned cones (al-cones), of the form $X_1\times\dots\times X_\dimension$ with $X_i\in\{\mathbb{R},\mathbb{R}_+,\mathbb{R}_-,\{0\}\}$. 
The method} 
applies to (fragments of) $\mathcal{ALC}$ (which allows for concept inclusions using the $\sqcap$, $\sqcup$, $\neg$, $\exists$ and $\forall$ constructors). 
{The authors consider
\emph{propositional \ALC} which
is a fragment of \ALC that allows Boolean connectives ($\sqcap$, $\sqcup$, $\neg$) but disallows expressions containing roles. We call this fragment $\ALC_\text{p}$.
The authors also consider the fragment of \ALC that allows concept expressions with roles but limits the size of the expressions by a constant using a notion called \emph{rank}. We denote this fragment with
$\ALC_\text{r}$.}
Each 
$a\in\NI$ is embedded as a vector $E(a)\in\mathbb{R}^\dimension\setminus\{\vec{0}\}$, each 
$A\in\NC$ as an \new{al-cone}  
$E(A)$, and each 
$r\in\NR$ as a subset $E(r)$ of $\mathbb{R}^\dimension\setminus\{\vec{0}\}\times\mathbb{R}^\dimension\setminus\{\vec{0}\}$. 
The semantics of the al-cone embedding method for $\mathcal{ALC}$ 
is defined as:  
\begin{itemize}
\item $\embedding\models_{\mi{cone}}A(a)$  iff $\embedding(a)\in\embedding(A)$; 
\item $\embedding\models_{\mi{cone}} r(a,b)$ iff $(\embedding(a),\embedding(b))\in\embedding(r)$;
\item $\embedding\models_{\mi{cone}} C_1\sqsubseteq C_2$ iff $\embedding(C_1)\subseteq \embedding(C_2)$;
\end{itemize} 
where  
the embedding function $E$ is extended to complex concepts $C_1,C_2$
as follows:
\begin{itemize}

\item 
$\embedding(C_1\sqcap C_2):=\embedding(C_1)\cap \embedding(C_2)$; 
\item $\embedding(\neg C):=\embedding(C)^o\new{=\{\vec{x}\in\mathbb{R}^\dimension\mid \forall \vec{y}\in\embedding(C), \langle\vec{x},\vec{y}\rangle\leq 0\}}$ is the polar cone of $\embedding(C)$; 
\item 
$\embedding(C_1\sqcup C_2):=\embedding(\neg(\neg C_1\sqcap \neg C_2))$; 
\item 
$\embedding(\forall r.C)$ is the minimal al-cone containing $\{\vec{x}\mid (\vec{x},\vec{y})\in \embedding(r)\Rightarrow \vec{y}\in\embedding(C)\}$; 
\item
$\embedding(\exists r.C):=\embedding(\neg\forall r.\neg C)$;   
    \item $\embedding(\top):=\mathbb{R}^\dimension$; and $\embedding(\bot):=\{\vec{0}\}$.
\end{itemize}

\begin{remark}
This semantics 
is such that it may be the case that $\embedding\not\models_{\mi{cone}} A(a)$ and $\embedding\not\models_{\mi{cone}} \neg A(a)$.
\end{remark}

\subsubsection*{\ELEm \citep{Kulmanov2019}}
This method applies to a fragment of $\EL^{++}$~\citep{DBLP:conf/owled/BaaderLB08} 
that corresponds to $\mathcal{ELO}_\bot$ (\ie $\mathcal{EL}$ with nominals and $\bot$). 
Before being embedded, ABox assertions are transformed into TBox axioms using nominals as explained in Section \ref{sec:dl-prelim} and the TBox is put in normal form. 
Each concept name or nominal $C$ is embedded as an open \dimension-ball $\embedding(C)=\ball(C)$ represented by its center $\cent(C)\in\mathbb{R}^\dimension$ and radius $\radius(C)\in\mathbb{R} $, and each 
$r\in\NR$ as a vector $\embedding(r)\in\mathbb{R}^\dimension$. The top concept \(\top\) is mapped to \(\mathbb{R}^\dimension\), that is, \(\radius(\top) = \infty\). 
    The semantics of the \ELEm embedding method based on regions is defined for $\mathcal{ELO}_\bot$ axioms in normal form below. 
\begin{itemize}
    \item For (complex) concepts $C$ and $D$ different from $\bot$, 
 $\embedding\models_\mi{elem} C\sqsubseteq D$ iff $\ball(C)\subseteq \ball(D)$ where 
 \begin{itemize}
     \item $\ball(C_1\sqcap C_2 )=\ball(C_1)\cap \ball(C_2 )$,
     \item \(\ball(\exists r.C)\) is the ball with center \(\cent(C) - \embedding(r)\) and radius \(\radius(C)\), i.e.  $\ball(\exists r.C) =\{\vec{x}\;|\;\vec{x}+E(r)\in\ball(C)\}$.
\end{itemize}
\item For concept inclusions with $\bot$ as right-hand side:
\begin{itemize}
        \item \(\embedding\models_\mi{elem} A \sqsubseteq \bot\) iff $\radius(A)=0$ (\ie $\ball(A)=\emptyset$ since $\ball(A)$ is an open ball),
        \item \(\embedding\models_\mi{elem}\exists r.A \sqsubseteq \bot\) iff $\radius(A)=0$, and 
        \item \(\embedding\models_\mi{elem} A_1\sqcap A_2 \sqsubseteq \bot\) iff $\ball(A_1)\cap\ball(A_2)\subseteq \emptyset$.
        \end{itemize}
\end{itemize}

    \subsubsection*{\EmEL \citep{EmEL}}

    \EmEL is similar to \ELEm. 
    The only difference is that \cite{EmEL} additionally consider role inclusions and role composition (hence consider the fragment $\mathcal{ELHO}(\circ)_\bot$ of $\EL^{++}$), extending the semantics as follows:
    \begin{itemize}
        \item \(\embedding\models_{\mi{emel}} r \sqsubseteq s\) iff \(E(r) = E(s)\);
        \item \(\embedding\models_{\mi{emel}} r_1\circ r_2 \sqsubseteq s\) iff \(E(r_1)+E(r_2) = E(s)\).
        \end{itemize}

\subsubsection*{\ELBE \citep{DBLP:journals/corr/abs-2202-14018}} 
\ELBE is also similar to \ELEm but uses boxes instead of balls, which has the advantage that the intersection of two boxes is still a box contrary to balls. 
Each concept name or nominal $C$ is embedded as a box $\embedding(C)=\boxe(C)$ represented by a pair of vectors $e_c(C)$ and $e_o(C)$ that represent the \emph{center} and \emph{offset} of the box. 
Specifically, the offset defines a non-negative real value for every dimension, such that 
$\vec{v}\in\boxe(C)$ iff $|\vec{v}-e_c(C)|\leq e_o(C)$. 
Each 
$r\in\NR$ is embedded as a vector $\embedding(r)\in\mathbb{R}^\dimension$. 
 We assume that the concept \(\top\) is mapped to \(\mathbb{R}^\dimension\), that is, \(e_o(\top) = \vec{\infty}\) (this is inspired by \cite{Kulmanov2019} but not explicit by \cite{DBLP:journals/corr/abs-2202-14018}).
The semantics of the \ELBE embedding method 
is defined as follows for $\mathcal{ELO}_\bot$ axioms in normal form.
\begin{itemize}
    \item For (complex) concepts $C$ and $D$ different from $\bot$, 
 $\embedding\models_\mi{elbe} C\sqsubseteq D$ iff $\boxe(C)\subseteq \boxe(D)$ where 
 \begin{itemize}
     \item $\boxe(C_1\sqcap C_2 )=\boxe(C_1)\cap \boxe(C_2 )$, \item $\boxe(\exists r.C)=\boxe(C) - \embedding(r)=\{\vec{x}\mid \vec{x}+\embedding(r)\in\boxe(C)\}$.
\end{itemize}
\item For concept inclusions with $\bot$ as right-hand side:
\begin{itemize}
        \item \(\embedding\models_\mi{elbe} A \sqsubseteq \bot\) iff $e_o(A)=\vec{0}$, and 
        \item \(\embedding\models_\mi{elbe}\exists r.A \sqsubseteq \bot\) iff $e_o(A)=\vec{0}$.
        \end{itemize}
\end{itemize}

\subsubsection*{\BoxEL \citep{BoxEL}}
\BoxEL also considers $\mathcal{ELO}_\bot$ and represents concepts as boxes, \new{but represents roles through affine transformations instead of simple translations (in contrast with \ELEm, \EmEL and \ELBE) in order to avoid that $A\sqsubseteq\exists r.B$ enforces that the volume of $B$ is at least the one of $A$ (i.e., to be able to represent many-to-one relations, that is, roles that are not inverse functional)}. 
Each 
$a\in\NI$ is mapped to a vector $\embedding(a)\in\mathbb{R}^\dimension$. 
Each $A\in\NC$ is embedded into a box $\embedding(A)=\boxe(A)$, represented by two vectors from  $\mathbb{R}^\dimension$ which give its lower and upper corners.
Each $r\in\NR$ is embedded into an affine transformation $\embedding(r)=T^r$ where $T^r(\vec{x})= D^r \vec{x} + \vec{b^r}$ with $D^r$ a diagonal matrix with non-negative entries and $\vec{b^r} \in \mathbb{R}^\dimension$. 
The semantics 
of the \BoxEL embedding method 
is given by geometric interpretations. Given an embedding $\embedding$, the corresponding geometric interpretation is $\Imc_\embedding=(\Delta^{\Imc_\embedding},\cdot^{\Imc_\embedding})$ where $\Delta^{\Imc_\embedding}=\mathbb{R}^\dimension$ and
\begin{itemize}

\item 
for every $a\in\NI$, $a^{\Imc_\embedding}:=\embedding(a)$,\item  for every $A\in\NC$, 
     $A^{\Imc_\embedding}:=\boxe(A)$, and 
     \item 
     for every 
    $r\in\NR$, $r^{\Imc_\embedding}:=\{(\vec{x},\vec{y}) 
    \mid T^r(\vec{x}) = \vec{y}\}$.  
    \end{itemize}
    We then have that $\embedding\models_{\mi{boxel}}\alpha$ iff $\Imc_\embedding\models \alpha$, where $\Imc_\embedding$ is a standard DL interpretation.

\subsubsection*{\BoxE \citep{BoxE}} 
\new{A known issue with methods that embed roles using transformations of $\mathbb{R}^\dimension$ is their inability to represent faithfully one-to-many relations, since transformations are functions. \BoxE solved this issue by introducing so called `bumps' to dynamically encode the relationship between entities and relations. It represents each relation $r$ of arity $n$ by \new{a tuple of} $n$ boxes, $\embedding(r)=(r^{(1)},\dots, r^{(n)})$, where each box $r^{(i)}$ is defined by two vectors that give its lower and upper corners, and each individual name $a\in\NI$ by two vectors, }
$\embedding(a)=(\vec{e_a}, \vec{b_a}) \in (\mathbb{R}^\dimension)^2$, where $\vec{e_a}$ is the base position of the entity and $\vec{b_a}$ its translational bump. 
The semantics of the \BoxE embedding method for the language \Lmc that consists of assertions as well as  patterns from Table \ref{patterns} except composition, is defined as follows: 
    \begin{itemize}
        \item $\embedding \models_{\mi{boxe}} A(a)$ iff $\vec{e_a}\in A^{(1)}$;
        \item $\embedding \models_{\mi{boxe}} r(c,d)$ iff $\vec{e_c}+\vec{b_d}\in r^{(1)}$ and $\vec{e_d}+\vec{b_c}\in r^{(2)}$; 
        \item $ \embedding \models_{\mi{boxe}} r_1\equiv r_2^-$ iff $r_1^{(1)}=r_2^{(2)}$ and $r_1^{(2)}=r_2^{(1)}$;
        \item $\embedding \models_{\mi{boxe}} r_1\sqsubseteq r_2$ iff $r_1^{(1)}\subseteq r_2^{(1)}$ and $r_1^{(2)}\subseteq r_2^{(2)}$; 
        \item $\embedding \models_{\mi{boxe}}  r_1\sqcap r_2\sqsubseteq r_3$ iff $r_1^{(1)}\cap r_2^{(1)}\subseteq r_3^{(1)}$ and $r_1^{(2)}\cap r_2^{(2)}\subseteq r_3^{(2)}$; 
        \item $\embedding \models_{\mi{boxe}} r_1\sqsubseteq \neg r_2$ iff $r_1^{(1)}\cap r_2^{(1)}=\emptyset$ or $r_1^{(2)}\cap r_2^{(2)}=\emptyset$;
        \item $\embedding \models_{\mi{boxe}} r_1\sqsubseteq \neg r_1^-$ iff $r_1^{(1)}\cap r_1^{(2)}=\emptyset$.
    \end{itemize}

\subsubsection*{\BoxTwoEL \citep{DBLP:journals/corr/abs-2301-11118}}
This method applies to $\mathcal{ELHO}(\circ)_\bot$, the fragment of $\EL^{++}$ also considered by \EmEL. 
It uses boxes and bumps, in line with \cite{BoxE}. Similary to \ELEm,  
ABox assertions are transformed into TBox axioms \new{with} nominals and the TBox is put in normal form. 
Each $A\in\NC$ is represented by three vectors in $\mathbb{R}^\dimension$, the first two being the lower and upper corners of a box $\boxe(A)$ and the last one defining its bump: $\embedding(A)=(\boxe(A),\bump(A))$. 
Each $a\in\NI$ is represented by a vector $\embedding(a)\in\mathbb{R}^\dimension$ and nominal $\{a\}$ is mapped to $\embedding(\{a\})=(\boxe(\{a\}),\bump(\{a\}))$ where $\boxe(\{a\})$ has volume $0$ and is such that the lower and upper corners are equal to $\embedding(a)$. 
Each  $r\in\NR$ is associated with two boxes 
$\embedding(r)=(\head(r), \taile(r))$.  
The semantics  of the \BoxTwoEL embedding method is defined for $\mathcal{ELHO}(\circ)_\bot$ axioms in normal form as follows, where given a box $B$ and a vector $\vec{v}$, $B+\vec{v}=\{\vec{x}+\vec{v}\mid \vec{x}\in B\}$ and similarly for $-$, and where a box with lower corner $\vec{l}=(l_1,\dots,l_\dimension)$ and upper corner $\vec{u}=(u_1,\dots,u_\dimension)$ is empty iff there exists $i$ such that $l_i>u_i$. 

    \begin{itemize}
     \item $\embedding\models_{\mi{box2el}} r_1\sqsubseteq r_2$ iff $\head(r_1)\subseteq \head(r_2)$ and $\taile(r_1)\subseteq \taile(r_2)$;
     \item $\embedding\models_{\mi{box2el}} r_1\circ r_2\sqsubseteq s$ iff $\head(r_1)\subseteq \head(s)$ and $\taile(r_2)\subseteq \taile(s)$;
        \item $\embedding\models_{\mi{box2el}} A\sqsubseteq B$ iff $\boxe(A)\subseteq \boxe(B)$;
        \item $\embedding\models_{\mi{box2el}} A_1\sqcap A_2\sqsubseteq B$ iff $\boxe(A_1)\cap \boxe(A_2)\subseteq \boxe(B)$;
        \item $\embedding\models_{\mi{box2el}} A\sqsubseteq \exists r.B$ iff $\boxe(A)+\bump(B)\subseteq \head(r)$ and $\boxe(B)+\bump(A)\subseteq \taile(r)$, and $\boxe(A)\subseteq \emptyset$ if $\boxe(B)=\emptyset$;
        \item $\embedding\models_{\mi{box2el}} \exists r.B\sqsubseteq A$ iff $\head(r)-\bump(B)\subseteq \boxe(A)$;
            \item $\embedding\models_{\mi{box2el}} A\sqsubseteq \bot$ iff $\boxe(A)=\emptyset$;
        \item $\embedding\models_{\mi{box2el}} A_1\sqcap A_2\sqsubseteq \bot$ iff $\boxe(A_1)\cap \boxe(A_2)=\emptyset$.
    \end{itemize}

\subsubsection*{\ExpressivE \citep{ExpressivE}}
This KG embedding method embeds each $a\in\NI$ as a vector $E(a)\in\mathbb{R}^\dimension$ and each $r\in\NR$ as an hyper-parallelogram in the virtual triple space $\mathbb{R}^{2\dimension}$ (more precisely, $r$ is mapped to three vectors from $\mathbb{R}^\dimension$: a slope, a center and a width vector). 
The semantics 
of the \ExpressivE embedding method for the language 
of role assertions and  patterns from Table \ref{patterns} is defined as follows.
\begin{itemize}
\item $\embedding\models_{\mi{expr}} r(a,b)$ iff $\embedding(a)\oplus\embedding(b)\in\embedding(r)$;
\item $\embedding\models_{\mi{expr}} r_1\sqsubseteq r_1^-$ iff $\embedding(r_1)$ is symmetric (\ie is its own mirror image w.r.t.\ the identity line);
\item $\embedding\models_{\mi{expr}} r_1\equiv r_2^-$ iff $\embedding(r_1)$ and $\embedding(r_2)$ are mirror images of each other w.r.t.\ the identity line;
\item $\embedding\models_{\mi{expr}} r_1\sqsubseteq r_2$ iff $\embedding(r_1)\subseteq\embedding(r_2)$;
\item $\embedding\models_{\mi{expr}} r_1\sqcap r_2\sqsubseteq r_3$ iff $\embedding(r_1)\cap\embedding(r_2)\subseteq\embedding(r_3)$;
\item $\embedding\models_{\mi{expr}} r_1 \sqsubseteq\neg r_2$ iff $\embedding(r_1)\cap\embedding(r_2)=\emptyset$;
\item $\embedding\models_{\mi{expr}} r_1\sqsubseteq\neg r_1^-$ iff $\embedding(r_1)$ does not intersect with its mirror image;
\item $\embedding\models_{\mi{expr}} r_1\circ r_2\sqsubseteq r_3$ iff $E(r_1\circ r_2)\subseteq\embedding(r_3)$ where $E(r_1\circ r_2)$ is the compositionally defined convex region of $r_1$ and $r_2$, which is such that, 
for every $\vec{u},\vec{v},\vec{w}\in\mathbb{R}^\dimension$, 
$\vec{u}\oplus \vec{v}\in\embedding(r_1)$ and $\vec{v}\oplus \vec{w}\in\embedding(r_2)$ iff $\vec{u}\oplus\vec{w}\in E(r_1\circ r_2)$.
\end{itemize}
\new{We point out that the works by \cite{Geometric} and \cite{Cone1} have focused on theoretical aspects of their methods, without providing an implementation. The authors of the other embedding methods we describe above have  provided implementations.
}

\new{
Other embedding methods have been designed for DLs in different contexts. For example, CosE \citep{app122010690} embeds a DL-Lite$_\mn{core}$ TBox,  
seen as a KG with relations \mn{subClassOf} and \mn{disjointWith} to find plausible missing inclusion or disjointness between concepts. 
Closer to the methods we consider, 
TransOWL and TransROWL \citep{DBLP:conf/esws/dAmatoQF21} 
were proposed for injecting background knowledge, which can be seen as a TBox.  
The basic idea is to modify the loss function by considering the facts (both positive and negative) inferred from those observed and the background knowledge. 
However, this method does not associate regions to 
concepts or roles, thus is out of the scope of this work. 
For a wider overview of possibly non-region based geometrical embeddings,
see~\citep{DBLP:journals/corr/abs-2304-11949}.
}

\section{Embedding Method Properties}\label{sec:properties}
We formulate theoretical properties for KB embeddings and embedding methods, show how they relate to each other and illustrate them on the embedding methods presented in Section \ref{sec:embeddingmethods}. 
In this section, \Lmc denotes a DL language, $\method$ is an embedding method for \Lmc, and $S_\method$ is a semantics for $\method$. 

\begin{definition}[\method-model]
Let \Amc be an ABox, \ontology be a TBox in $\Lmc$ and \embedding be an \methodembedding{\method}. 
The embedding \embedding interpreted under $S_\method$ is an $\method$-\emph{model} of 
\new{
\begin{itemize}
    \item \Amc if for every fact $\alpha$ of \Amc, $S_\method(\embedding,\mathcal{L})(\alpha)=1$, 
    \item \ontology if for every axiom $\alpha$ of $\ontology$, $S_\method(\embedding,\mathcal{L})(\alpha)=1$, 
    \item $\Kmc=\Tmc\cup\Amc$ if it is an $\method$-model
    of $\Amc$ and $\Tmc$. 
\end{itemize}
}
\end{definition}

The existence of an $\method$-\emph{model} does not 
imply the 
existence of a model in the classical sense. That is, nothing in the definition of an \method-model prevents $S_\method(\embedding,\mathcal{L})$ to assign to true inconsistent sets of axioms or to assign to false axioms that are entailed by axioms assigned to true. 

\begin{example}\label{ex:elem-not-sound} 
Consider the (classically) unsatisfiable KB $\Kmc=\Tmc\cup\Amc$ with $\Tmc=\{A\sqsubseteq \bot \}$ and $\Amc=\{A(a)\}$. 
Define an \ELEm-embedding $\embedding$ of $\{A\sqsubseteq \bot, \{a\}\sqsubseteq A \}$ in $\mathbb{R}^2$ as follows: $\embedding(\{a\})=\embedding(A)=\ball(A)$ with center $\cent(A)=(0, 1)$ and radius $\radius(A)=0$, \ie $\embedding(\{a\})=\embedding(A)=\emptyset$. 
It holds that $\embedding\models_{\mi{elem}} A\sqsubseteq \bot$ and $\embedding\models_{\mi{elem}} \{a\}\sqsubseteq A$ so $\embedding$ is an \ELEm-model of $\Kmc$. 
\EmEL   encounters the same problem since it translates 
assertions into TBox axioms then treats nominals 
as concepts so that they can be embedded to empty balls. 
\BoxEL fixed this issue by mapping individuals to vectors.
\end{example}

Conversely, non-existence of an 
\method-model also does not imply
non-existence of a model (in the classical sense). 

\begin{example}\label{ex:convex-not-complete}

As explained by \citeauthor{Geometric} (\citeyear{Geometric}), the following KB does not have any convex geometric model while it is satisfiable: $\ontology=\{r_1\sqsubseteq \neg r_2\}$
and $\Amc=\{r_1(a,b),r_1(b,a),r_2(a,a),r_2(b,b)\}$. Indeed, if $\embedding$ is a convex geometric model of $\Amc$, the following holds:
\begin{itemize}
    \item $\embedding(a)\oplus\embedding(b)\in\embedding(r_1)$ and $\embedding(b)\oplus\embedding(a)\in\embedding(r_1)$ so that by convexity, $0.5(\embedding(a)\oplus\embedding(b))+0.5(\embedding(b)\oplus\embedding(a))\in\embedding(r_1)$;
    \item $\embedding(a)\oplus\embedding(a)\in\embedding(r_2)$ and $\embedding(b)\oplus\embedding(b)\in\embedding(r_2)$ so that by convexity, $0.5(\embedding(a)\oplus\embedding(a))+0.5(\embedding(b)\oplus\embedding(b))\in\embedding(r_2)$.
\end{itemize}
Let $\vec{v}=0.5\embedding(a)+0.5\embedding(b)$.
It holds that $\vec{v}\oplus\vec{v}$ is both in $\embedding(r_1)$ and $\embedding(r_2)$, so $\embedding\not\models_{\mi{conv}} r_1\sqsubseteq \neg r_2$.    
\end{example}

\subsection{Soundness and Completeness}
This section is concerned with 
the relationship between the existence of an \method-model and that of a classical model.

\begin{property}[Embedding method soundness]\label{prop:soundness}
We say that \method under  $S_\method$ is \emph{sound for $\mathcal{L}$} if the existence of an \method-model (under $S_\method$) for a  KB $\kb$ in \Lmc implies that $\kb$ is satisfiable. 
\end{property}

\begin{property}[Embedding method completeness]\label{prop:completeness}
We say that \method under   $S_\method$ is \emph{complete for $\Lmc$} if for every satisfiable KB $\kb$ in \Lmc, there is an \method-model (under $S_\method$) for $\kb$.
\end{property}

\begin{example}
Corollary 1 in \citep{Geometric} states that embedding methods that produce convex geometric models are sound and complete for the language of quasi-chained rules (hence in particular for $\mathcal{ELHI}_\bot$ in normal form), and Proposition 2 in \citep{Cone1} states that methods that produce al-cones models are sound and complete for $\ALC_\text{p}$. 
\end{example}

As recalled in Example \ref{ex:convex-not-complete}, 
embedding methods that produce
convex geometric models
are not complete for languages 
with role disjointness, under 
a semantics where role disjointness means disjointness of the role embeddings. 
\cref{ex:boxeincomplete} shows that \BoxE (which does not fall into this class) 
is also incomplete for languages with role disjointness.

\begin{example}\label{ex:boxeincomplete}
    Consider the satisfiable KB $\Kmc=\Tmc\cup\Amc$ with $\Amc=\{r(a,b),s(a,c),r(d,c),s(d,b)\}$ and $\Tmc=\{r\sqsubseteq \neg s\}$. 
    Assume for a contradiction that there exists a \BoxE-model $\embedding$ of $\Kmc$. 
    Recall that $\embedding$ maps 
    each role $r$ to two boxes, 
    one for the ``head'', denoted $r^{(1)}$, and one for the ``tail'', 
    denoted $r^{(2)}$. 
    Also, recall that each box $r^{(i)}$ is represented by its lower and upper corners, 
    denoted $\vec{l_{r^{(i)}}}$ and $\vec{u_{r^{(i)}}}$ respectively. Moreover, a point  $\vec{e}$ is in a box $r^{(i)}$ if it is between its lower and upper corners, in symbols, $\vec{l_{r^{(i)}}}\leq\vec{e}\leq\vec{u_{r^{(i)}}}$.
    
    Since $\embedding\models_{\mi{boxe}} r\sqsubseteq \neg s$, then 
    $r^{(1)}\cap s^{(1)}=\emptyset$ or 
    $r^{(2)}\cap s^{(2)}=\emptyset$. 
    Assume $r^{(1)}\cap s^{(1)}=\emptyset$ (the argument for the  case where $r^{(2)}\cap s^{(2)}=\emptyset$ is analogous).
    Given a vector $\vec{v}$, denote by $\vec{v}[k]$ the value of $\vec{v}$ at position $k$. 
    As $r^{(1)}\cap s^{(1)}=\emptyset$, there is a dimension $j$ such that \[\vec{u_{r^{(1)}}}[j] < \vec{l_{s^{(1)}}}[j] \text{ or } \vec{u_{s^{(1)}}}[j] < \vec{l_{r^{(1)}}}[j].\]
    Suppose $\vec{u_{r^{(1)}}}[j] < \vec{l_{s^{(1)}}}[j]$. As $\embedding\models_{\mi{boxe}}r(a,b)$ and $\embedding\models_{\mi{boxe}}s(a,c)$, it must be the case that
    \[  \vec{e_a}[j]+\vec{b_b}[j] \leq\vec{u_{r^{(1)}}}[j]<\vec{l_{s^{(1)}}}[j]\leq \vec{e_a}[j]+\vec{b_c}[j]  \]
which implies that $\vec{b_b}[j]<\vec{b_c}[j]$. Now, as $\embedding\models_{\mi{boxe}}r(d,c)$ and $\embedding\models_{\mi{boxe}}s(d,b)$, we obtain 
     \[ \vec{e_d}[j]+\vec{b_c}[j] \leq\vec{u_{r^{(1)}}}[j]<\vec{l_{s^{(1)}}}[j]\leq \vec{e_d}[j]+\vec{b_b}[j]  \]
which implies that $\vec{b_c}[j]<\vec{b_b}[j]$, contradicting $\vec{b_b}[j]<\vec{b_c}[j]$. The case $\vec{u_{s^{(1)}}}[j] < \vec{l_{r^{(1)}}}[j]$ can be proved analogously.
\end{example}

In the literature, it is common to consider an alternative meaning for soundness, which intuitively links the existence of an embedding with loss 0 and KB satisfiability. 
A loss function associated with an  embedding method \method can be seen as a function $\loss$ that takes as input a KB $\kb$ and an \methodembedding{\method} \embedding of $\kb$ and returns a number. 

\begin{property}[Embedding method soundness based on loss]
If \method  has a loss function $\loss$, we say that  \method 
is \emph{sound for $\mathcal{L}$ \wrt the loss function} if the existence of an \methodembedding{\method} \embedding of a KB $\kb$ in \Lmc 
 such that $\loss(\kb,\embedding)=0$ 
implies that $\kb$ is satisfiable. 
\end{property}

\begin{example}
Example \ref{ex:elem-not-sound} shows that \ELEm is not sound. Moreover, it also shows that \ELEm is not sound \wrt the loss function defined in \citep{Kulmanov2019}. If the margin parameter $\gamma$ is equal to $0$ 
since $\embedding(\{a\})$ and $\embedding(A)$ have the same center  that lies on the unity sphere and the same radius $0$, the loss of the axiom $\{a\}\sqsubseteq A$ given by $\max(0,\lVert \cent(\{a\}) -\cent(A)\rVert+\radius(\{a\})-\radius(A)-\gamma)+|\lVert\cent(\{a\})\rVert -1|+|\lVert\cent(A)\rVert -1|$ is equal to 0, and the loss of the axiom $A\sqsubseteq\bot$ given by $\radius(A)$ is equal to 0. Hence, $\loss(\kb,\embedding)=0$. 
For Theorem 1 in \citep{Kulmanov2019} to hold, $\gamma$ should be strictly negative, rather than~$\leq 0$.\footnote{
With a minor fix in~\citep[Equation 2]{Kulmanov2019}: one of the subterms of the loss term for 
\(A_1 \sqcap A_2 \sqsubseteq B\) is 
$\max(0, \norm{\cent(A_1) - \cent(B)} - \radius(A_1) - \gamma)$. 
The following subterm is written as $\max(0, \norm{\cent(A_2) - \cent(B)} - \radius(A_1) - \gamma)$ while it should be analogous to the one before, using   \(\radius(A_2)\) instead of \(\radius(A_1)\).
} 
This 
however prevents \ELEm to embed equivalent concepts, such as $\{A\sqsubseteq B,B\sqsubseteq A\}$, with loss 0.
\end{example}

\cref{ex:Box2ELNotSound} illustrates the difference between soundness \wrt the loss function and soundness as in~Property~\ref{prop:soundness}.
\begin{example}\label{ex:Box2ELNotSound}
Theorem 1 in \citep{DBLP:journals/corr/abs-2301-11118} shows that \BoxTwoEL is sound \wrt the loss function. However, \BoxTwoEL is not sound. Indeed, the loss function of \BoxTwoEL is such that in models of loss 0,  all bumps are equal to $\vec{0}$, while for soundness, we consider also models with non-zero bumps. 
To illustrate this, consider $\Tmc=\{\exists r.B\sqsubseteq A, \exists s.C\sqsubseteq D, A\sqcap D\sqsubseteq \bot\}$ and $\Amc=\{r(a,b), s(a,c), B(b), C(c)\}$. The KB $\Kmc=\Tmc\cup\Amc$ is unsatisfiable. 
However, the \BoxTwoEL-embedding depicted in Figure \ref{Figure:Box2ELNotSound} is a \BoxTwoEL-model of $\Kmc$. Note that \BoxTwoEL-embeddings with loss 0 have an   undesirable behaviour: since all bumps are $\vec{0}$, $\embedding\models_{\mi{box2el}}A\sqsubseteq\exists r.B$ and $\embedding\models_{\mi{box2el}}C\sqsubseteq\exists r.D$ imply $\embedding\models_{\mi{box2el}}A\sqsubseteq\exists r.D$ and $\embedding\models_{\mi{box2el}}C\sqsubseteq\exists r.B$.
\end{example}
\begin{figure}
\centering
    \includegraphics[width=0.35\textwidth,  keepaspectratio]{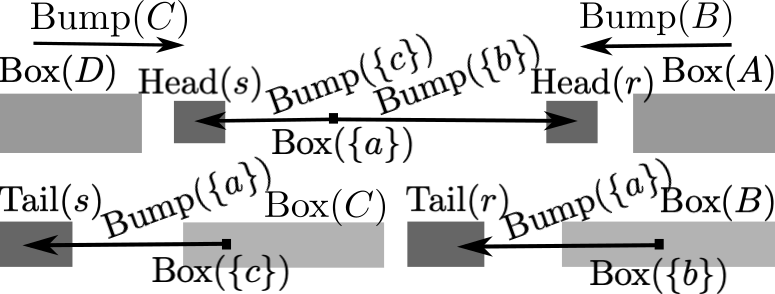}
{\footnotesize
\begin{align*}
\exists r.B\sqsubseteq A:&\quad\head(r)-\bump(B)\subseteq \boxe(A)\\
\exists s.C\sqsubseteq D:&\quad\head(s)-\bump(C)\subseteq \boxe(D)\\
 A\sqcap D\sqsubseteq \bot:&\quad \boxe(A)\cap \boxe(D)=\emptyset\\
r(a,b):&\quad \left\{\begin{matrix*}[l]\boxe(\{a\})+\bump(\{b\})\subseteq \head(r)\\
\boxe(\{b\})+\bump(\{a\})\subseteq \taile(r)\end{matrix*}\right\}\\
s(a,c):&\quad\left\{\begin{matrix*}[l]\boxe(\{a\})+\bump(\{c\})\subseteq \head(s)\\
\boxe(\{c\})+\bump(\{a\})\subseteq \taile(s)
\end{matrix*}\right\}
\end{align*}
$B(b): \boxe(\{b\})\subseteq \boxe(B)\quad C(c): \boxe(\{c\})\subseteq \boxe(C)$
}
\caption{\BoxTwoEL-embedding.}
\label{Figure:Box2ELNotSound}
\end{figure}

\subsection{Entailment Closure and Faithfulness}
Since \method-models come with very few guarantees on what they assign to true besides the KB itself, additional properties can be required on the \method-models. 
Entailment closure  
guarantees that all consequences of the KB are assigned to true. 

\begin{definition}[Entailment closure in an \method-model]\label{def:xmodelcompleteness}
Let \ontology be a TBox in \Lmc and \Amc be an ABox such that $\Kmc=\Tmc\cup\Amc$ is satisfiable. We say that an $M$-model $\embedding$ of $\Kmc$ is
\begin{itemize}
\item \emph{TBox-\complete} for \Lmc if for every TBox axiom $\alpha$ in $\mathcal{L}$ that is entailed by \new{$\Kmc$}, $\embedding\models_\method \alpha$; 
\item \emph{ABox-\complete} if for every assertion $\alpha$ 
that is entailed by $\Kmc$, $\embedding\models_\method \alpha$;
\item \new{\emph{KB-entailed} if it is TBox-entailed and ABox-entailed.}
\end{itemize}
\end{definition}

A slight modification of \cref{ex:Box2ELNotSound} provides a \BoxTwoEL-model that is not ABox-entailed.

\begin{example}
Consider $\Tmc=\{\exists r.B\sqsubseteq A, \exists s.C\sqsubseteq D\}$ and $\Amc=\{r(a,b), s(a,c), B(b), C(c)\}$. The KB $\Kmc=\Tmc\cup\Amc$ is satisfiable and $\Kmc$ entails $ A(a)$ and $D(a)$. 
However, the \BoxTwoEL-model of $\Kmc$ in Figure \ref{Figure:Box2ELNotSound} does not satisfy $A(a)$, $D(a)$. 
\end{example}

Entailment closure does not prevent the embedding semantics to assign to true axioms that are, e.g., not consistent with the KB. 
The notions of weak and strong faithfulness have been proposed in the literature and address this issue.

\begin{definition}[Weak faithfulness of an \method-model (adapted from \citep{Cone1})] 
\label{def:weakfaithful}
Let \ontology be a TBox in $\mathcal{L}$ and let \Amc be an ABox such that $\Kmc=\Tmc\cup\Amc$ is satisfiable. We say that an $M$-model $\embedding$ of $\Kmc$ is
\begin{itemize}
\item \emph{weakly TBox-faithful} for \Lmc if for every TBox axiom $\alpha$ in   $\mathcal{L}$, $\embedding\models_\method \alpha$ 
implies that $\alpha$ is consistent with \new{$\Kmc$};
\item \emph{weakly ABox-faithful} if for every assertion $\alpha$, $\embedding\models_\method \alpha$ 
implies that $\alpha$ is consistent with $\Kmc$;
\item \new{\emph{weakly KB-faithful} if it is weakly TBox-faithful and weakly ABox-faithful.}
\end{itemize}   
\end{definition}

\cref{ex:notweakfaith} shows that some KBs may have only \ELEm- or \EmEL-models that are not weakly ABox-faithful.
\begin{example}\label{ex:notweakfaith}
Let $\Tmc=\{B\sqcap C\sqsubseteq\bot\}$ and $\Amc=\{r(a,b),r (a,c), B(b), C(c)\}$. 
For every $\method$-model $\embedding$ of $\Amc$ and $\Tmc$ with $\method\in\{\ELEm,\EmEL\}$, $\embedding\models_{\method} C(a)$ and $\embedding\models_{\method} B(a)$. Indeed, since $\embedding\models_{\method} \{a\}\sqsubseteq \exists r.\{b\}$, it holds that $\embedding(\{a\})\subseteq \embedding(\{b\})-\embedding(r)$ and similarly, $\embedding(\{a\})\subseteq \embedding(\{c\})-\embedding(r)$. Since $\embedding(\{b\})\subseteq \embedding(B)$, and $\embedding(\{c\})\subseteq \embedding(C)$, 
it follows that $\embedding(\{a\})+\embedding(r)\subseteq \embedding(B)\cap \embedding(C)=\emptyset$, i.e., $\embedding(\{a\})=\emptyset$ is included in every region of $\mathbb{R}^\dimension$. 
\end{example}

A stronger condition than weak faithfulness ensures that models satisfy \emph{only} the KB consequences. 

\begin{definition}[Strong faithfulness of an \method-model (adapted from \citep{Cone1})]\label{def:strongfaithful} 
Let \ontology be a TBox in \Lmc and 
\Amc an ABox such that $\Kmc=\ontology\cup\Amc$ is satisfiable. We say that an $M$-model $\embedding$ of $\Kmc$ is

\begin{itemize}
\item \emph{strongly TBox-faithful} for \Lmc if for every TBox axiom $\alpha$ in   $\mathcal{L}$, $\embedding\models_\method \alpha$ 
implies that $\alpha$ is entailed by \new{$\Kmc$};
\item \emph{strongly ABox-faithful} if, for every assertion $\alpha$, $\embedding\models_\method \alpha$ 
implies that  $\alpha$ is entailed by $\Kmc$;
\item \new{\emph{strongly KB-faithful} if it is strongly TBox-faithful and strongly ABox-faithful.}
\end{itemize}  
\end{definition}

\cref{ex:not-strongly-faithfuk-model} illustrates that some KBs may have only \BoxTwoEL-models that are not strongly TBox-faithful. 
\begin{example}\label{ex:not-strongly-faithfuk-model}
Consider $\Tmc=\{r_1\circ r_2\sqsubseteq r_3, \exists r_3.C\sqsubseteq D\}$. 
Let $\embedding$ be a \BoxTwoEL-model of $\Tmc$. 
Since $\embedding\models_{\mi{box2el}}r_1\circ r_2\sqsubseteq r_3$, then $\head(r_1)\subseteq \head(r_3)$, and since $\embedding\models_{\mi{box2el}} \exists r_3.C\sqsubseteq D$, then $\head(r_3)-\bump(C)\subseteq \boxe(D)$. It follows that $\head(r_1)-\bump(C)\subseteq \boxe(D)$, so $\embedding\models_{\mi{box2el}} \exists r_1.C\sqsubseteq D$.  However, $\Tmc\not\models \exists r_1.C\sqsubseteq D$. 
\end{example}

\cref{ex:elemmodelnotstronglytboxfaith} shows that some KBs may have only \ELEm- or \EmEL-models that are not strongly TBox-faithful.
\begin{example}\label{ex:elemmodelnotstronglytboxfaith}
Let $\Tmc=\{\exists r.C\sqsubseteq A,\exists r.D\sqsubseteq B, A\sqcap B\sqsubseteq \bot\}$. 
For every $\method$-model $\embedding$ of $\Tmc$ with $\method\in\{\ELEm,\EmEL\}$, $\embedding(C)-\embedding(r)\subseteq \embedding(A)$, $\embedding(D)-\embedding(r)\subseteq \embedding(B)$ and $\embedding(A)\cap \embedding(B)\subseteq \emptyset$. Hence $\embedding(C)\cap \embedding(D)\subseteq \emptyset$. 
It follows that $\embedding\models_{\method} C\sqcap D\sqsubseteq \bot$ while $\Tmc\not\models C\sqcap D\sqsubseteq \bot$.
\end{example}

\begin{remark}
    The dimension of the embedding space $\mathbb{R}^\dimension$ may have a strong impact on strong TBox faithfulness. Indeed, if \Lmc is a language allowing for concept intersections, and \method is an embedding method which maps concepts to convex regions, $\bot$ to $\emptyset$, and interprets the conjunction of concepts as the intersection of their embeddings, then for every $k > \dimension+1$, the TBox $\tbox = \{C_1 \sqcap ... \sqcap C_k \sqsubseteq \bot\}$ is such that no \method-model of $\tbox$ is strongly TBox-faithful. Indeed, if $\embedding$ is an $\method$-model of $\tbox$, then $\bigcap_{i=1}^k\embedding(C_i)=\emptyset$ and Helly's theorem~\citep{Helly1923} states that if 
    $X_1,\dots, X_k$ are convex regions in $\mathbb{R}^\dimension$, with $k>\dimension$, and each $\dimension + 1$ among these regions have a non-empty intersection, it holds that $\bigcap_{i=1}^kX_i\neq\emptyset$. Hence, there must be some $\{C_{i_1},\ldots,C_{i_{d+1}}\}\subsetneq\{C_1,\ldots,C_k\}$ such that $\embedding\models_\method C_{i_1}\sqcap\ldots\sqcap C_{i_{d+1}}\sqsubseteq\bot$.
\end{remark}

Entailment closure and strong faithfulness together guarantee that an $\method$-model behaves as a canonical model. 

\begin{restatable}{proposition}{kbentailedandstrongkbfaitheq}    
Let \ontology be an $\mathcal{L}$-TBox, \Amc an ABox and $\embedding$ an $\method$-{model} of $\Kmc=\Tmc\cup\Amc$. Then the following holds.
\begin{itemize}
\item If $\embedding$ is TBox-entailed and strongly TBox-faithful then for every TBox axiom $\alpha$ in $\mathcal{L}$, $\new{\Kmc}\models\alpha$ iff $\embedding\models_\method \alpha$.
\item If $\embedding$ is ABox-entailed and strongly ABox-faithful then for every assertion $\alpha$, $\Kmc\models\alpha$ iff  $\embedding\models_\method \alpha$. 
\end{itemize}
\end{restatable}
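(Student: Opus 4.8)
The plan is to observe that each bullet point is simply the conjunction of the two one-directional implications supplied by the hypotheses, so the proof reduces to unfolding \cref{def:xmodelcompleteness} and \cref{def:strongfaithful} and combining them. Note that both definitions already presuppose that $\Kmc=\Tmc\cup\Amc$ is satisfiable, so entailment is well-behaved and no separate case analysis on satisfiability is needed.

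First, for the TBox case, I would fix an arbitrary TBox axiom $\alpha$ in $\mathcal{L}$ and argue the two directions separately. For the forward direction, assume $\Kmc\models\alpha$; since $\embedding$ is TBox-entailed for $\mathcal{L}$, the definition of entailment closure yields $\embedding\models_\method\alpha$ immediately. For the backward direction, assume $\embedding\models_\method\alpha$; since $\embedding$ is strongly TBox-faithful for $\mathcal{L}$, the definition of strong faithfulness gives that $\alpha$ is entailed by $\Kmc$, i.e.\ $\Kmc\models\alpha$. As $\alpha$ was arbitrary, this establishes $\Kmc\models\alpha$ iff $\embedding\models_\method\alpha$ for every TBox axiom in $\mathcal{L}$. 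The ABox case is handled identically: fixing an arbitrary assertion $\alpha$, ABox-entailedness supplies the implication $\Kmc\models\alpha\Rightarrow\embedding\models_\method\alpha$, and strong ABox-faithfulness supplies the converse $\embedding\models_\method\alpha\Rightarrow\Kmc\models\alpha$.

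There is no substantial obstacle here, as the statement is essentially a definitional repackaging of the two closure properties. The only points requiring care are matching the universal quantification over $\alpha$ in each property with the quantification in the conclusion, and keeping the TBox and ABox arguments separate, since entailment closure and strong faithfulness are each stated for the two syntactic fragments (TBox axioms versus assertions) independently.
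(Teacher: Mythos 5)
Your proof is correct and follows essentially the same approach as the paper's: the forward direction of each equivalence is exactly the definition of entailment closure, and the backward direction is exactly the definition of strong faithfulness, applied separately to the TBox and ABox cases. The remarks about satisfiability being presupposed and about keeping the two syntactic fragments separate are accurate but not needed beyond what the paper already does.
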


To study embedding methods \wrt entailment closure and faithfulness, we define the following properties.

\begin{property}[Ability]\label{prop:existence-faithfulness}
Let $\Ymc\in \{\text{TBox, ABox, KB}\}$. 
We say that \method under $S_\method$ is able to be (weakly/strongly) \Ymc-faithful for a language $\mathcal{L}$ if for every satisfiable  
\Lmc-KB $\kb$, there exists an \method-model \embedding of $\Kmc$ such that \embedding interpreted under $S_\method$ is (weakly/strongly) \Ymc-faithful. The \Ymc-entailed ability is defined as expected.
\end{property}

\begin{property}[Guarantee]\label{prop:guarantee-faithfulness}
Let $\Ymc\in \{\text{TBox, ABox, KB}\}$. 
We say that \method under $S_\method$ is guaranteed to be (weakly/strongly)
\Ymc-faithful for a language $\mathcal{L}$ if, for every satisfiable KB $\kb$, \method always produces an \method-model \embedding of $\Kmc$ such that \embedding interpreted under $S_\method$ is (weakly/strongly) \Ymc-faithful.  
The \Ymc-entailed guarantee is defined as expected.
\end{property}

\begin{restatable}{proposition}{guaranteeABoxTBoximpliesguaranteeKB}\label{prop:guaranteeABoxTBoximpliesguaranteeKB}
If $\method$ is guaranteed to be both strongly \new{(resp.\ weakly)}
TBox-faithful and strongly \new{(resp.\ weakly)} ABox-faithful for $\mathcal{L}$ then 
it is guaranteed to be strongly \new{(resp.\ weakly)} KB-faithful for \Lmc. 
The same holds when considering the entailment closure guarantee property. 
\end{restatable}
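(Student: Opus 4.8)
The plan is to unfold the definitions of the guarantee properties and reduce the KB-level statement to the conjunction of the TBox-level and ABox-level statements, which is essentially immediate once the right definitions are aligned. The key observation is that an \method-model of $\Kmc = \Tmc \cup \Amc$ is, by \cref{def:strongfaithful}, strongly KB-faithful precisely when it is both strongly TBox-faithful and strongly ABox-faithful; the analogous decomposition holds for weak faithfulness and for the entailment-closure guarantee. So the proposition is really a matter of showing that the two separate guarantees combine to give the joint guarantee over the same produced model, rather than over two possibly different models.

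First I would fix an arbitrary satisfiable \Lmc-KB $\Kmc = \Tmc \cup \Amc$ and let $\embedding$ be the \method-model that \method produces for $\Kmc$. The crucial point here is that the guarantee property in \cref{prop:guarantee-faithfulness} quantifies over \emph{every} model produced by \method: since \method is guaranteed strongly TBox-faithful, the produced $\embedding$ is strongly TBox-faithful, and since \method is guaranteed strongly ABox-faithful, the \emph{same} produced $\embedding$ is strongly ABox-faithful. This is exactly why the guarantee flavour is needed and the ability flavour would not obviously suffice: ability only asserts existence of some suitable model for each property, and the two witnessing models could differ, whereas a guarantee pins down the behaviour of the single model \method actually outputs.

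Next I would simply apply \cref{def:strongfaithful}: being strongly TBox-faithful means every TBox axiom $\alpha$ with $\embedding \models_\method \alpha$ is entailed by $\Kmc$, and being strongly ABox-faithful means every assertion $\alpha$ with $\embedding \models_\method \alpha$ is entailed by $\Kmc$. Taken together, these say that for every axiom $\alpha$ (whether a TBox axiom or an assertion), $\embedding \models_\method \alpha$ implies $\Kmc \models \alpha$, which is precisely strong KB-faithfulness of $\embedding$. Since $\embedding$ was an arbitrary model produced by \method for an arbitrary satisfiable $\Kmc$, \method is guaranteed to be strongly KB-faithful for \Lmc. The weak case is word-for-word identical with ``entailed by $\Kmc$'' replaced by ``consistent with $\Kmc$'' throughout, appealing to \cref{def:weakfaithful} instead, and the entailment-closure case dualizes the direction of the implication (using \cref{def:xmodelcompleteness}, KB-entailed is by definition TBox-entailed and ABox-entailed, so the same argument applies).

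There is no real obstacle here: the statement is a bookkeeping consequence of the definitions, and the only point that requires a moment of care is the one flagged above, namely that the guarantee semantics ensures both faithfulness conditions hold of the same produced model simultaneously. I would therefore keep the writeup short, stating the decomposition of the KB-faithfulness definition explicitly and then noting that the three cases (strong, weak, entailment closure) follow by the identical argument after substituting the appropriate side condition.
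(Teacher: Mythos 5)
Your proposal is correct and follows essentially the same route as the paper's proof: fix a satisfiable $\Kmc$, observe that any produced $\method$-model is simultaneously TBox- and ABox-faithful (resp.\ entailed) by the two guarantees, and conclude KB-faithfulness (resp.\ KB-entailment) directly from the fact that the KB-level notion is defined as the conjunction of the TBox- and ABox-level notions. Your remark that the guarantee flavour is what ensures both conditions hold of the \emph{same} model is exactly the point the paper emphasizes when noting the proposition fails for the ability properties.
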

This does not hold for
the \emph{ability} properties: e.g., the existence of a TBox-entailed \method-model and   
the existence of an ABox-entailed \method-model do not imply the existence of a KB-entailed \method-model. 

\subsection{Expressiveness}

We extend the notion of full expressiveness~\citep{10.5555/3327144.3327341}, a well-known characteristic considered for KG embeddings, to languages that include TBox axioms. 

\begin{property}[Full Expressiveness]\label{prop:expressiveness}
\method under $S_\method$ is
\begin{itemize}
\item \emph{fully TBox-expressive} for \Lmc if for every two \Lmc-TBoxes $\Tmc,\Tmc'$, with \Tmc satisfiable and $\Tmc'$ disjoint from the deductive closure of $\Tmc$, there exists an \method-model $\embedding$ of $\Tmc$ such that $S_\method(\embedding,\mathcal{L})(\alpha)=0$ for all $\alpha\in\Tmc'$;
    \item \emph{fully ABox-expressive}   if for every two ABoxes $\Amc,\Amc'$, with $\Amc'$ being disjoint from $\Amc$, there exists an \method-model $\embedding$ of $\Amc$ such that $S_\method(\embedding,\mathcal{L})(\alpha)=0$ for all $\alpha\in\Amc'$.
\end{itemize}
Full TBox-expressiveness is extended for KBs as expected.
\end{property}

Full ABox-expressiveness coincides with the notion of full expressiveness from the KG embedding literature, and is tightly related to strong ABox-faithfulness. 

\begin{restatable}{proposition}{fullyexrvsaboxfaithful}\label{prop:fully-expressive-vs-model-strongly-ABox-faithful} $\method$ under $S_\method$ is fully ABox-expressive iff for any ABox \Amc there is an \method-model $\embedding$ of \Amc interpreted under $S_\method$  that is strongly ABox-faithful.    
\end{restatable}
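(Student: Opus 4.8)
The plan is to prove both directions of the biconditional after recording two elementary facts about ABoxes, noting throughout that both notions in the statement concern only ABoxes, so the relevant KB is $\Kmc=\Amc$ with empty TBox. First, under the empty TBox an assertion $\alpha$ is entailed by $\Amc$ exactly when $\alpha\in\Amc$: there are no axioms to derive new assertions, and every ABox is satisfiable. In particular the deductive closure of $\Amc$ restricted to assertions is $\Amc$ itself, so ``$\Amc'$ disjoint from $\Amc$'' is the right instantiation of the disjointness condition. Second, since $\NC$, $\NR$ and $\NI$ are finite, there are only finitely many assertions in total, so the complement of any ABox is again a (finite) ABox.

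For the direction from strong ABox-faithfulness to full ABox-expressiveness, I would take any ABoxes $\Amc,\Amc'$ with $\Amc'\cap\Amc=\emptyset$ and pick, by hypothesis, a strongly ABox-faithful \method-model $\embedding$ of $\Amc$. I then argue that $\embedding\not\models_\method\alpha$ for each $\alpha\in\Amc'$: if instead $\embedding\models_\method\alpha$, strong ABox-faithfulness yields $\Amc\models\alpha$, hence $\alpha\in\Amc$, contradicting $\alpha\in\Amc'$ and disjointness. Thus $\embedding$ is exactly the witness required by full ABox-expressiveness for the pair $\Amc,\Amc'$.

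For the converse, given an ABox $\Amc$ I would let $\Amc'$ be the set of all assertions not in $\Amc$. By finiteness of the signature $\Amc'$ is a finite set of assertions, hence a legitimate ABox, and it is disjoint from $\Amc$ by construction. Applying full ABox-expressiveness to the pair $\Amc,\Amc'$ produces an \method-model $\embedding$ of $\Amc$ with $\embedding\not\models_\method\alpha$ for every $\alpha\in\Amc'$. To verify strong ABox-faithfulness, take any assertion $\alpha$ with $\embedding\models_\method\alpha$; then $\alpha\notin\Amc'$, so $\alpha\in\Amc$, whence $\Amc\models\alpha$.

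The only genuinely load-bearing step is the construction of $\Amc'$ as the complement of $\Amc$ in the converse direction: strong ABox-faithfulness demands a \emph{single} model that refutes all non-entailed assertions simultaneously, and full ABox-expressiveness delivers such a model precisely because the finiteness of $\NC$, $\NR$ and $\NI$ makes this complement a finite ABox. Were the signature infinite, one could only obtain, for each individual non-entailed assertion, a separate model refuting it, which is strictly weaker than one strongly ABox-faithful model. I therefore expect the finiteness assumption to be the crux, with the remaining reasoning amounting to bookkeeping with the definitions.
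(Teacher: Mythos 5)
Your proof is correct and follows essentially the same route as the paper's: the forward direction unwinds strong ABox-faithfulness on a disjoint pair $\Amc,\Amc'$ exactly as in the paper, and the converse instantiates full ABox-expressiveness with $\Amc'$ taken to be the complement of $\Amc$ in the (finite) set of all assertions, which is also the paper's construction. Your closing remark correctly identifies the finiteness of the signature as the load-bearing assumption, which the paper invokes in the same place.
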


In the KG literature, authors often consider the ability of \emph{capturing patterns} from Table~\ref{patterns}~\citep{BoxE,ExpressivE}. 
They distinguish the ability to capture a single pattern  
or to capture jointly several patterns (possibly of different kinds). 
Indeed, some methods are able to produce an embedding that captures a pattern but not multiple ones, even of the same type \citep[Table 1]{BoxE}.  
It follows from the form of the patterns that 
all sets of patterns are satisfiable (if no facts need to be considered).

\begin{definition}[Capturing  patterns (adapted from \citep{BoxE})]
\label{def:capturingpatterns} Let \Lmc be a language of patterns 
and let $\embedding$ be an \method-embedding. 
  $\embedding$ interpreted under $S_\method$
\begin{itemize} \item captures \emph{exactly a pattern} $\pattern\in\Lmc$ if $S_\method(\embedding,\mathcal{L})(\pattern)=1$; \item captures exactly a \emph{set of patterns} $\patternset=\{\pattern_1,\ldots, \pattern_n\}\subseteq\Lmc$ if it captures exactly $\pattern_i$, for all $1\leq i\leq n$; 
\item captures \emph{exclusively} a set of patterns 
$\patternset$ if for every pattern $\pattern$ in $\mathcal{L}$, 
$S_\method(\embedding,\mathcal{L})(\pattern)=1$ only if $\patternset\models \pattern$.
\end{itemize}
\end{definition}

\begin{property}[Ability to capture (adapted from \citep{BoxE})]\label{prop:capture}
We say that \method under $S_\method$  is able to capture (exactly/exclusively) $\mathcal{L}$ if for any finite set of patterns $\patternset$ expressed in $\mathcal{L}$, 
there exists an \methodembedding{\method} interpreted under $S_\method$ that captures (exactly/exclusively) $\patternset$. 
\end{property}

We relate this property with strong faithfulness ability.

\begin{restatable}{proposition}{captexacexclueqtbfaith}\label{prop:capturingfaithfulness} 
If \Lmc is a language of patterns, then $\method$ is able to capture exactly and exclusively \Lmc iff \new{for any finite set of patterns $\patternset$ expressed in $\mathcal{L}$, 
there exists a strongly TBox-faithful $\method$-model of $\patternset$. } 
\end{restatable}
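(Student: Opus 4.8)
The plan is to show that, once both sides are unfolded through Definition~\ref{def:capturingpatterns}, Definition~\ref{def:strongfaithful}, and the definition of an $\method$-model of a TBox, the notion of \emph{ability to capture exactly and exclusively} (Property~\ref{prop:capture}) and the notion of \emph{strong TBox-faithfulness ability} (Property~\ref{prop:existence-faithfulness}) collapse onto the very same condition. So the argument is essentially an alignment of definitions rather than a construction of embeddings.

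First I would fix an arbitrary finite set of patterns $\patternset\subseteq\mathcal{L}$ and regard it as a KB $\Kmc=\patternset$ with empty ABox $\Amc=\emptyset$. As already observed in the text, every set of patterns is satisfiable when no facts are involved, so $\Kmc$ is a satisfiable $\mathcal{L}$-TBox and thus a legitimate input both for Property~\ref{prop:capture} and for the strong TBox-faithfulness ability; moreover $\Kmc\models\alpha$ reduces to $\patternset\models\alpha$ since $\Amc=\emptyset$. Fixing $\patternset$ lets me reason about a single embedding $\embedding$ and re-quantify over all finite $\patternset$ at the end.

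Next I would match the two halves of each side. By the definition of an $\method$-model of a TBox, $\embedding$ is an $\method$-model of $\patternset$ precisely when $\embedding\models_\method\pattern$ for every $\pattern\in\patternset$, which is exactly the condition that $\embedding$ captures $\patternset$ \emph{exactly}. On the other hand, since $\mathcal{L}$ is a language of patterns, the TBox axioms of $\mathcal{L}$ are exactly the patterns; hence strong TBox-faithfulness of $\embedding$ w.r.t.\ $\Kmc=\patternset$ states that $\embedding\models_\method\alpha$ implies $\patternset\models\alpha$ for every pattern $\alpha$, which is exactly the condition that $\embedding$ captures $\patternset$ \emph{exclusively}. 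Combining these, the statement ``$\embedding$ is a strongly TBox-faithful $\method$-model of $\patternset$'' is literally identical to ``$\embedding$ captures $\patternset$ exactly and exclusively''.

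Finally I would reinstate the universal quantifier over finite pattern sets: the left-hand side asserts that for every finite $\patternset$ some embedding captures it exactly and exclusively, while the right-hand side asserts that for every finite $\patternset$ some embedding is a strongly TBox-faithful $\method$-model of $\patternset$, and by the previous step these witnesses and conditions coincide, giving the equivalence in both directions. The only point needing care---the ``main obstacle'', such as it is---is verifying that the preconditions line up: that $\patternset$ viewed as a KB is satisfiable (so that strong faithfulness is well-defined) and that the quantification domain of strong TBox-faithfulness (``every TBox axiom in $\mathcal{L}$'') agrees with the set of patterns over which exclusivity quantifies. Both hold by the remark that all pattern sets are satisfiable and by the hypothesis that $\mathcal{L}$ is a language of patterns.
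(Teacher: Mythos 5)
Your proposal is correct and matches the paper's own argument: the paper likewise unfolds the definitions to identify ``captures exactly'' with being an $\method$-model of $\patternset$ and ``captures exclusively'' with strong TBox-faithfulness, then quantifies over all finite pattern sets. Your extra care about satisfiability of $\patternset$ (needed for strong faithfulness to apply) is the observation the paper makes just before Definition~\ref{def:capturingpatterns}, so nothing is missing.
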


\begin{example}\label{ex:expressive-dont-inject-exclu}
Let $\Lmc$ be the language of exclusion patterns built on $\NR=\{r_1,r_2\}$ (\ie $\Lmc=\{r_1\sqsubseteq \neg r_2, r_2\sqsubseteq \neg r_1\}$).
Theorem 5.1 in \cite{ExpressivE} shows that \ExpressivE is fully ABox-expressive. 
It can be easily checked that \ExpressivE is 
fully TBox-expressive for $\Lmc$ (see \cite[Theorem~5.2]{ExpressivE}). 
However, \ExpressivE is \emph{not} fully KB-expressive for this language because it is not complete for $\Lmc$
(see Example~\ref{ex:convex-not-complete}, which can be instantiated for \ExpressivE since the hyper-parallelograms are convex).\footnote{Though, the authors also consider a weaker semantics where pattern satisfaction is defined w.r.t.~grounded pattern instances.} 
In the same way, Theorem~5.1  in \citep{BoxE}
shows that \BoxE is fully ABox-expressive and 
Theorem~5.3 shows that it is fully TBox-expressive for $\Lmc$ while, 
by \cref{ex:boxeincomplete}, \BoxE is not complete thus \emph{not} fully KB-expressive for $\Lmc$. 
\end{example}

\new{
Some authors also consider 
\emph{knowledge injection}~\citep{benedikt_et_al:DagRep.9.9.1}, 
which 
broadly refers to the task 
of incorporating explicit (pre-defined) knowledge expressed as 
rules or constraints 
into a machine learning model.
This can   be achieved by constraining 
the training, the output, or the model itself with such patterns required to hold in the model. 
\cite{BoxE} establish that their embeddings can be modified so 
as to provably ensure that they satisfy some patterns (among a restricted class of patterns). 
}

\subsection{Relationships Between Properties}

\begin{figure}[t]
    \centering
    \includegraphics[scale=0.25]{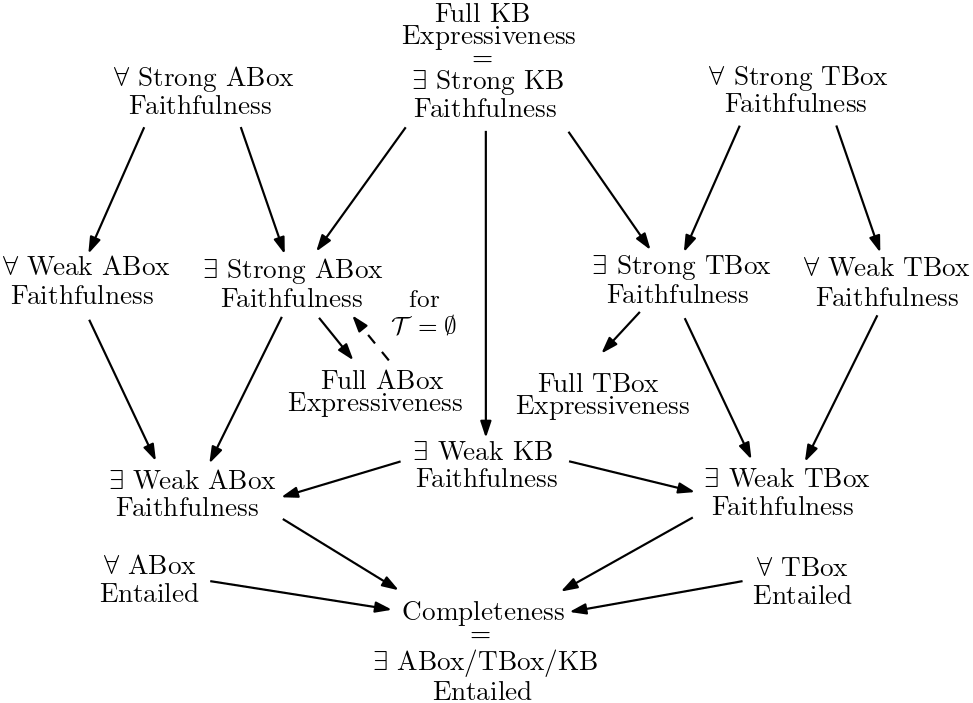}
    \caption{
    Relationships between the properties when the DL language is \emph{finite}. 
    An arrow from X to Y 
    indicates that X implies Y. 
    }
    \label{fig:finlangdiag}    
\end{figure}

\begin{table*}
\centering
{\small
\begin{tabular}{@{}lccccccccc@{}}
\toprule
 Embedding method & Convex & Al-cone &\(\ELEm\)&\(\EmEL\)&\(\ELBE\)&\(\BoxEL\)&\(\BoxTwoEL\)&\(\BoxE\) &\(\ExpressivE\)
\\
TBox language   & $\mathcal{ELHI}_\bot^1$ & $\ALC_{\text{r}}$ & $\mathcal{ELO}_\bot^1$  & $\mathcal{ELHO}(\circ)_\bot^1$& $\mathcal{ELO}_\bot^1$ & $\mathcal{ELO}_\bot^1$ & $\mathcal{ELHO}(\circ)_\bot^1$  & patterns$^2$& patterns
\\ 
\midrule
Soundness                & \checkmark & \checkmark & \xmark & \xmark& \xmark & \checkmark & \xmark & \checkmark & \checkmark
\\
Completeness            & \checkmark & \checkmark & \xmark & \xmark & \xmark & \xmark & \xmark & \xmark & \xmark
\\
\hline
$\forall$ ABox-Entailed  & \checkmark & \checkmark$^\ddag$   & \xmark & \xmark & \xmark & \xmark & \xmark & \xmark & \xmark \Tstrut
\\
$\forall$ TBox-Entailed  & \checkmark & \checkmark$^\ddag$ & \xmark & \xmark & \xmark & \xmark & \xmark & \xmark & \xmark
\\
\hline
$\exists$ Weak ABox-Faithful.  & \checkmark & \checkmark & \xmark & \xmark & \xmark & \xmark & \xmark & \xmark & \xmark \Tstrut 
\\
$\exists$ Weak TBox-Faithful. & \checkmark & \checkmark$^\ddag$ & \xmark & \xmark & \xmark & \xmark & \xmark & \xmark & \xmark\\
$\exists$ Weak KB-Faithful. & \checkmark & \checkmark$^\ddag$ & \xmark & \xmark & \xmark & \xmark & \xmark & \xmark & \xmark \\
$\forall$ Weak ABox-Faithful.  & \checkmark & \checkmark & \xmark & \xmark & \xmark & \xmark & \xmark & \xmark & \xmark                        \\
$\forall$ Weak TBox-Faithful. & \checkmark & ? & \xmark & \xmark & \xmark & \xmark & \xmark & \xmark & \xmark\\
\hline
$\exists$ Strong ABox-Faithful. & \checkmark$^\dag$ & \checkmark  & \xmark & \xmark & \xmark & \xmark & \xmark & \xmark & \xmark \Tstrut
\\
$\exists$ Strong TBox-Faithful. & \checkmark$^\dag$ & \checkmark$^\ddag$  & \xmark & \xmark & \xmark & \xmark & \xmark & \xmark & \xmark
\\
$\exists$ Strong KB-Faithful. & \checkmark$^\dag$ & \checkmark$^\ddag$  & \xmark & \xmark & \xmark & \xmark & \xmark & \xmark & \xmark
\\
$\forall$ Strong ABox-Faithful. & \xmark & \xmark & \xmark & \xmark & \xmark & \xmark & \xmark & \xmark & \xmark
\\
$\forall$ Strong TBox-Faithful. & \xmark  &  \xmark  & \xmark & \xmark & \xmark & \xmark & \xmark & \xmark & \xmark  
\\
\hline
Full ABox Expressiveness & \checkmark &  \checkmark & \xmark   & \xmark & \xmark & \xmark & \checkmark & \checkmark & \checkmark \Tstrut
\\
Full TBox Expressiveness & \checkmark$^\dag$ & \checkmark$^\ddag$ & \xmark & \xmark & \xmark & \xmark &  \xmark &\checkmark  & \checkmark$^{\circ}$
\\ \bottomrule
\end{tabular}
}
\caption{Properties of KB embedding methods. $^1$ in normal form. $^2$ without composition. For $\dag$ cases, results shown for $\mathcal{ELH}$ but we conjecture they also hold for $\mathcal{ELHI}_\bot$.
For $\ddag$ cases, results are for $\ALC_{\text{p}}$.
The $^{\circ}$ result is for the language of \emph{positive} patterns (without negation).  
 Since all the languages considered are finite, $\exists$ ABox-, TBox-, KB-\complete coincide with completeness, and full KB-expressiveness coincides with $\exists$ strong KB-faithfulness (\cf Figure \ref{fig:finlangdiag}). By Proposition \ref{prop:guaranteeABoxTBoximpliesguaranteeKB}, for $\text{X}\in\{$Entailed, Strong Faith., \new{Weak Faith.}$\}$, 
 $\forall$ KB-X holds iff $\forall$ ABox-X and $\forall$ TBox-X hold.}
\label{tab:kbkgmethodProperties}
\end{table*}

We now briefly discuss the relationships between the properties, considering two cases: one for the general case (with possibly \textit{infinite} languages) (\cref{fig:inflangdiag}), and one for the special case of \textit{finite} languages (\cref{fig:finlangdiag}). 
 For readability, we omit $\forall$ KB Entailed, $\forall$ Strong KB Faithfulness \new{and $\forall$ Weak KB Faithfulness} since they are equivalent to the conjunction of the ABox and TBox versions of the properties (Proposition~\ref{prop:guaranteeABoxTBoximpliesguaranteeKB}). 
Note that many properties imply completeness 
because their definitions assume the existence of an \method-model, and that guarantees imply abilities also because the guarantee definition requires the existence of an \method-model. The other relationships are more informative (e.g., strong faithfulness implies weak faithfulness, and strong KB-faithfulness implies full KB-expressiveness).

\begin{restatable}{theorem}{properties}\label{thm:properties}
The relationships between the properties of embedding methods shown in Figure~\ref{fig:inflangdiag} hold.
\end{restatable}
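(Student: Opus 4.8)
The plan is to verify each implication arrow of Figure~\ref{fig:inflangdiag} individually, grouping the arrows into a few families so that within each family a single uniform argument covers the TBox, ABox, and KB variants at once. The bulk of the diagram consists of arrows that are immediate from the definitions, and the genuinely content-bearing implications have already been isolated as earlier propositions, so the proof amounts to assembling these pieces into an exhaustive check. I would begin with the \emph{guarantee implies ability} arrows: for each property type (entailment closure, weak faithfulness, strong faithfulness) and each $\Ymc \in \{\text{TBox},\text{ABox},\text{KB}\}$, the guarantee form (Property~\ref{prop:guarantee-faithfulness}) requires that $\method$ \emph{always} produces an $\method$-model with the stated property for every satisfiable KB, which in particular yields the \emph{existence} of such a model, i.e.\ the corresponding ability form (Property~\ref{prop:existence-faithfulness}).

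Next I would treat the \emph{strong implies weak} arrows. Since $\Kmc$ is assumed satisfiable, any axiom $\alpha$ with $\Kmc \models \alpha$ is consistent with $\Kmc$, because a model of $\Kmc$ is simultaneously a model of $\alpha$ and witnesses the satisfiability of $\Kmc \cup \{\alpha\}$. Hence every strongly $\Ymc$-faithful $\method$-model is weakly $\Ymc$-faithful, giving the implication in both the ability and guarantee forms. I would then dispatch the arrows \emph{into completeness}: each ability property quantifies over every satisfiable $\Lmc$-KB and asserts the existence of an $\method$-model carrying some additional property; forgetting that property leaves exactly the existence of an $\method$-model for every satisfiable KB, which is completeness (Property~\ref{prop:completeness}). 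The guarantee forms follow a fortiori through the guarantee-implies-ability arrows already established.

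For the arrows linking strong faithfulness with full expressiveness I would invoke the correspondences already in hand: Proposition~\ref{prop:fully-expressive-vs-model-strongly-ABox-faithful} identifies full ABox-expressiveness with the existence of strongly ABox-faithful $\method$-models, and the analogous reasoning at the KB level yields the strong-KB-faithfulness $\Rightarrow$ full-KB-expressiveness arrow noted in the surrounding text, while Proposition~\ref{prop:guaranteeABoxTBoximpliesguaranteeKB} supplies the arrows assembling ABox- and TBox-level guarantees into KB-level guarantees. The \emph{dashed} arrows are handled by specializing to an empty TBox: when $\Tmc = \emptyset$, a KB reduces to an ABox, so the KB-level properties collapse onto their ABox-level counterparts and the two become interchangeable precisely under this hypothesis.

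The main obstacle is not mathematical depth but bookkeeping: I must ensure that \emph{every} arrow drawn in the figure is accounted for and, conversely, that no claimed implication actually fails, while carefully separating the arrows that hold unconditionally from the dashed ones that require $\Tmc = \emptyset$. Getting the direction and the conditionality of the faithfulness/expressiveness arrows exactly right is the delicate point; once the definitional implications above are combined with Propositions~\ref{prop:guaranteeABoxTBoximpliesguaranteeKB} and~\ref{prop:fully-expressive-vs-model-strongly-ABox-faithful}, the remaining task is a systematic traversal of the diagram verifying each edge.
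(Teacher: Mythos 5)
Your proposal follows essentially the same route as the paper's proof: the guarantee-to-ability and into-completeness arrows fall out of the definitions, strong-implies-weak uses satisfiability of $\Kmc$ exactly as in the paper's auxiliary proposition, and the faithfulness/expressiveness arrows are delegated to Propositions~\ref{prop:fully-expressive-vs-model-strongly-ABox-faithful} and~\ref{prop:strongfaithfulnessimpliesexpressiveness}. Two bookkeeping points where your account diverges from what the figure actually requires: first, the dashed edge is not a collapse of KB-level properties onto ABox-level ones but the \emph{converse} ABox-level implication (full ABox-expressiveness $\Rightarrow$ ability to be strongly ABox-faithful), which holds only for $\Tmc=\emptyset$ precisely because Proposition~\ref{prop:fully-expressive-vs-model-strongly-ABox-faithful} is an ``iff'' stated for ABoxes alone; second, you omit the downward arrows from the KB-level properties to their TBox- and ABox-level counterparts, which are immediate from Definitions~\ref{def:xmodelcompleteness}--\ref{def:strongfaithful} for entailment closure and faithfulness but need the small extra step of instantiating the KB pairs as pairs of TBoxes or ABoxes in the case of full expressiveness.
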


When the language is finite, we observe that some properties coincide. First, being able to be ABox-, TBox- and KB-entailed is equivalent to being complete. Indeed, in this case, the deductive closure of the KB is finite so by completeness, one can find an \method-model that satisfies every consequence of the KB. Second, being able to be strongly KB-faithful coincides with being fully KB-expressive for a similar reason: one can use the deductive closure to obtain a strongly KB-faithful \method-model when \method is fully KB-expressive. 

\begin{restatable}{theorem}{propertiesfinite}\label{thm:propertiesfinite}
For finite languages, the relationships between the properties of embedding methods in Figure~\ref{fig:finlangdiag} hold.
\end{restatable}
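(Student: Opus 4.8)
The plan is to observe first that every implication already established in \cref{thm:properties} for arbitrary languages remains valid when \Lmc is finite, since a finite language is merely a special case; hence only the \emph{additional} collapses induced by finiteness need to be proved. As indicated in the surrounding discussion and in \cref{fig:finlangdiag}, these amount to two equivalences: (a) completeness coincides with the ability to be ABox-, TBox-, and KB-entailed; and (b) full KB-expressiveness coincides with the ability to be strongly KB-faithful (with the analogous statement for the TBox variant).

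For (a), the nontrivial direction is that completeness implies the entailed-ability properties, the converse being immediate from \cref{thm:properties} because each entailed-ability property already posits the existence of an \method-model and therefore implies completeness. Given a satisfiable \Lmc-KB $\Kmc$, I would let $\Dmc$ be the set of all $\Lmc$-axioms entailed by $\Kmc$. Since \Lmc is finite, $\Dmc$ is finite, and since every axiom of $\Dmc$ holds in all classical models of $\Kmc$, the augmented KB $\Kmc'=\Kmc\cup\Dmc$ is again a satisfiable \Lmc-KB with exactly the same classical models as $\Kmc$. Applying completeness (\cref{prop:completeness}) to $\Kmc'$ yields an \method-model $\embedding$ of $\Kmc'$; as $\embedding\models_{\method}\alpha$ for every $\alpha\in\Kmc'\supseteq\Dmc$, this $\embedding$ satisfies every consequence of $\Kmc$ and is thus KB-entailed, and a fortiori TBox- and ABox-entailed, for $\Kmc$. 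Restricting $\Dmc$ to TBox axioms, respectively to assertions, yields the TBox- and ABox-entailed variants.

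For (b), the implication from strong KB-faithful ability to full KB-expressiveness holds in general, mirroring \cref{prop:fully-expressive-vs-model-strongly-ABox-faithful}: given $\Tmc,\Tmc'$ with $\Tmc'$ disjoint from the deductive closure of $\Tmc$, a strongly KB-faithful \method-model $\embedding$ of $\Tmc$ satisfies $\embedding\models_{\method}\alpha\Rightarrow\Tmc\models\alpha$, so by contraposition $\embedding\not\models_{\method}\alpha$ for every $\alpha\in\Tmc'$. Finiteness of \Lmc is needed only for the converse: given a satisfiable KB $\Kmc$, I would take $\Kmc'$ to be the set of all \Lmc-axioms \emph{not} entailed by $\Kmc$, which is finite and disjoint from the deductive closure of $\Kmc$; full KB-expressiveness (\cref{prop:expressiveness}) then produces an \method-model $\embedding$ of $\Kmc$ with $S_\method(\embedding,\Lmc)(\alpha)=0$ for all such $\alpha$, which is precisely the statement that $\embedding\models_{\method}\alpha$ implies $\Kmc\models\alpha$, i.e.\ that $\embedding$ is strongly KB-faithful. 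The TBox version is obtained by the same argument restricted to TBox axioms.

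The main obstacle is the bookkeeping around the language rather than any deep difficulty: one must verify that the auxiliary objects ($\Dmc$ in (a), $\Kmc'$ in (b)) are genuine finite \Lmc-TBoxes/KBs and that they meet the side conditions of \cref{prop:completeness} and \cref{prop:expressiveness} — namely that $\Kmc\cup\Dmc$ remains satisfiable with the same models, and that the complement of the deductive closure is both disjoint from it and finite. Each of these hinges solely on the finiteness of \Lmc together with the fact that entailment within \Lmc partitions the finitely many \Lmc-axioms into those entailed by $\Kmc$ and those not; once this partition is made precise, the remaining steps are routine.
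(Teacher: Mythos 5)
Your proposal is correct and follows essentially the same route as the paper: it applies completeness to the (finite) deductive closure of $\Kmc$ to obtain an entailed model, and applies full KB-expressiveness with $\Kmc'$ taken to be the finite set of non-entailed $\Lmc$-axioms to obtain a strongly KB-faithful model, with the remaining arrows inherited from \cref{thm:properties}. The only cosmetic difference is that you write the augmented KB as $\Kmc\cup\Dmc$ where the paper uses the deductive closure $\Kmc^+$ directly, but these are the same set.
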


\section{Properties of Selected Methods}\label{sec:methods-properties}

Table \ref{tab:kbkgmethodProperties} shows which of the KB embedding methods of Section~\ref{sec:embeddingmethods} satisfy   the properties introduced in Section~\ref{sec:properties}. 
Since we consider KBs in normal form and finite sets $\NI$, $\NC$, and $\NR$, all languages are finite so we only consider properties of Figure~\ref{fig:finlangdiag}. 
This comparison is  {not} intended to be used to claim that some embedding methods are better than others based on the number of properties they satisfy. Our goal here is only to better understand the theoretical properties of these methods. 
Indeed, recall that not all methods apply to the same languages, so they cannot be directly compared. Also, \new{as mentioned earlier,} those that satisfy   more properties, namely, Convex and Al-cone, 
are not implemented. 

Moreover, \new{depending on the use case,}  
some properties may not be desirable. 
In particular, strong ABox-faithful \method-models are unable to predict new plausible facts that are not entailed by the KB. 
\new{Actually, we should often aim for embedding methods that are sound, complete and guaranteed to be strongly TBox-faithful and weakly ABox-faithful. Indeed, $\method$-models that are strongly TBox-faithful and weakly ABox-faithful give formal guarantees that the TBox part from the source KB is respected, while still allowing downstream tasks such as link prediction to be performed based on the data coming from the ABox.  
}

Most methods we consider either do not cover or fail to represent 
role composition, mutual exclusion, and 
axioms of the form \(\exists r.C\sqsubseteq \bot\). 
E.g., \BoxE and \ExpressivE
are not fully KB expressive for languages with mutual exclusion (\cref{ex:expressive-dont-inject-exclu}). \BoxTwoEL and \EmEL fail to represent role composition (see~\cref{ex:not-strongly-faithfuk-model} and   the definition of \EmEL, which implies that if $E\models_{emel} r_1\circ r_2 \sqsubseteq s$ then $E\models_{emel} r_2\circ r_1 \sqsubseteq s$).
Also, \ELEm, \EmEL, and \ELBE
cannot precisely handle the concept inclusion
\(\exists r.C\sqsubseteq \bot\).
These methods approximate it by \(C\sqsubseteq \bot\), which implies \(\exists r.C\sqsubseteq \bot\) but is not equivalent.

\section{\new{Conclusion and Perspectives}}\label{sec:related}

In this work, we  examine  recent  region-based KB embedding methods through the lens of the properties of their geometric-based semantics. 
\new{Our framework provides a common vocabulary and clarifies relationships between properties of KB embeddings (many of them already considered in the literature). It can be used to guide analysis of new methods and facilitate comparisons between existing and future embedding methods. In particular, while several theoretical properties have been considered relevant, such as those related to faithfulness, in practice, there are no implementations that satisfy them. Hence a novel practical embedding method that would satisfy, e.g., soundness and completeness would already offer more theoretical guarantees than existing ones. The main difficulties encountered for fulfilling the properties are related to the ability of representing role disjointness (together with facts!) and the bottom concept. This calls for some research effort since many natural constraints involve these constructs. For example, Wikidata has hundreds of ``conflict with" constraints, which correspond to disjointness axioms between (complex) DL concepts.
}

Recent works in the KG literature focus on query answering, where the task is not only to  rank facts but expressions in a richer query language~\citep{DBLP:conf/nips/HamiltonBZJL18,DBLP:conf/nips/RenL20,DBLP:conf/iclr/RenHL20,DBLP:conf/nips/ZhangWCJW21,DBLP:conf/icml/BaiLLH23}. 
Note that these works 
consider KGs rather than KBs. 
An exception is the work by \cite{DBLP:conf/rulemlrr/ImenesGO23} which targets DL-Lite KBs by performing query rewriting and then querying the ABox embedding. 
We could extend several of our properties to consider queries, for example define weak or strong query-faithfulness for some query language. This would require to extend the embedding method semantics to evaluate queries.

\section*{Acknowledgements}
This work was supported by the ``PHC AURORA'' programme (projects 49698VH and 341302), funded by the French Ministry for Europe and Foreign Affairs, the French Ministry for Higher Education and Research, and the Research Council of Norway, as well as the OLEARN project (316022) funded by the Research Council of Norway.

\bibliographystyle{kr}

\bibliography{refs.bib}

\appendix

\section{Proofs for Section \ref{sec:properties}}
\kbentailedandstrongkbfaitheq*
 \begin{proof}
 Suppose that $E$ is a TBox-entailed and strongly TBox-faithful $M$-model of $\Kmc=\Tmc\cup\Amc$, and $\alpha$ is a TBox axiom. Then $\new{\Kmc}\models\alpha$ implies that $S_M(E,\Lmc)(\alpha)=1$ as $E$ is TBox-entailed. Conversely, $S_M(E,\Lmc)(\alpha)=1$ implies that $\new{\Kmc}\models\alpha$ by strong TBox-faithfulness.

 Similarly, if $E$ is an ABox-entailed and strongly ABox-faithful $M$-model of $\Kmc$, and $\alpha$ is an assertion, then $\Kmc\models\alpha$ implies $S_M(E,\Lmc)(\alpha)=1$ as $E$ is ABox-entailed and $S_M(E,\Lmc)(\alpha)=1$ implies that $\Kmc\models\alpha$ by strong ABox-faithfulness.
 \end{proof}

\guaranteeABoxTBoximpliesguaranteeKB*
\begin{proof}
Let $M$ be an embedding method that is guaranteed to be both strongly \new{(resp.\ weakly)} ABox-faithful and strongly \new{(resp.\ weakly)} TBox-faithful. Let $\Kmc$ be a satisfiable $\Lmc$-KB and $E$ an $M$-model of $\Kmc$. By assumption, $E$ is both strongly \new{(resp.\ weakly)} ABox-faithful and strongly \new{(resp.\ weakly)} TBox-faithful \new{so by Definition~\ref{def:strongfaithful} \new{(resp.\ Definition~\ref{def:weakfaithful})}, $E$ is strongly \new{(resp.\ weakly)} KB-faithful}.

Suppose that $M$ is both guaranteed to be ABox-entailed and TBox-entailed.  
Let $\Kmc$ be a satisfiable $\Lmc$-KB and $E$ an $M$-model of $\Kmc$. 
By assumption, $E$ is both TBox-entailed and ABox-entailed \new{so by Definition~\ref{def:xmodelcompleteness}, $E$ is KB-entailed. }
\end{proof}

\fullyexrvsaboxfaithful*
\begin{proof}
Let $\mathcal{L}$ be the language of assertions. 
Assume for any \Amc there is an \method-embedding  $\embedding$  interpreted under $S_\method$ that is a 
strongly ABox-faithful \method-model of \Amc. 
Let $\Amc_T$ and $\Amc_F$ be two disjoint sets of true and false facts respectively and let $\embedding$ be a strongly ABox-faithful \method-model of $\Amc_T$. 
Since $\embedding$ is an \method-model, if $\alpha\in\Amc_T$ then
$S_\method(E,\mathcal{L})(\alpha)=1 $.
By strong ABox-faithfulness, if $S_\method(E,\mathcal{L})(\alpha)=1 $ then $\alpha$ is entailed by $\Amc_T$, i.e., $\alpha\in\Amc_T$. This means that $S_\method(E,\mathcal{L})(\alpha)=1 $ iff $\alpha\in\Amc_T$. 
Thus, for every true fact $\alpha\in\Amc_T$, $S_\method(\embedding,\mathcal{L})(\alpha)=1$ and for every false fact $\alpha\in\Amc_F$, $S_\method(\embedding,\mathcal{L})(\alpha)=0$. 
It follows that $\method$ under $S_\method$ is fully ABox-expressive. 

Conversely assume $\method$ under $S_\method$ is fully ABox-expressive and let $\Amc$ be an ABox. 
Consider $\Amc$ as the set of true facts and $\Lmc\setminus\Amc$ (recall that $\Lmc$ is the set of all assertions, which is finite since $\NC$, $\NR$ and $\NI$ are finite) as the set of false facts. 
There exists an  \method-embedding $\embedding$ such that for every true fact $\alpha$, $S_\method(\embedding,\mathcal{L})(\alpha)=1$ and for every false fact $\alpha$, $S_\method(\embedding,\mathcal{L})(\alpha)=0$.
Hence $\alpha\in\Amc$ iff $S_\method(E,\mathcal{L})(\alpha)=1 $. 
It follows that $E$ is an \method-model of $\Amc$ that is strongly ABox-faithful (since there is no TBox). 
\end{proof}

\captexacexclueqtbfaith*
\begin{proof}
Let $\Smc$ be a set of patterns 
expressed in \Lmc 
and let $E$ be an $M$-embedding. Then, $E$ captures $\Smc$ exactly and exclusively iff (i) $E\models_M\alpha$ for all $\alpha\in\Smc$ and (ii) $E\models_M \beta$ implies $\Smc\models\beta$ for all $\beta\in\Lmc$. But (i) says that $E$ is an $M$-model of $\Smc$ and (ii) says that $E$ is a strongly TBox-faithful $M$-model of $\Smc$ for $\Lmc$.
\end{proof}

\subsection{Proofs of Theorems \ref{thm:properties} and \ref{thm:propertiesfinite}}

\begin{restatable}{proposition}{weakstrongfaith}\label{prop:weakstrong}
Let \(\kb\) be a satisfiable KB in \(\llang\) and \(E\) an \method-model of \(\kb\). For $\Ymc\in \{\text{TBox, ABox, KB}\}$,  
if \(E\) is strongly $\Ymc$-faithful, then \(E\) is also weakly $\Ymc$-faithful.
\end{restatable}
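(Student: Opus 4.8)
The plan is to unfold both definitions and reduce everything to one elementary observation: for a \emph{satisfiable} KB, entailment implies consistency. That is, if $\Kmc$ is satisfiable and $\Kmc\models\alpha$, then $\Kmc\cup\{\alpha\}$ is satisfiable, since any model $\Imc$ of $\Kmc$ also satisfies $\alpha$ and hence witnesses $\Imc\models\Kmc\cup\{\alpha\}$. This single fact is what converts the conclusion of strong faithfulness (``$\alpha$ is entailed by $\Kmc$'') into the conclusion of weak faithfulness (``$\alpha$ is consistent with $\Kmc$'').

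First I would handle $\Ymc=\text{TBox}$. Fix any TBox axiom $\alpha\in\llang$ with $E\models_\method\alpha$. Strong TBox-faithfulness (Definition~\ref{def:strongfaithful}) yields $\Kmc\models\alpha$; applying the observation above, using that $\Kmc$ is satisfiable by hypothesis, gives that $\alpha$ is consistent with $\Kmc$, which is exactly the weakly TBox-faithful requirement (Definition~\ref{def:weakfaithful}). The case $\Ymc=\text{ABox}$ is the same argument word for word, with ``assertion $\alpha$'' in place of ``TBox axiom $\alpha\in\llang$''.

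The case $\Ymc=\text{KB}$ then follows with no additional work: strongly KB-faithful is \emph{defined} as strongly TBox-faithful together with strongly ABox-faithful, and the two cases just established upgrade each conjunct to its weak counterpart, so $E$ is both weakly TBox-faithful and weakly ABox-faithful, i.e.\ weakly KB-faithful by definition.

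The only step that is not purely formal, and the one place the argument could break, is the passage from entailment to consistency, which genuinely depends on $\Kmc$ being satisfiable; an inconsistent $\Kmc$ would entail every axiom while being consistent with none. Since satisfiability of $\Kmc$ is part of the hypothesis of both Definitions~\ref{def:weakfaithful} and~\ref{def:strongfaithful}, this causes no difficulty here.
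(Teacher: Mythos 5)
Your proof is correct and follows essentially the same route as the paper's: reduce strong-to-weak faithfulness to the observation that for a satisfiable $\Kmc$, entailment of $\alpha$ implies consistency of $\Kmc\cup\{\alpha\}$, handle the TBox and ABox cases by this reduction, and obtain the KB case as the conjunction of the two. No gaps.
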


\begin{proof}
Let \(\kb = \tbox \cup \abox\) be a satisfiable KB in \(\llang\) and let \(E\) be an \method-model of \(\kb\). 

If \(E\) is strongly ABox-faithful (resp.\ TBox-faithful), then for every assertion \(\alpha\) (resp.\ TBox axiom in \(\llang\)), it holds that $\embedding\models_{\method} \alpha$ implies \(\kb \models \alpha\). 
Since \(\kb \models \alpha\), every model of \(\kb\) satisfies \(\alpha\), hence since $\kb$ is satisfiable, \(\kb \cup \{\alpha\}\) is satisfiable. 
Hence, $\embedding\models_\method \alpha$ implies that  \(\alpha\) is consistent with \(\kb\), i.e., \(E\) is weakly ABox-faithful (resp.\ TBox-faithful). 

\new{If \(E\) is strongly KB-faithful, then by Definition \ref{def:strongfaithful} it is both ABox- and TBox-faithful. We have shown that this implies that $E$ is weakly ABox- and TBox-faithful, so by Definition~\ref{def:weakfaithful}, \(E\) is weakly KB-faithful. }
\end{proof}

\begin{restatable}{proposition}{fullkbexpimpliesexistancestrongkbfaith}\label{prop:strongfaithfulnessimpliesexpressiveness}
We have that $M$ is:
\begin{itemize}
    \item fully TBox-expressive for $\Lmc$ if it is able to be strongly TBox-faithful; 
    \item fully KB-expressive for $\Lmc$ if it is able to be strongly KB-faithful.
\end{itemize}
\end{restatable}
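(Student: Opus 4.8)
The plan is to prove both bullets by the same two-step argument: use the \emph{ability} hypothesis to extract a strongly faithful \method-model of the ``positive'' theory, and then invoke the \emph{contrapositive} of strong faithfulness to falsify every axiom of the ``negative'' set. The key observation that makes this work is that, by the definition of deductive closure as the set of all entailed axioms, the condition ``$\Tmc'$ (resp.\ $\Kmc'$) is disjoint from the deductive closure of $\Tmc$ (resp.\ $\Kmc$)'' says exactly that no axiom of the negative set is entailed.

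For the first bullet, I would start from arbitrary $\Lmc$-TBoxes $\Tmc$ and $\Tmc'$ with $\Tmc$ satisfiable and $\Tmc'$ disjoint from the deductive closure of $\Tmc$. Viewing $\Tmc$ as a satisfiable $\Lmc$-KB with empty ABox, the assumption that \method is able to be strongly TBox-faithful yields an \method-model $\embedding$ of $\Tmc$ that is strongly TBox-faithful. Then for each $\alpha\in\Tmc'$, disjointness from the deductive closure gives $\Tmc\not\models\alpha$; since strong TBox-faithfulness asserts that $\embedding\models_{\method}\alpha$ implies $\Tmc\models\alpha$, its contrapositive gives $\embedding\not\models_{\method}\alpha$, i.e.\ $S_\method(\embedding,\Lmc)(\alpha)=0$. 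As this holds for every $\alpha\in\Tmc'$, the model $\embedding$ witnesses full TBox-expressiveness for the pair $(\Tmc,\Tmc')$; since the pair was arbitrary, \method is fully TBox-expressive.

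For the second bullet I would run the identical argument with KBs in place of TBoxes, relying on the ``expected'' extension of full expressiveness to KBs: given KBs $\Kmc,\Kmc'$ with $\Kmc$ satisfiable and $\Kmc'$ disjoint from the deductive closure of $\Kmc$, the ability to be strongly KB-faithful provides a strongly KB-faithful \method-model $\embedding$ of $\Kmc$. The only extra care needed is that $\Kmc'$ may contain both assertions and TBox axioms; this is precisely why strong KB-faithfulness, which by Definition~\ref{def:strongfaithful} bundles strong TBox- and ABox-faithfulness, is the right hypothesis, so that the contrapositive applies uniformly to every $\alpha\in\Kmc'$ regardless of its kind. Again $\Kmc\not\models\alpha$ for each such $\alpha$ yields $\embedding\not\models_{\method}\alpha$, i.e.\ $S_\method(\embedding,\Lmc)(\alpha)=0$.

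I do not expect a genuine obstacle here: the content is a direct unfolding of the definitions. The only two points requiring attention are (i) using faithfulness in its contrapositive form $\Kmc\not\models\alpha\Rightarrow\embedding\not\models_{\method}\alpha$ rather than forwards, and (ii) confirming that membership in the negative set together with disjointness from the deductive closure delivers non-entailment. Both are immediate from the stated definitions, so the main ``work'' is simply matching the quantifier structure of full expressiveness against that of the ability property.
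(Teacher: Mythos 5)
Your proof is correct and follows essentially the same route as the paper's: obtain a strongly faithful $\method$-model of the positive theory from the ability hypothesis, then apply strong faithfulness in contrapositive form to each $\alpha$ in the negative set, using its disjointness from the deductive closure to get $\Kmc\not\models\alpha$ and hence $S_\method(\embedding,\Lmc)(\alpha)=0$. Your explicit remark that strong KB-faithfulness is needed in the second bullet because $\Kmc'$ may mix assertions and TBox axioms matches the paper's parenthetical case split exactly.
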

\begin{proof}
For the first item, let $\Tmc,\Tmc'$ be two $\Lmc$-TBoxes such that \Tmc is satisfiable and $\Tmc'$ is disjoint from the deductive closure of $\Tmc$. By assumption there is some strongly TBox-faithful $M$-model $E$ of $\Tmc$. It therefore holds that $S_M(E,\Lmc)(\alpha)=0$, for every $\alpha\in\Tmc'$, because $\Tmc\not\models\alpha$.

For the second item, let $\Kmc,\Kmc'$ be two $\Lmc$-KBs such that $\Kmc'$ is disjoint from the deductive closure of $\Kmc$. By assumption there is some strongly KB-faithful $M$-model $E$ of $\Tmc$ \new{(which is thus strongly ABox- and TBox-faithful)}. It therefore holds that $S_M(E,\Lmc)(\alpha)=0$ for every $\alpha\in\Kmc'$, because $\Kmc\not\models\alpha$ \new{(by strong ABox-faithfulness of $E$ if $\alpha$ is an assertion and by strong TBox-faithfulness of $E$ if it is a TBox axiom)}.
\end{proof}

\properties*
\begin{proof}
All arrows going from a guarantee ($\forall$) to the corresponding existence ($\exists$) hold directly from the definitions of Properties~\ref{prop:existence-faithfulness} and \ref{prop:guarantee-faithfulness}. 
Moreover, the arrows from strong $\Ymc$-faithfulness to weak $\Ymc$-faithfulness are by
\cref{prop:weakstrong}. 
The arrows from ability to be strongly $\Ymc$-faithful to full $\Ymc$-expressiveness are by Propositions~\ref{prop:fully-expressive-vs-model-strongly-ABox-faithful} and \ref{prop:strongfaithfulnessimpliesexpressiveness}. 
\new{We have arrows from KB- to TBox- and ABox- entailment closure and weak and strong faithfulness properties by Definitions \ref{def:xmodelcompleteness}, \ref{def:weakfaithful} and \ref{def:strongfaithful} respectively.}

\new{We remark that we also have arrows from KB- to TBox- and ABox-full expressiveness properties. Indeed, if $\method$ under $S_\method$ is fully KB-expressive, then for every two $\Lmc$-KBs $\Kmc$ and $\Kmc'$ with $\Kmc$ satisfiable and $\Kmc'$ disjoint from the deductive closure of $\Kmc$, there exists an $\method$-model $E$ of $\Kmc$ such that $S_\method(E,\Lmc)(\alpha)=0$ for all $\alpha\in\Kmc'$. By applying this definition on pairs $\Kmc,\Kmc'$ of TBoxes or ABoxes, we obtain that $\method$ is fully TBox- and ABox-expressive. }

Completeness requires that when the KB is satisfiable there is an \method-model for the embedding method \method (Property~\ref{prop:completeness}). Completeness is thus a consequence of the various properties of 
ability to be weakly or strongly ABox/TBox/KB-faithful or -entailed, by the way we have defined Property \ref{prop:existence-faithfulness}. One way to think about completeness is as a basic requirement needed for the entailed and faithfulness notions, which refer to further properties of an embedding model. 
Regarding the arrow from full KB-expressiveness to completeness, this comes from the fact that Property \ref{prop:expressiveness} requires the existence of an \method-model of $\Tmc\cup\Amc$ for full KB-expressiveness (while it only requires a model of $\Tmc$ or $\Amc$ when concerned with TBox- or ABox-expressiveness).

Finally, the dashed arrow that indicates that full ABox-expressiveness implies ability to be strongly ABox-faithful \emph{in the case where the TBox $\Tmc$ is empty} comes from Proposition~\ref{prop:fully-expressive-vs-model-strongly-ABox-faithful}. 
\end{proof}

\propertiesfinite*

\begin{proof}
All the arrows from Figure~\ref{fig:inflangdiag} also hold in Figure~\ref{fig:finlangdiag} and hold by Theorem~\ref{thm:properties}. It remains to show that (1) full KB expressiveness implies $\exists$ strong KB faithfulness, and (2) that completeness implies all of $\exists$ KB/TBox/ABox-entailed notions to show the equivalences between properties shown in Figure~\ref{fig:finlangdiag} in the case of finite languages. 
Assume that $\Lmc$ is a finite language.

\noindent(1) Assume that $M$ is fully KB-expressive for $\Lmc$ and let $\Kmc=\Tmc\cup\Amc$ be a satisfiable $\Lmc$-KB. Define $\Kmc':=\{\alpha \mid \alpha\in\Lmc, \Kmc\not\models\alpha\}$. Since $\Lmc$ is finite, it follows that $\Kmc'$ is an $\Lmc$-KB. Moreover, $\Kmc'$ is disjoint from the deductive closure of $\Kmc$ by construction. Hence, by full KB-expressiveness, there is an $M$-model $E$ of $\Kmc$ such that $E\not\models_M\alpha$ for any $\alpha\in\Kmc'$. It follows that $E\not\models_M\alpha$ for any $\alpha$ such that $\Kmc\not\models\alpha$ \new{so that $E$ is strongly KB-faithful}. 

\noindent(2) Suppose that $M$ is complete for $\Lmc$ and let $\Kmc=\Tmc\cup\Amc$ be an $\Lmc$-KB. We show that there exists a KB-entailed $M$-model of $\Kmc$ (hence ABox- and TBox-entailed). Let $\Kmc^+$ be the deductive closure of $\Kmc$, i.e. $\Kmc^+:=\{\alpha\;|\;\alpha\in\Lmc,\Kmc\models\alpha\}$. Since $\Lmc$ is finite, $\Kmc^+$ is an $\Lmc$-KB. By completeness, $\Kmc^+$ has an $M$-model $E$. By construction, $E$ is \new{an $M$-model of $\Kmc$ (since $\Kmc\subseteq\Kmc^+$) which is ABox-entailed and TBox-entailed, i.e., $E$ is a} KB-entailed $M$-model of $\Kmc$.
\end{proof}

\section{Results in Table \ref{tab:kbkgmethodProperties}}

Recall that all DL languages considered in this section are finite. This allows us to use Theorem~\ref{thm:propertiesfinite} to obtain that some properties hold or does not hold based on the status of other properties.

\subsection{Properties of Convex Geometric Models}\label{subsec:propconvex}

\begin{proposition}[Soundness and completeness]\label{convex-sound-complete}
For $\mathcal{ELHI}_\bot$ in normal form, the embedding method based on convex geometric models under the semantics given by $\models_{\mi{conv}}$ is sound and complete. 
\end{proposition}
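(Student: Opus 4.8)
The plan is to prove the two directions separately: for soundness I read the geometric model off as a classical interpretation, and for completeness I encode a finite classical model as a geometric one. Since $\mathcal{ELHI}_\bot$ in normal form is a special case of the quasi-chained rules of \citep{Geometric}, this reproves a special case of their Corollary~1, but the argument is short enough to give directly.

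For soundness, suppose $\embedding$ is a convex geometric model of a KB $\kb=\Tmc\cup\Amc$. I would define a classical interpretation $\Imc$ with domain $\Delta^\Imc=\mathbb{R}^\dimension$, setting $a^\Imc:=\embedding(a)$, $A^\Imc:=\embedding(A)$ for concept names $A$, and $r^\Imc:=\{(\vec{x},\vec{y})\mid\vec{x}\oplus\vec{y}\in\embedding(r)\}$ for role names $r$. The core step is a routine structural induction showing that the geometric extension of $\embedding$ coincides with the classical interpretation of complex expressions, i.e.\ $C^\Imc=\embedding(C)$ and $(\vec{x},\vec{y})\in R^\Imc$ iff $\vec{x}\oplus\vec{y}\in\embedding(R)$. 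The cases $\exists r^{(-)}.A$, $A_1\sqcap A_2$ and $r^-$ go through precisely because the geometric definitions of $\embedding(\cdot)$ were designed to mirror the classical semantics, and $\embedding(\bot)=\emptyset$, $\embedding(\top)=\mathbb{R}^\dimension$ agree with $\bot^\Imc,\top^\Imc$. Once the correspondence holds, each axiom translates directly: $\embedding\models_{\mi{conv}}C\sqsubseteq D$ iff $\embedding(C)\subseteq\embedding(D)$ iff $C^\Imc\subseteq D^\Imc$, and analogously for role inclusions and assertions, so $\Imc\models\kb$.

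For completeness, I would first invoke the finite model property of $\mathcal{ELHI}_\bot$ to obtain from the satisfiable KB a \emph{finite} model $\Imc$ with $\Delta^\Imc=\{d_1,\dots,d_n\}$. I then embed each $d_i$ as the standard basis vector $\vec{e}_i\in\mathbb{R}^n$ (taking $\dimension:=n$), put $\embedding(a):=\vec{e}_i$ whenever $a^\Imc=d_i$, and take convex hulls over grounded elements: $\embedding(A):=\conv\{\vec{e}_i\mid d_i\in A^\Imc\}$ and $\embedding(r):=\conv\{\vec{e}_i\oplus\vec{e}_j\mid(d_i,d_j)\in r^\Imc\}\subseteq\mathbb{R}^{2n}$. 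All these regions are convex by construction, and individuals map to single vectors, as the method requires. The heart of the argument is a ``general position'' lemma: since the points $\vec{e}_i$ (resp.\ the concatenations $\vec{e}_i\oplus\vec{e}_j$) are affinely independent, a generator lies in the convex hull of a subset of generators \emph{only if} it already belongs to that subset; for the concatenations this follows by reading off the first and last $n$ coordinates separately and using nonnegativity of the convex coefficients. Using this I would prove, by case analysis over the normal form, that $\embedding(C)=\conv\{\vec{e}_i\mid d_i\in C^\Imc\}$ (and the analogue for roles).

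With the lemma in hand, inclusions transfer both ways: $\embedding(C)\subseteq\embedding(D)$ iff $\{i\mid d_i\in C^\Imc\}\subseteq\{i\mid d_i\in D^\Imc\}$ iff $C^\Imc\subseteq D^\Imc$, since any $\vec{e}_i\in\embedding(D)$ forces $d_i\in D^\Imc$ by general position. Hence every concept and role inclusion of $\kb$ holds under $\models_{\mi{conv}}$; assertions hold because the relevant basis vector or concatenation is itself a generator of the corresponding hull; and $\bot$-axioms hold because an empty extension yields the empty hull, which is contained in $\embedding(\bot)=\emptyset$. I expect the main obstacle to be the $\exists r^{(-)}.A$ case of the correspondence lemma: unfolding the projection-style definition of $\embedding(\exists r^{(-)}.A)$ and arguing that nonnegativity of the convex coefficients forces the witness coordinates to stay inside $A^\Imc$ is the one genuinely nontrivial computation. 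The secondary point to handle carefully is the appeal to the finite model property, which is needed to keep the dimension finite and which holds for the $\mathcal{EL}$ family extended with inverse roles and $\bot$.
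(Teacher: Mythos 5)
Your route is genuinely different from the paper's: the paper proves this proposition by a one-line appeal to Corollary~1 of \citep{Geometric}, whereas you give a self-contained construction. Your soundness half is essentially the interpretation $\Imc_\embedding$ (domain $\mathbb{R}^\dimension$, $r^{\Imc_\embedding}=\{(\vec{x},\vec{y})\mid\vec{x}\oplus\vec{y}\in\embedding(r)\}$) that the paper itself introduces right after this proposition to establish entailment closure and weak faithfulness, so that direction is fine and even buys you those stronger properties for free. The completeness half (finite model property, basis vectors, convex hulls, unique barycentric coordinates) is the standard technique behind the cited result, and the core computations go through: in particular the $\exists r^{(-)}.A$ case works exactly as you sketch, by splitting $\vec{x}\oplus\vec{y}\in\operatorname{conv}\{\vec{e}_i\oplus\vec{e}_j\mid(d_i,d_j)\in r^\Imc\}$ into its two halves and using nonnegativity of the coefficients. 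One slip: the concatenations $\vec{e}_i\oplus\vec{e}_j$ are \emph{not} affinely independent (already $(\vec{e}_1\oplus\vec{e}_1)+(\vec{e}_2\oplus\vec{e}_2)=(\vec{e}_1\oplus\vec{e}_2)+(\vec{e}_2\oplus\vec{e}_1)$, which is precisely why Example~\ref{ex:convex-not-complete} works); fortunately the weaker fact you actually use --- a generator lying in the hull of a subset of generators must belong to that subset --- still holds by the coordinate-splitting argument, so this is a misstated justification rather than a broken step.

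There is, however, one concrete case where your construction fails as written. The normal form allows $\top$ on the left-hand side ($A,A_i\in\NC\cup\{\top\}$), and the semantics fixes $\embedding(\top):=\mathbb{R}^\dimension$. For a satisfiable KB containing $\top\sqsubseteq B$ (or $\top\sqsubseteq\exists r.B$), your embedding assigns $B$ a bounded simplex, so $\mathbb{R}^\dimension\not\subseteq\embedding(B)$ and $\embedding\not\models_{\mi{conv}}\top\sqsubseteq B$. (The cases $\exists r.\top\sqsubseteq B$ and $\top\sqcap A\sqsubseteq B$ are harmless, since there $\embedding(\top)$ only enters through an intersection or a projection of a bounded region.) This is patchable --- e.g.\ set $\embedding(B):=\mathbb{R}^\dimension$ for every concept name $B$ with $\Kmc\models\top\sqsubseteq B$ and re-verify the axioms involving such $B$ --- but as stated the correspondence lemma $\embedding(C)=\operatorname{conv}\{\vec{e}_i\mid d_i\in C^\Imc\}$ is false for $C=\top$, and the completeness claim does not go through without this repair.
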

\begin{proof}
This directly follows from Corollary 1 in \citep{Geometric}.
\end{proof}

In this section, given a convex geometric model $\embedding$ of an $\mathcal{ELHI}_\bot$ TBox $\Tmc$ in normal form and an ABox $\Amc$, we define the classical model $\Imc_\embedding$ of $\Tmc$ and $\Amc$ (see \citep{DBLP:conf/dlog/BourgauxOP21}): $\Delta^{\Imc_\embedding}=\mathbb{R}^\dimension$, $a^{\Imc_\embedding}=\embedding(a)$ for every $a\in\NI$, $A^{\Imc_\embedding}=\embedding(A)$ for every $A\in\NC$, and $r^{\Imc_\embedding}=\{(u,v)\mid u\oplus v\in \embedding(r)\}$ for every $r\in\NR$. 

It is easy to check that $\Imc_\embedding$ is indeed a classical model of $\Tmc\cup\Amc$ by the definition of $\embedding\models_{\mi{conv}} \Tmc\cup\Amc$. For every $\mathcal{ELHI}_\bot$ concept inclusion (in normal form) or role inclusion $X\sqsubseteq Y\in\Tmc$, $\embedding\models_{\mi{conv}} X\sqsubseteq Y$ means that $\embedding(X)\subseteq \embedding(Y)$ and this implies $X^{\Imc_\embedding}\subseteq Y^{\Imc_\embedding}$. Similarly, for every concept assertion $A(a)$, $\embedding\models_{\mi{conv}} A(a)$ means that $\embedding(a)\in \embedding(A)$, \ie $a^{\Imc_\embedding}\in A^{\Imc_\embedding}$, and for every role assertion $\embedding\models_{\mi{conv}} r(a,b)$ means that $\embedding(a)\oplus\embedding(b)\in \embedding(r)$, \ie $(a^{\Imc_\embedding},b^{\Imc_\embedding})\in r^{\Imc_\embedding}$. 

\begin{proposition}[Entailment closure]\label{convex-entailed}
For $\mathcal{ELHI}_\bot$ in normal form, any embedding method based on convex geometric models under the semantics given by $\models_{\mi{conv}}$ is guaranteed to be KB-entailed. 

Hence (by Theorem~\ref{thm:propertiesfinite}) it is also guaranteed to be TBox- and ABox-entailed, and able to be KB-, TBox- and ABox-entailed. 
\end{proposition}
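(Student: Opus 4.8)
The plan is to reduce guaranteed KB-entailment to the correspondence between convex geometric models and their induced classical interpretations $\Imc_\embedding$, already introduced above. First I would invoke completeness (Proposition~\ref{convex-sound-complete}) to ensure that every satisfiable $\mathcal{ELHI}_\bot$-KB $\Kmc=\Tmc\cup\Amc$ has at least one convex geometric model; it then suffices to show that \emph{every} convex geometric model $\embedding$ of $\Kmc$ is KB-entailed, since then the method always produces such a model.

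The key step is to strengthen the one-directional observation made above (that $\Imc_\embedding\models\Kmc$) into a full two-way correspondence. Concretely, I would prove by structural induction on $\mathcal{ELHI}_\bot$ concepts that $\embedding(C)=C^{\Imc_\embedding}$ for every concept $C$, together with the role identity $R^{\Imc_\embedding}=\{(\vec u,\vec v)\mid \vec u\oplus\vec v\in\embedding(R)\}$ for every (possibly inverse) role $R$. The base cases $\top,\bot$, concept names and role names hold by the very definition of $\Imc_\embedding$; the case $A_1\sqcap A_2$ is immediate from $\embedding(A_1\sqcap A_2)=\embedding(A_1)\cap\embedding(A_2)$; the inverse-role case follows by unfolding $\embedding(r^-)=\{\vec x\oplus\vec y\mid \vec y\oplus\vec x\in\embedding(r)\}$ against the semantics of $r^-$; and the existential case $\exists r^{(-)}.A$ matches the definition of $\embedding(\exists r^{(-)}.A)$ once the induction hypotheses for $A$ and $r^{(-)}$ are substituted.

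From this identity the semantic clauses of $\models_{\mi{conv}}$ translate verbatim: $\embedding\models_{\mi{conv}}C\sqsubseteq D$ iff $\embedding(C)\subseteq\embedding(D)$ iff $C^{\Imc_\embedding}\subseteq D^{\Imc_\embedding}$ iff $\Imc_\embedding\models C\sqsubseteq D$, and analogously for role inclusions and for concept and role assertions (using that every point of $\mathbb{R}^{2\dimension}$ has the form $\vec u\oplus\vec v$). Hence for every normal-form axiom or assertion $\alpha$ in the language, $\embedding\models_{\mi{conv}}\alpha$ iff $\Imc_\embedding\models\alpha$. Now take any $\alpha$ with $\Kmc\models\alpha$: since $\Imc_\embedding$ is a classical model of $\Kmc$, we get $\Imc_\embedding\models\alpha$ and therefore $\embedding\models_{\mi{conv}}\alpha$. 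This shows that $\embedding$ is both TBox- and ABox-entailed, i.e.\ KB-entailed, for every convex geometric model $\embedding$ of $\Kmc$, which is exactly the guarantee. The remaining consequences (guaranteed TBox/ABox-entailment and the corresponding ability properties) then follow from Theorem~\ref{thm:propertiesfinite}, since all languages here are finite.

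I expect the only delicate point to be the inductive correspondence lemma, and within it the inverse-role and existential clauses, where one must carefully match the $\oplus$-encoding of role pairs used by $\embedding$ with the pair semantics of $r^{\Imc_\embedding}$; everything else is routine bookkeeping.
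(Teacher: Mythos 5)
Your proposal is correct and follows essentially the same route as the paper: both arguments pass through the induced classical interpretation $\Imc_\embedding$, use that $\Imc_\embedding\models\Kmc$, and exploit the equivalence $\embedding\models_{\mi{conv}}\alpha$ iff $\Imc_\embedding\models\alpha$ (the paper states the needed direction as ``easy to check'' where you spell it out via the structural induction $\embedding(C)=C^{\Imc_\embedding}$). The only cosmetic difference is that the paper phrases the key step contrapositively ($\embedding\not\models_{\mi{conv}}\alpha$ implies $\Imc_\embedding\not\models\alpha$, hence $\Kmc\not\models\alpha$), which is logically identical to your direct derivation.
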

\begin{proof}
Let $\Tmc$ be an  $\mathcal{ELHI}_\bot$ TBox in normal form and let $\Amc$ be an ABox such that $\Kmc=\Tmc\cup\Amc$ is satisfiable. 
Let $\embedding$ be a convex geometric model of $\Kmc$.

For every $\mathcal{ELHI}_\bot$ axiom or assertion $\alpha$, if $\embedding\not\models_{\mi{conv}} \alpha$, it is easy to check that $\Imc_\embedding\not\models \alpha$, so that $\alpha$ is not entailed by $\Kmc$. Hence, $\embedding$ interpreted under the semantics given by $\models_{\mi{conv}}$ is TBox- and ABox-entailed\new{, i.e., KB-entailed.} 

Since this is true for any $\embedding$, the embedding method is guaranteed to be KB-entailed.
\end{proof}

\begin{proposition}[Weak faithfulness]\label{convex-weak-faithfull}
For $\mathcal{ELHI}_\bot$ in normal form, any embedding method based on convex geometric models under the semantics given by $\models_{\mi{conv}}$ is guaranteed to be weakly KB-faithful. 

Hence (by Theorem~\ref{thm:propertiesfinite}) it is also guaranteed to be weakly TBox- and ABox-faithful, and able to be weakly KB-, TBox- and ABox-faithful.
\end{proposition}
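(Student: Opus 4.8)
The plan is to reuse the classical interpretation $\Imc_\embedding$ associated with a convex geometric model, defined just before \cref{convex-entailed} and already shown there to be a classical model of $\Kmc=\Tmc\cup\Amc$ whenever $\embedding\models_{\mi{conv}}\Kmc$. Fix an arbitrary convex geometric model $\embedding$ of a satisfiable $\mathcal{ELHI}_\bot$-KB $\Kmc$; since every such $\Kmc$ has a model by \cref{convex-sound-complete} and the guarantee property ranges over all $\method$-models, it suffices to prove that this single $\embedding$ is weakly KB-faithful. By \cref{def:weakfaithful} this reduces to showing that, for every $\mathcal{ELHI}_\bot$ axiom or assertion $\alpha$ in normal form, $\embedding\models_{\mi{conv}}\alpha$ implies that $\alpha$ is consistent with $\Kmc$, i.e.\ that $\Kmc\cup\{\alpha\}$ has a classical model.

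The crux is the implication $\embedding\models_{\mi{conv}}\alpha\Rightarrow\Imc_\embedding\models\alpha$. For concept and role assertions this is immediate (in fact a biconditional) from the definitions of $\models_{\mi{conv}}$ and of $a^{\Imc_\embedding}$ and $r^{\Imc_\embedding}$. For inclusions I would first establish the auxiliary identity $\embedding(C)=C^{\Imc_\embedding}$ for every normal-form concept $C$, by a short case analysis: the cases $\top$, $\bot$, $A$, and $A_1\sqcap A_2$ are direct from the definitions, and the case $\exists r^{(-)}.A$ follows by unfolding the definition of $\embedding(\exists r^{(-)}.A)$ together with $r^{\Imc_\embedding}=\{(u,v)\mid u\oplus v\in\embedding(r)\}$ and comparing with $(\exists r^{(-)}.A)^{\Imc_\embedding}$. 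Given this identity, a concept inclusion satisfies $\embedding\models_{\mi{conv}} C\sqsubseteq D$ iff $\embedding(C)\subseteq\embedding(D)$ iff $C^{\Imc_\embedding}\subseteq D^{\Imc_\embedding}$ iff $\Imc_\embedding\models C\sqsubseteq D$; the role-inclusion case $r\sqsubseteq s^{(-)}$ is analogous, using the definitions of $\embedding(s^-)$ and of $(s^-)^{\Imc_\embedding}$.

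The argument then closes immediately: if $\embedding\models_{\mi{conv}}\alpha$ then $\Imc_\embedding\models\alpha$, and since $\Imc_\embedding\models\Kmc$ already holds, $\Imc_\embedding$ is a classical model of $\Kmc\cup\{\alpha\}$, so $\alpha$ is consistent with $\Kmc$. Specialising $\alpha$ to TBox axioms yields weak TBox-faithfulness and to assertions yields weak ABox-faithfulness, hence $\embedding$ is weakly KB-faithful by \cref{def:weakfaithful}. As $\embedding$ was an arbitrary convex geometric model of $\Kmc$, the method is \emph{guaranteed} to be weakly KB-faithful, and the remaining claims follow by \cref{thm:propertiesfinite} as stated.

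I expect the only delicate point to be the inverse-role bookkeeping, namely tracking the coordinate swap in $\embedding(r^-)=\{\vec{x}\oplus\vec{y}\mid\vec{y}\oplus\vec{x}\in\embedding(r)\}$ against $(r^-)^{\Imc_\embedding}=\{(v,u)\mid(u,v)\in r^{\Imc_\embedding}\}$, both in the $\exists r^{-}.A$ subcase of the identity and in the role-inclusion case; everything else is routine unfolding, much of it already implicit in the entailment-closure argument of \cref{convex-entailed}.
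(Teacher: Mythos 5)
Your proposal is correct and follows essentially the same route as the paper: both use the classical interpretation $\Imc_\embedding$ (already shown to be a model of $\Kmc$), establish that $\embedding\models_{\mi{conv}}\alpha$ implies $\Imc_\embedding\models\alpha$, and conclude that $\Imc_\embedding$ witnesses the consistency of $\Kmc\cup\{\alpha\}$. The only difference is that you spell out the auxiliary identity $\embedding(C)=C^{\Imc_\embedding}$ and the inverse-role bookkeeping explicitly, which the paper leaves as "easy to check".
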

\begin{proof}
Let $\Tmc$ be an  $\mathcal{ELHI}_\bot$ TBox in normal form and let $\Amc$ be an ABox such that $\Kmc=\Tmc\cup\Amc$ is satisfiable. 
Let $\embedding$ be a convex geometric model of $\Kmc$. 

For every $\mathcal{ELHI}_\bot$ concept inclusion (in normal form) or role inclusion, or assertion $\alpha$, we have seen that $\embedding\models_{\mi{conv}} \alpha$ 
implies $\Imc_\embedding\models \alpha$. 
Since $\Imc_\embedding$ is a classical model of $\Tmc\cup\Amc$, it follows that $\alpha$ is consistent with $\Tmc\cup\Amc$. Hence $\embedding$ interpreted under the semantics given by $\models_{\mi{conv}}$ is weakly KB-faithful.  
 
It follows that every convex geometric model of $\Kmc$ is weakly KB-faithful, so the embedding method is guaranteed to be weakly KB-faithful.
\end{proof}

\begin{proposition}[Absence of strong faithfulness guarantee]\label{convex-not-guarantee-strong-faithfull}
An embedding method based on convex geometric models under the semantics given by $\models_{\mi{conv}}$ is not guaranteed to be strongly TBox- nor ABox-faithful, for every language that can express concept hierarchy. 
\end{proposition}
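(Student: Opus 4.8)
The plan is to refute the \emph{guarantee} in each case by exhibiting a single satisfiable \Lmc-KB together with \emph{one} convex geometric model of it that validates an axiom (resp.\ an assertion) not entailed by the KB. Since the guarantee property (Property~\ref{prop:guarantee-faithfulness}) quantifies universally over the \method-models of a KB, producing one such ``unfaithful'' convex geometric model is enough to show that the method is not guaranteed to be strongly faithful. Because the only feature of \Lmc that the construction uses is the presence of an inclusion $A\sqsubseteq B$ between two distinct concept names $A,B\in\NC$, the argument applies to every language that can express concept hierarchy.

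For the TBox case I would take $\Tmc=\{A\sqsubseteq B\}$ and $\Amc=\emptyset$, which is satisfiable, and define a convex geometric model $\embedding$ over $\mathbb{R}^\dimension$ by mapping both concept names to one and the same convex region, for instance $\embedding(A)=\embedding(B)=\mathbb{R}^\dimension$. Since $\mathbb{R}^\dimension$ is convex and $\embedding(A)\subseteq\embedding(B)$, this $\embedding$ is indeed a convex geometric model of $\Kmc$; moreover $\embedding(B)\subseteq\embedding(A)$, so $\embedding\models_{\mi{conv}} B\sqsubseteq A$. It then remains to observe that $\Kmc\not\models B\sqsubseteq A$, which I would witness by the classical interpretation $\Imc$ with a single domain element $x$, $A^\Imc=\emptyset$ and $B^\Imc=\{x\}$: this $\Imc$ satisfies $A\sqsubseteq B$ but falsifies $B\sqsubseteq A$. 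Hence $\embedding$ is a convex geometric model of $\Kmc$ that is not strongly TBox-faithful.

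For the ABox case I would take $\Amc=\{A(a)\}$ and $\Tmc=\emptyset$, and define $\embedding$ so that $\embedding(a)$ lies inside the regions of both $A$ and a second concept name $B$, e.g.\ $\embedding(a)=\vec{0}$ and $\embedding(A)=\embedding(B)=\mathbb{R}^\dimension$. This $\embedding$ is a convex geometric model of $\Kmc$ and satisfies $\embedding\models_{\mi{conv}} B(a)$, whereas $\Kmc\not\models B(a)$, as witnessed by any classical model with $a^\Imc\notin B^\Imc$; thus $\embedding$ is not strongly ABox-faithful. The arguments are elementary, and the only points requiring care are verifying that each displayed embedding genuinely satisfies the KB under $\models_{\mi{conv}}$ (which reduces to the chosen regions being convex and the relevant inclusions/memberships holding) and supplying, in each case, the classical counter-model that establishes the required non-entailment. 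Neither of these is a real obstacle, so the bulk of the work is simply assembling the two counterexamples and concluding that the method is consequently not guaranteed to be strongly TBox- nor ABox-faithful.
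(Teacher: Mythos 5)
Your proposal is correct and follows essentially the same approach as the paper: map $A$ and $B$ to the same convex region (and place $\embedding(a)$ inside it) so that the unentailed axiom $B\sqsubseteq A$ and assertion $A(a)$ (resp.\ $B(a)$) hold under $\models_{\mi{conv}}$. The only cosmetic difference is that the paper uses a single KB $\{A\sqsubseteq B, B(a)\}$ to witness both failures at once, whereas you split it into two KBs.
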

\begin{proof}
Consider $\Tmc=\{A\sqsubseteq B\}$, $\Amc=\{B(a)\}$ and the convex geometric model of $\Amc$ and $\Tmc$ defined by $\embedding(A)=\embedding(B)=\mathbb{R}$ and $\embedding(a)=(0)$.   $\embedding\models_{\mi{conv}} B\sqsubseteq A$ while $\Tmc\cup\Amc\not\models B\sqsubseteq A$ so $\embedding$ interpreted under the semantics given by $\models_{\mi{conv}}$ is not strongly TBox-faithful, and $\embedding\models_{\mi{conv}} A(a)$ while $\Tmc\cup\Amc\not\models A(a)$ so $\embedding$ interpreted under the semantics given by $\models_{\mi{conv}}$ is not strongly ABox-faithful. 
\end{proof}

\begin{proposition}[Strong faithfulness ability and full expressiveness]\label{convex-strongly-KB-faithful}
For $\mathcal{ELH}$ in normal form, any embedding method based on convex geometric models under the semantics given by $\models_{\mi{conv}}$ is able to be strongly KB-faithful.

Hence (by Theorem~\ref{thm:propertiesfinite}) it is also able to be strongly TBox-faithful and ABox-faithful, and it is fully KB-, TBox- and ABox-expressive. Moreover, full ABox-expressiveness is independent from the DL language considered. 
\end{proposition}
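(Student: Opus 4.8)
The plan is to exhibit, for any satisfiable $\mathcal{ELH}$-KB $\Kmc = \Tmc \cup \Amc$ (recall that without $\bot$ every such KB is satisfiable), a single convex geometric model $\embedding$ that is strongly KB-faithful, i.e.\ $\embedding \models_{\mi{conv}} \alpha$ only if $\Kmc \models \alpha$ for every normal-form concept inclusion, role inclusion and assertion $\alpha$. The whole argument rests on first building a \emph{finite} classical model $\Imc$ of $\Kmc$ that is simultaneously \emph{universal}: for all normal-form concepts $C,D$ one has $\Kmc \models C \sqsubseteq D$ iff $C^\Imc \subseteq D^\Imc$; for all role names $\Kmc \models r \sqsubseteq s$ iff $r^\Imc \subseteq s^\Imc$; and analogous equivalences for concept and role assertions. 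First I would take the usual folded canonical model of $\Kmc$ (finite because existential successors are reused) and \emph{augment} it, as disjoint components, with (i) one fresh representative element $x_C$ of type exactly the $\Kmc$-consequences of $C$ for each of the finitely many left-hand-side concepts $C$ expressible in the language ($\top$, concept names, binary conjunctions, and $\exists r.A$), and (ii) one fresh generic $r$-edge for each role name $r$, closed under the role hierarchy of $\Tmc$. Since there is no $\bot$, these additions never threaten satisfiability, and they supply the nontrivial $(\Leftarrow)$ directions (witnessing non-entailments), while the $(\Rightarrow)$ directions hold because $\Imc \models \Kmc$.

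Second I would embed $\Imc$ geometrically. Enumerate $\Delta^\Imc = \{d_1,\dots,d_n\}$ and set $\embedding(d_i)=\vec{v_i}$, where $\vec{v_1},\dots,\vec{v_n}$ are affinely independent points (the vertices of a simplex) in $\mathbb{R}^{\dimension}$ with $\dimension = n-1$; each individual name $a$ is mapped to the point assigned to its canonical element. Then define $\embedding(A):=\conv\{\embedding(d)\mid d\in A^\Imc\}$ for $A\in\NC$ and $\embedding(r):=\conv\{\embedding(d)\oplus\embedding(e)\mid (d,e)\in r^\Imc\}$ for $r\in\NR$; these are convex, so $\embedding$ is a legitimate convex geometric model. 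The technical core is the \textbf{representation lemma}: for every normal-form concept $C$ and every $d\in\Delta^\Imc$, $\embedding(d)\in\embedding(C)$ iff $d\in C^\Imc$. This is proved by structural induction using the \emph{uniqueness of barycentric coordinates} on a simplex: a vertex lies in the convex hull of a set of vertices iff it is one of them, and any point of the hull has support contained in the generating vertices. The conjunction case follows by intersecting two faces of the simplex; the $\exists r.A$ case is the delicate one — from $\embedding(d)\oplus\vec{y}\in\embedding(r)$ one projects onto the first $\dimension$ coordinates to force every generator used to have first component $\embedding(d)$, hence $\vec{y}$ is a convex combination of the images of the $r$-successors of $d$, and a second barycentric argument against $\embedding(A)$ forces one such successor to lie in $A^\Imc$.

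With the representation lemma in hand, I would close the argument in two short steps. For \textbf{modelhood}, each normal-form axiom $X\sqsubseteq Y$ of $\Kmc$ translates, via the lemma and the same barycentric bookkeeping, into the required inclusion of regions (e.g.\ for $\exists r.A\sqsubseteq B$, every point of $\embedding(\exists r.A)$ decomposes over generators whose tails lie in $A^\Imc$, hence whose heads lie in $(\exists r.A)^\Imc\subseteq B^\Imc$, placing the point in $\embedding(B)$), and assertions hold because they correspond to generating vertices; so $\embedding\models_{\mi{conv}}\Kmc$. For \textbf{strong faithfulness}, suppose $\Kmc\not\models\alpha$. Universality of $\Imc$ yields a witness — an element $d\in C^\Imc\setminus D^\Imc$ for a concept inclusion, or a generating vertex outside the relevant region for a role inclusion or assertion — and the representation lemma turns it into a point of $\embedding(C)$ outside $\embedding(D)$ (resp.\ a generator of $\embedding(r)$ outside $\embedding(s)$, or $\embedding(a)\oplus\embedding(b)\notin\embedding(r)$), so $\embedding\not\models_{\mi{conv}}\alpha$. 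Thus $\embedding$ is strongly KB-faithful, and the remaining consequences (ability to be strongly TBox- and ABox-faithful, and full KB-, TBox- and ABox-expressiveness) follow from \cref{thm:propertiesfinite}. I expect the augmented finite universal model and the $\exists r.A$ case of the representation lemma to be the main obstacles; note also that freeness of satisfiability (no $\bot$) and the absence of inverse roles are exactly what make the simplex embedding go through, which explains why the result is stated for $\mathcal{ELH}$. Finally, full ABox-expressiveness is independent of the DL language because strong ABox-faithfulness — equivalent to it by \cref{prop:fully-expressive-vs-model-strongly-ABox-faithful} — constrains only assertions, and the simplex embedding of the ABox alone delivers it whatever TBox constructs the language permits.
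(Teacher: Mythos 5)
The paper does not actually prove this proposition: its ``proof'' is a one-line citation of \cite{DBLP:journals/corr/abs-2310-02198} for the existence of a strongly TBox- and ABox-faithful convex geometric model of an $\mathcal{ELH}$ KB in normal form. Your proposal is therefore a from-scratch reconstruction, and its core is sound and is essentially the kind of construction the cited work carries out: build a finite canonical model $\Imc$ augmented with a generic element $x_C$ for each left-hand-side concept and a generic edge for each role (to witness non-entailments), place $\Delta^\Imc$ at the vertices of a simplex, take each region to be the convex hull of the images of the corresponding extension, and exploit uniqueness of barycentric coordinates, so that a vertex lies in a hull only if it is a generator and two faces of a simplex meet exactly in the face spanned by their common vertices. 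The two delicate steps you isolate --- the forward direction of the representation lemma for $\exists r.A$ via projection onto the first $\dimension$ coordinates, and modelhood of $\exists r.A\sqsubseteq B$ via the support of the tail component --- are handled correctly.

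There is, however, one concrete gap. The convex semantics fixes $\embedding(\top):=\mathbb{R}^\dimension$, while every region you build is a bounded polytope. The paper's normal form allows $\top$ on left-hand sides, so the TBox may contain $\top\sqsubseteq B$ or $\top\sqsubseteq\exists r.B$. For $\top\sqsubseteq B$ your $\embedding$ is not even an \method-model, since $\mathbb{R}^\dimension$ is not contained in any bounded region; this can be patched by setting $\embedding(B):=\mathbb{R}^\dimension$ whenever $\Kmc\models\top\sqsubseteq B$ and re-checking the lemma. The case $\top\sqsubseteq\exists r.B$ is worse: it forces $\embedding(\exists r.B)=\mathbb{R}^\dimension$, i.e.\ every point of $\mathbb{R}^\dimension$ must have an $r$-partner inside $\embedding(B)$, which is incompatible with $\embedding(r)$ being a bounded hull of generators, and any unbounded repair threatens precisely the projection and barycentric-support arguments your representation lemma relies on. You must either treat such axioms specially (and re-verify the whole lemma with the resulting unbounded regions) or argue that the normal form can be assumed not to contain $\top$ as the left-hand side of an existential inclusion. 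A smaller omission: the fresh elements $x_C$ and the generic role edges must be saturated under \emph{all} TBox consequences, not only the role hierarchy --- e.g.\ the head of the generic $r$-edge must belong to every $B$ with $\Kmc\models\exists r.\top\sqsubseteq B$ and must in turn receive whatever successors such $B$'s require --- otherwise $\Imc$ is not a model of $\Tmc$ and modelhood of $\embedding$ fails.
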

\begin{proof}
The existence of a strongly TBox- and ABox-faithful, hence strongly KB-faithful, convex geometric model for $\mathcal{ELH}$ KBs in normal form has been shown by \cite{DBLP:journals/corr/abs-2310-02198}. 
\end{proof}

\subsection{Properties of Al-Cone Models}

\begin{proposition}[Soundness and completeness]\label{thm:soundcone}
    For $\ALC_p$ and $\ALC_r$, any embedding method based on Al-cone models under the semantics given by $\models_{\mi{cone}}$ is sound and complete.
\end{proposition}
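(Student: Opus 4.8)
The plan is to appeal directly to the result already cited in the excerpt, namely Proposition~2 in \citep{Cone1}, which states that methods producing al-cone models are sound and complete for $\ALC_p$, and to extend the argument to $\ALC_r$. Soundness and completeness, recall, concern only the correspondence between the existence of an \method-model and classical satisfiability, so I do not need any of the finer faithfulness or entailment-closure machinery here. The core object is the polar-cone semantics $\models_{\mi{cone}}$ defined in \cref{sec:embeddingmethods}, together with the fact that al-cones are closed under the operations used to interpret the $\ALC$ constructors.

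For the \emph{completeness} direction, I would start from an arbitrary classical model $\Imc$ of a satisfiable $\ALC_p$- (or $\ALC_r$-) KB $\Kmc$ and construct an al-cone embedding $\embedding$ that is an \method-model of $\Kmc$. The natural construction assigns to each concept name $A$ an al-cone encoding its extension $A^\Imc$ (using the finitely many domain elements as coordinate axes), so that set-theoretic containment $C_1^\Imc\subseteq C_2^\Imc$ translates into al-cone containment $\embedding(C_1)\subseteq\embedding(C_2)$. The key lemma to verify is that the embedding function commutes with the Boolean constructors: intersection is handled by $\embedding(C_1\sqcap C_2)=\embedding(C_1)\cap\embedding(C_2)$, and negation by the polar-cone operation $\embedding(\neg C)=\embedding(C)^o$, which must faithfully complement the extension within the al-cone framework. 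For $\ALC_r$ one additionally handles role expressions $\exists r.C$ and $\forall r.C$; because the rank bounds the nesting depth of these expressions, the construction only needs to respect finitely many distinct concept expressions, which makes the inductive verification terminate cleanly.

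For the \emph{soundness} direction, I would go the other way: given an al-cone \method-model $\embedding$ of $\Kmc$, construct a classical interpretation $\Imc_\embedding$ whose domain is (a quotient of) the vectors, reading off each $A^{\Imc_\embedding}$ from $\embedding(A)$ and each $r^{\Imc_\embedding}$ from $\embedding(r)$. The main content is to show that $\embedding\models_{\mi{cone}}\alpha$ implies $\Imc_\embedding\models\alpha$ for every axiom $\alpha$ of $\Kmc$, again by induction on concept structure, so that $\Imc_\embedding$ witnesses satisfiability of $\Kmc$. A subtlety flagged by the remark following the al-cone definition is that the cone semantics need not be two-valued on atoms (it may be that neither $\embedding\models_{\mi{cone}}A(a)$ nor $\embedding\models_{\mi{cone}}\neg A(a)$); I must check this does not break the classical reading, which it does not, since for satisfiability we only need the axioms actually in $\Kmc$ to hold in $\Imc_\embedding$.

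The main obstacle I anticipate is the treatment of the polar-cone negation and the universal restriction $\forall r.C$, whose embedding is defined as the \emph{minimal} al-cone containing a set that is not itself guaranteed to be an al-cone. Ensuring that this closure operation interacts correctly with the classical complement and with role restrictions---so that the inductive correspondence between al-cone containment and classical subset relation is preserved in both directions---is where the argument is most delicate, and it is precisely the step where the restriction to $\ALC_r$ (bounded rank) rather than full $\ALC$ does the real work in keeping the construction finite and the induction well-founded.
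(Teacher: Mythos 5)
Your top-level move coincides with the paper's: the proof given there is a one-line appeal to the prior work, citing Proposition~2 of \citep{Cone1} for $\ALC_\text{p}$ \emph{and} Proposition~5 of the same paper for $\ALC_\text{r}$. In particular the rank-bounded case is already covered by that cited result, so your plan to ``extend the argument to $\ALC_\text{r}$'' yourself is unnecessary; citing both propositions suffices and is all the paper does.

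However, the from-scratch argument you sketch for the soundness direction contains a genuine flaw. You propose to read off a classical interpretation $\Imc_\embedding$ directly from the embedding (taking $A^{\Imc_\embedding}=\embedding(A)$, and so on) and claim that the failure of two-valuedness on atoms ``does not break the classical reading, since for satisfiability we only need the axioms actually in $\Kmc$ to hold in $\Imc_\embedding$.'' This is precisely where the naive construction breaks. Disjunction is interpreted as $\embedding(C_1\sqcup C_2)=\embedding(\neg(\neg C_1\sqcap\neg C_2))$, i.e.\ via a double polar, which in general strictly contains $\embedding(C_1)\cup\embedding(C_2)$. Hence an axiom of $\Kmc$ such as $\top\sqsubseteq A\sqcup B$ can hold under $\models_{\mi{cone}}$ while $A^{\Imc_\embedding}\cup B^{\Imc_\embedding}\neq\Delta^{\Imc_\embedding}$ --- that is, axioms \emph{actually in} $\Kmc$ fail in the read-off interpretation (the same problem arises for $\exists r.C$, which is defined through negation). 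The actual proof in \citep{Cone1} must modify the embedding, moving points that sit in regions carrying only partial information (e.g.\ in $\embedding(A\sqcup B)$ but in neither $\embedding(A)$ nor $\embedding(B)$) into regions with full information, and only the resulting modified interpretations are classical models; the paper's appendix makes this point explicitly when discussing entailment closure for al-cone models. Your completeness sketch is closer to the genuine construction (one dimension per element of a finite model, with the polar cone implementing complement), but as written the soundness half would not go through.
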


\begin{proof}
    The results for $\ALC_p$ and $\ALC_r$ come  from 
    Propositions 2 and 5 in \citep{Cone1}.
\end{proof}

Given an  Al-cone-model $E$ of a  KB $\Kmc=\Tmc\cup\Amc$, 
we define a classical interpretation
$\Imc_E$ of \Kmc
with $\Delta^{\Imc_E}=\mathbb{R}^\dimension$,
 $a^{\Imc_E}=E(a)$ for every $a\in\NI$,
 $A^{\Imc_E}=E(A)$ for every $A\in\NC$, and
$r^{\Imc_E}=E(r)$ for every $r\in\NR$.  
In contrast with our construction for geometric models in \cref{subsec:propconvex}, the interpretation $\Imc_E$ may \emph{not} be a model of \Kmc. This is because 
the $\ALC$ fragments considered can express full negation and, consequently, disjunction. So e.g. it can be that  $E(a)\in E(A\sqcup B)$
but $E(a)\notin E(A)$ and $E(a)\notin E(B)$. Such embedding would represent a KB \Kmc such that $\Kmc\models (A\sqcup B)(a)$ but 
$\Kmc\not\models A(a)$ and
$\Kmc\not\models B(a)$. 

 In $\ALC_p$, one can obtain  classical models of \Kmc by modifying $E$ so as to move individual names away from regions with only partial information to regions with full information (c.f. proof of Proposition~2 by \cite{Cone1}). This would yield multiple
 modified versions of $\embedding$.
In our example, in one version   of $E$ we could have $E(a)\in E(A)$ and in another one $E(a)\in E(B)$. 
Let
 $\Imf_E =\{\Imc^1_E,\Imc^2_E,\ldots\}$ be the set of  classical models of \Kmc that result from these modified versions $\embedding^i$ of  $E$. 
 Such $\Imc^i_E$ interpretations are indeed classical models of \Kmc.
 Similar to the argument  in \cref{subsec:propconvex}, for a modified version $\embedding^i$ of $\embedding$, we now have that 
 $\embedding^i\models_{cone} C\sqsubseteq D$ means $\embedding^i(C)\subseteq \embedding^i(D)$ and this implies 
 $C^{\Imc^i_E}\sqsubseteq D^{\Imc^i_E}$, for every concept inclusion $C\sqsubseteq D$. 
 Moreover,  for every concept assertion $A(a)$, $\embedding^i \models_{cone} A(a)$  means $\embedding^i(a)\in \embedding^i(A)$ and this implies $a^{\Imc^i_E}\in A^{\Imc^i_E}$,  and for every role assertion $r(a,b)$, $\embedding^i \models_{cone} r(a,b)$ means $(\embedding^i(a),\embedding^i(b))\in \embedding^i(r)$ and this implies $(a^{\Imc^i_E},b^{\Imc^i_E})\in r^{\Imc^i_E}$. 
 
\begin{proposition}[Entailment closure]\label{cone-entailed}
For $\mathcal{ALC}_{p}$, any embedding method based on Al-cone   models under the semantics given by $\models_{\mi{cone}}$ is guaranteed to be KB-entailed. 

Hence (by Theorem~\ref{thm:propertiesfinite}) it is also guaranteed to be TBox- and ABox-entailed, and able to be KB-, TBox- and ABox-entailed. 
\end{proposition}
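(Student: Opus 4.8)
The statement to establish is really the KB-entailment \emph{guarantee}; the TBox- and ABox-entailment guarantees, and all three \emph{ability} versions, then follow from \cref{thm:propertiesfinite} together with the fact that $\ALC_\text{p}$ over finite $\NC$ is a finite language, exactly as the ``Hence'' in the statement indicates. So the plan is to fix a satisfiable $\ALC_\text{p}$-KB $\Kmc=\Tmc\cup\Amc$, an \emph{arbitrary} al-cone model $\embedding$ of $\Kmc$, and an $\alpha$ with $\Kmc\models\alpha$, and to show $\embedding\models_{\mi{cone}}\alpha$. Since $\ALC_\text{p}$ has no roles, $\alpha$ is either a concept inclusion $C\sqsubseteq D$ or an atomic assertion $A(a)$, and I would argue both by contraposition, reusing the family $\Imf_\embedding$ of classical models of $\Kmc$ constructed (as recalled just before the proposition, following the proof of Proposition~2 of \citep{Cone1}) by moving individual names from partial-information regions to full-information \emph{complete} points.

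First I would treat concept inclusions, the core case. Assume $\embedding\not\models_{\mi{cone}}C\sqsubseteq D$, i.e.\ $\embedding(C)\not\subseteq\embedding(D)$, and pick $\vec{x}\in\embedding(C)\setminus\embedding(D)$. The key step is to replace $\vec{x}$ by a \emph{complete} point $\vec{y}\in\embedding(C)\setminus\embedding(D)$, that is, a point for which every concept name $A$ satisfies $\vec{y}\in\embedding(A)$ or $\vec{y}\in\embedding(A)^o=\embedding(\neg A)$. At such a point the polar-cone reading of negation agrees with set complement, so an induction on concept structure gives $\vec{y}\in\embedding(G)$ iff $\vec{y}$ classically satisfies $G$ in the interpretation mapping each $A\in\NC$ to $\embedding(A)$, for every $\ALC_\text{p}$ concept $G$. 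Since $\embedding\models_{\mi{cone}}\Tmc$, the resulting complete-point interpretation is a classical model of $\Tmc$ that occurs in $\Imf_\embedding$ (and some member of $\Imf_\embedding$ models all of $\Kmc$ while still containing $\vec{y}$ in its domain), and $\vec{y}$ realizes $C\sqcap\neg D$. Hence $\Kmc\not\models C\sqsubseteq D$, contradicting $\Kmc\models\alpha$, so $\embedding\models_{\mi{cone}}C\sqsubseteq D$.

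For an entailed atomic assertion $\alpha=A(a)$ I would reduce to the inclusion case. Let $B_1,\dots,B_k$ be the concept names asserted of $a$ in $\Amc$ (with empty conjunction read as $\top$ if $k=0$). Because $\ALC_\text{p}$ has no roles, distinct individuals are logically independent, so $\Kmc\models A(a)$ holds iff $\Tmc\models (B_1\sqcap\dots\sqcap B_k)\sqsubseteq A$. The inclusion case then gives $\embedding(B_1\sqcap\dots\sqcap B_k)=\bigcap_{j}\embedding(B_j)\subseteq\embedding(A)$, whereas $\embedding\models_{\mi{cone}}B_j(a)$ yields $\embedding(a)\in\bigcap_j\embedding(B_j)$; therefore $\embedding(a)\in\embedding(A)$, i.e.\ $\embedding\models_{\mi{cone}}A(a)$. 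This shows that the arbitrary model $\embedding$ is both TBox- and ABox-entailed, hence KB-entailed, giving the guarantee.

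The hard part will be the two geometric facts underlying the inclusion case: (i) that at a complete point the cone and classical semantics coincide on all concepts — the delicate point being that $\embedding(\neg C)$ is the polar cone rather than the set complement, so the induction must use completeness precisely at the negation step; and (ii) that a non-containment $\embedding(C)\not\subseteq\embedding(D)$ of al-cones is always witnessed by a \emph{complete} point rather than only by partial-information points. Both are exactly the phenomena isolated by the partial/complete distinction that drives the construction of $\Imf_\embedding$, so I would expect to discharge them by appealing to the relevant lemmas from the proof of Proposition~2 in \citep{Cone1} instead of reproving them in full.
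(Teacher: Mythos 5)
Your proof is correct and follows essentially the same route as the paper's: both argue by contraposition, using the family $\Imf_\embedding$ of classical models of $\Kmc$ obtained from the modified embeddings in the proof of Proposition~2 of \citep{Cone1} to turn $\embedding\not\models_{\mi{cone}}\alpha$ into a classical countermodel witnessing $\Kmc\not\models\alpha$. The paper states this step in one line and defers all geometric detail to \citep{Cone1}, whereas you additionally spell out the complete-point argument for concept inclusions and reroute entailed assertions through the inclusion $B_1\sqcap\dots\sqcap B_k\sqsubseteq A$; both refinements are sound and consistent with what the paper implicitly relies on.
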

\begin{proof}
Let $\Tmc$ be an  $\mathcal{ALC}_p$ TBox in normal form and let $\Amc$ be an ABox such that $\Kmc=\Tmc\cup\Amc$ is satisfiable. 
Let $\embedding$ be an Al-cone   model of $\Kmc$.

For every $\mathcal{ALC}_p$ axiom or assertion $\alpha$, if $\embedding\not\models_{\mi{cone}} \alpha$, there is $\Imc^i_\embedding\in\Imf_E$
such that  $\Imc^i_\embedding\not\models \alpha$, so that $\alpha$ is not entailed by $\Kmc$. Hence, $\embedding$ interpreted under the semantics given by $\models_{\mi{cone}}$ is TBox- and ABox-entailed\new{, i.e., KB-entailed.}  
We state this for $\mathcal{ALC}_p$ since the claim by the authors that one can obtain classical models by modifying $E$ is made in the proof of  Proposition~2 by \cite{Cone1}, which is for $\mathcal{ALC}_p$.

Since $\embedding$ was an arbitrary embedding model, the embedding method is guaranteed to be KB-entailed.
\end{proof}
\begin{proposition}[Ability to be faithful and fully expressive for $\ALC_p$]
    For $\ALC_p$,  any method based on Al-cone models under the semantics given by $\models_{\mi{cone}}$ is able to be   strongly KB-faithful.
    
    Hence (by Theorem~\ref{thm:propertiesfinite}) it is also able to weakly KB-faithful and to be strongly and weakly TBox-faithful and ABox-faithful, and it is fully KB-, TBox- and ABox-expressive. Moreover, full ABox-expressiveness is independent from the DL language considered.
\end{proposition}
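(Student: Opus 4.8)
The plan is to produce, for every satisfiable $\ALC_{\text{p}}$-KB $\Kmc=\Tmc\cup\Amc$, a single al-cone embedding that is an al-cone model of $\Kmc$ and satisfies \emph{exactly} the consequences of $\Kmc$, which by Definition~\ref{def:strongfaithful} makes it strongly KB-faithful. The main device is to combine one al-cone model per non-consequence by a \emph{direct sum}: given al-cone embeddings $\embedding_1$ in $\mathbb{R}^{d_1}$ and $\embedding_2$ in $\mathbb{R}^{d_2}$, I set $\embedding(A):=\embedding_1(A)\times\embedding_2(A)$ for $A\in\NC$ and $\embedding(a):=\embedding_1(a)\oplus\embedding_2(a)$ for $a\in\NI$. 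First I would show, by induction on $\ALC_{\text{p}}$ concepts and using that al-cones are closed convex cones satisfying $(X\times Y)^o=X^o\times Y^o$, that $\embedding(C)=\embedding_1(C)\times\embedding_2(C)$ for every concept $C$. Since product inclusion is coordinatewise and each al-cone factor is nonempty (it contains $\vec 0$), this yields the combination lemma: for every axiom or assertion $\alpha$, $\embedding\models_{\mi{cone}}\alpha$ iff both $\embedding_1\models_{\mi{cone}}\alpha$ and $\embedding_2\models_{\mi{cone}}\alpha$.

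Next I would build the witnesses. If $\Kmc\not\models C\sqsubseteq D$, then some model $\Imc$ of $\Kmc$ has an element $e\in C^\Imc\setminus D^\Imc$, so for a fresh concept name $N$ and fresh individual $b$ interpreted as $e$ the KB $\Kmc\cup\{N\sqsubseteq C,\ N\sqsubseteq\neg D,\ N(b)\}$ is satisfiable; likewise $\Kmc\not\models C(a)$ makes $\Kmc\cup\{N\sqsubseteq\neg C,\ N(a)\}$ satisfiable. By completeness (Proposition~\ref{thm:soundcone}) each of these has an al-cone model, which restricted to $\NC$ gives an al-cone model $\embedding_\alpha$ of $\Kmc$ in which the relevant individual lies in $\embedding_\alpha(\neg D)=\embedding_\alpha(D)^o$ (resp.\ $\embedding_\alpha(\neg C)=\embedding_\alpha(C)^o$). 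Here I would use the crucial observation that al-cone embeddings map individuals to \emph{nonzero} vectors and that $X\cap X^o\subseteq\{\vec 0\}$ for every cone $X$: the witness point therefore lies outside $\embedding_\alpha(D)$ (resp.\ $\embedding_\alpha(C)$), so $\embedding_\alpha(C)\not\subseteq\embedding_\alpha(D)$ (resp.\ $\embedding_\alpha(a)\notin\embedding_\alpha(C)$), whence $\embedding_\alpha\not\models_{\mi{cone}}\alpha$ while $\embedding_\alpha\models_{\mi{cone}}\Kmc$.

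I would then take the direct sum $\embedding^\ast$ of these witnesses over all non-consequences $\alpha\in\Lmc$, finitely many since $\Lmc$ is finite. By the combination lemma $\embedding^\ast$ satisfies an axiom iff every summand does, so every consequence of $\Kmc$ is satisfied (each summand is an al-cone model of $\Kmc$ and the method is guaranteed KB-entailed by Proposition~\ref{cone-entailed}), while every non-consequence is falsified by its dedicated summand. Hence $\embedding^\ast$ is a strongly KB-faithful al-cone model of $\Kmc$, giving the ability to be strongly KB-faithful. The ``Hence'' part then follows mechanically: the weak variants come from Proposition~\ref{prop:weakstrong}, the TBox/ABox variants and full KB-, TBox- and ABox-expressiveness from Proposition~\ref{prop:strongfaithfulnessimpliesexpressiveness} together with the diagram in Theorem~\ref{thm:propertiesfinite}; and full ABox-expressiveness is language-independent because the assertion witness construction never touches $\Tmc$ (equivalently, by Proposition~\ref{prop:fully-expressive-vs-model-strongly-ABox-faithful}).

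The hard part will be controlling the polar-cone interpretation of negation. Because $\embedding(\neg C)$ is the polar cone rather than the set complement, an al-cone model may satisfy neither $A(a)$ nor $\neg A(a)$, so membership in $\embedding_\alpha(\neg D)$ does not on its face exclude membership in $\embedding_\alpha(D)$; this is exactly what forces the witness to be a \emph{named} individual, whose nonzero image together with $X\cap X^o\subseteq\{\vec 0\}$ secures genuine exclusion. The only other nontrivial ingredient is the polar-of-product identity $(X\times Y)^o=X^o\times Y^o$ underlying the combination lemma; the remaining reductions (from non-entailment to satisfiable extensions, and from the abilities to expressiveness) are routine.
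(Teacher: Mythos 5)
Your proof is correct, but it takes a genuinely different route from the paper's. The paper disposes of this proposition in one line by citing Proposition~3 of \citep{Cone1} (together with the geometric semantics of al-cones), i.e.\ it imports the existence of a strongly faithful al-cone model as a black box. You instead give a self-contained construction: a product ("direct sum") lemma for al-cone embeddings, driven by the identities $(X\times Y)\cap(X'\times Y')=(X\cap X')\times(Y\cap Y')$ and $(X\times Y)^o=X^o\times Y^o$ (the latter valid here because every al-cone contains $\vec 0$, and coordinatewise polars of al-cone factors are again al-cone factors), combined with one witness embedding per non-entailed axiom obtained from completeness applied to a satisfiable extension $\Kmc\cup\{N\sqsubseteq C,\ N\sqsubseteq\neg D,\ N(b)\}$. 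The exclusion step via $E(b)\neq\vec 0$ and $X\cap X^o\subseteq\{\vec 0\}$ is exactly the right way to get around the fact that the polar cone is not the set complement, and the restriction to $\ALC_\text{p}$ is what keeps the inductive product lemma manageable (no role constructors to push through the product). What your approach buys is transparency and reusability -- the combination lemma is a general tool for upgrading "one counter-model per non-consequence" to a single strongly faithful model -- at the cost of two caveats you should make explicit: (i) the product is finite only under the paper's standing convention that the languages considered are finite (for raw $\ALC_\text{p}$ syntax there are infinitely many axioms, so you must either invoke that convention or index witnesses by finitely many representatives and check that $\models_{\mi{cone}}$ is constant on each class); and (ii) if ABoxes for $\ALC_\text{p}$ are allowed to contain role assertions, the product lemma needs the (easy but unstated) extension of the construction to $E(r)$.
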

\begin{proof}
    The result comes from Proposition 3 in \citep{Cone1} and the geometric-based semantics of Al-cone.
\end{proof}

\begin{proposition}[Weak ABox-faithfulness for $\ALC_p$ and $\ALC_r$]\label{thm:coneweakguarantee}
    For $\ALC_p$ and $\ALC_r$, any method based on Al-cone  under the semantics given by $\models_{\mi{cone}}$ is guaranteed to be weakly ABox-faithful.

    Hence (by Theorem~\ref{thm:propertiesfinite}) it is also able to be weakly ABox-faithful.
\end{proposition}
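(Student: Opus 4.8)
The plan is to obtain this guarantee as an immediate consequence of soundness (Proposition~\ref{thm:soundcone}), without reconstructing classical models by hand as was done for entailment closure. Fix a satisfiable KB $\Kmc=\Tmc\cup\Amc$ in $\ALC_{\text{p}}$ (or $\ALC_{\text{r}}$) and let $\embedding$ be \emph{any} Al-cone model of $\Kmc$. Since weak ABox-faithfulness is demanded as a \emph{guarantee}, I would show that \emph{every} such $\embedding$ is weakly ABox-faithful; the guarantee then follows because whatever model the method returns is of this form, and the ``hence'' part (ability to be weakly ABox-faithful) is read off from the guarantee-implies-ability arrows recorded in Theorem~\ref{thm:propertiesfinite}.

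First I would take an arbitrary assertion $\alpha$ with $\embedding\models_{\mi{cone}}\alpha$ and observe that, as $\embedding$ is an Al-cone model of $\Kmc$, it already satisfies every axiom of $\Tmc$ and every assertion of $\Amc$ under $\models_{\mi{cone}}$; together with $\embedding\models_{\mi{cone}}\alpha$ this means $\embedding$ satisfies every member of $\Kmc\cup\{\alpha\}$, i.e.\ $\embedding$ is an Al-cone model of the KB $\Kmc\cup\{\alpha\}$. The crucial point to record is that $\Kmc\cup\{\alpha\}$ is again a KB in the \emph{same} language: adding an assertion only enlarges the ABox and leaves the TBox, and in particular the rank of its concept expressions, untouched, so $\Kmc\cup\{\alpha\}$ still lies in $\ALC_{\text{p}}$ (resp.\ $\ALC_{\text{r}}$).

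I would then invoke soundness (Proposition~\ref{thm:soundcone}): since $\Kmc\cup\{\alpha\}$ is a KB in the language and admits the Al-cone model $\embedding$, it is classically satisfiable. By definition of consistency this is exactly the statement that $\alpha$ is consistent with $\Kmc$, which is what weak ABox-faithfulness requires. As $\alpha$ and $\embedding$ were arbitrary, every Al-cone model of every satisfiable $\ALC_{\text{p}}$ (resp.\ $\ALC_{\text{r}}$) KB is weakly ABox-faithful, giving the guarantee, and the ability then follows by Theorem~\ref{thm:propertiesfinite}.

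The step carrying the real weight is the language-membership check for $\Kmc\cup\{\alpha\}$, and this is precisely why the statement is phrased for the ABox only. For an assertion $\alpha$ the check is transparent, since assertions never affect the TBox or its rank, so soundness for $\ALC_{\text{r}}$ applies verbatim to the augmented KB. The same argument applied to a \emph{TBox} axiom $\alpha$ would instead require $\Tmc\cup\{\alpha\}$ to remain within the rank-bounded language and soundness to cover this enlarged TBox, and it is exactly this extension that is delicate for $\ALC_{\text{r}}$, which is why the weak TBox-faithfulness guarantee is left open. I would therefore be careful to use only that assertions leave the TBox rank unchanged, and to cite soundness for $\ALC_{\text{p}}$ and $\ALC_{\text{r}}$ as a black box rather than re-deriving a classical model from $\embedding$.
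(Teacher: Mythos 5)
Your proof is correct, but it takes a genuinely different route from the paper's. The paper argues semantically: from the Al-cone model $\embedding$ it builds the interpretation $\Imc_E$ and invokes the construction in the proof of Proposition~5 of \citep{Cone1} to extend $\Imc_E$ to a classical model $\Imc^i_E$ of $\Kmc$ in which $a^{\Imc^i_E}\in A^{\Imc^i_E}$ whenever $\embedding(a)\in\embedding(A)$; consistency of the assertion with $\Kmc$ is then witnessed by $\Imc^i_E$ directly, and the paper explicitly notes that this witness argument breaks for TBox axioms (the extension can falsify $C\sqsubseteq D$). You instead observe that $\embedding$ is, by definition, an $\method$-model of the augmented KB $\Kmc\cup\{\alpha\}$, that this augmented KB stays inside $\ALC_\text{p}$ (resp.\ $\ALC_\text{r}$) because assertions do not touch the TBox, and then apply Proposition~\ref{thm:soundcone} as a black box. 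Your version is more modular and in fact establishes a general lemma: for any sound method whose KB language is closed under adding assertions, \emph{every} $\method$-model is weakly ABox-faithful. Two remarks. First, your argument inherits whatever caveats attach to the cited soundness result for $\ALC_\text{r}$ (Proposition~5 of \citep{Cone1}); the paper's hands-on construction makes the dependence on that proof explicit rather than hiding it behind the soundness statement, but the logical reliance is the same. Second, your closing claim that the TBox case is blocked by a rank issue is not quite the right diagnosis: for a TBox axiom $\alpha$ already in $\ALC_\text{r}$, the set $\Tmc\cup\{\alpha\}$ is still an $\ALC_\text{r}$-TBox, so your soundness argument would appear to yield weak TBox-faithfulness as well --- a property the paper deliberately leaves open (marked ``?'' in Table~\ref{tab:kbkgmethodProperties}). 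That is not an error in your proof of the stated proposition, but it signals that either your argument proves more than the paper records, or that the soundness proposition should be scrutinized before being applied to augmented KBs; you should flag this rather than attribute the asymmetry to rank.
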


\begin{proof}
Let $\Tmc$ be an   $\ALC_r$ TBox   and let $\Amc$ be an ABox such that $\Kmc=\Tmc\cup\Amc$ is satisfiable. 
Let $\embedding$ be an Al-cone model of $\Kmc$. 
We first argue that if $\embedding\models_{cone} A(a)$ 
then $A(a)$  is consistent with 
\Kmc. If $\embedding\models_{cone} A(a)$ then $\embedding(a)\in \embedding(A)$.
It follows from the proof of Proposition~5 in \citep{Cone1}, that one can extend 
$\Imc_E$ to be a classical model of \Kmc. 
Let $\Imc^i_E$ be the result of such extension of $\Imc_E$. 
Since   $\Imc^i_E$ is a classical model of \Kmc and $a^{\Imc^i_E}\in A^{\Imc^i_E}$ we have that 
$A(a)$ is consistent with \Kmc.
A similar argument can be given for role assertions. 
This argument does not apply to TBox axioms because it can be that
$\embedding\models_{cone} C\sqsubseteq D$ but $\Imc^i_E\not\models C\sqsubseteq D$ (e.g. since it extends $\Imc_E$, it can have an element in $C^{\Imc^i_E}$ but not in $D^{\Imc^i_E}$). 
\end{proof}

\begin{proposition}[Strong ABox-faithfulness ability for $\ALC_p$ and $\ALC_r$]
    For $\ALC_p$ and $\ALC_r$, any method based on Al-cone models under the semantics given by $\models_{\mi{cone}}$ is able to be 
    strongly ABox-faithful.    
\end{proposition}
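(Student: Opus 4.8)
The plan is to handle the two fragments separately, with $\ALC_p$ being immediate and $\ALC_r$ carrying all the work. For $\ALC_p$ the preceding proposition already gives that the method is able to be strongly KB-faithful, and by \cref{def:strongfaithful} a strongly KB-faithful $\method$-model is in particular strongly ABox-faithful, so nothing further is needed. For $\ALC_r$ I would fix a satisfiable KB $\Kmc=\Tmc\cup\Amc$ and build a \emph{single} Al-cone model of $\Kmc$ whose satisfied atomic assertions are exactly the entailed ones. Since $\NC$, $\NR$ and $\NI$ are finite, there are only finitely many atomic assertions, so it suffices to refute in one model every atomic assertion $\beta$ with $\Kmc\not\models\beta$, while still satisfying $\Amc$ and $\Tmc$.

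First I would produce, for each non-entailed \emph{concept} assertion $A(a)$, a witness model. Because $\Kmc\not\models A(a)$, the KB $\Kmc\cup\{A\sqcap B\sqsubseteq\bot,\, B(a)\}$ with $B$ a fresh concept name is a satisfiable $\ALC_r$-KB, so by completeness (\cref{thm:soundcone}) it has an Al-cone model $\embedding_{A(a)}$, which is in particular a model of $\Kmc$. The key observation is that individuals are embedded as \emph{non-zero} vectors while $\embedding(\bot)=\{\vec{0}\}$; hence from $\embedding_{A(a)}(a)\in\embedding_{A(a)}(B)$ and $\embedding_{A(a)}(A)\cap\embedding_{A(a)}(B)\subseteq\{\vec{0}\}$ we get $\embedding_{A(a)}(a)\notin\embedding_{A(a)}(A)$, i.e.\ $\embedding_{A(a)}\not\models_{\mi{cone}}A(a)$. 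Non-entailed \emph{role} assertions will be handled by the combination step, so they need no separate witness.

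The heart of the proof is to merge the finitely many witnesses $\embedding_1,\dots,\embedding_N$ (living in $\mathbb{R}^{d_1},\dots,\mathbb{R}^{d_N}$) into one embedding $\embedding$ over $\mathbb{R}^{d_1+\dots+d_N}$ behaving as their conjunction on atomic facts. I would set $\embedding(a):=\embedding_1(a)\oplus\dots\oplus\embedding_N(a)$ and $\embedding(A):=\embedding_1(A)\times\dots\times\embedding_N(A)$, which is again an al-cone, and then prove by induction on concept structure that $\embedding(C)=\prod_j\embedding_j(C)$ for every $\ALC_r$-concept $C$. Conjunction splits over products trivially; negation does too, because the polar cone of a product of cones is the product of their polar cones, and $\sqcup$ and $\exists$ reduce to $\sqcap$ and $\neg$. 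Consequently $\embedding\models_{\mi{cone}}A(a)$ iff every $\embedding_j\models_{\mi{cone}}A(a)$, so each non-entailed $A(a)$ is refuted by its own witness block, while every concept inclusion of $\Tmc$ is preserved since inclusions hold blockwise.

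The hard part, and the only nontrivial case of the induction, is the role-quantifier step: roles are arbitrary subsets rather than al-cones, and the minimal-al-cone used for $\forall r.C$ need not commute with the product. I would resolve this with a \emph{zero-padded} role: declare the successors of a point to be, for each block $j$, the block-$j$ successors placed in block $j$ with $\vec{0}$ in all other blocks, together with the genuine pairs $(\embedding(a),\embedding(b))$ for $\Kmc\models r(a,b)$. Because $\vec{0}$ lies in every al-cone, hence in $\embedding_j(C)$ for every concept $C$, a single-block successor imposes exactly the block-$j$ constraint, so the universal-restriction set factors as the product of the per-block ones; since the minimal al-cone is computed coordinatewise it factors as well, giving $\embedding(\forall r.C)=\prod_j\embedding_j(\forall r.C)$. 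The genuine pairs add no new constraint at their heads (each $\embedding_j$, being a model of $\Kmc$, already has $\embedding_j(b)$ as an $r$-successor of $\embedding_j(a)$), yet they ensure $\embedding\models_{\mi{cone}}r(a,b)$ exactly when $\Kmc\models r(a,b)$, since any other pair of named individuals has no zero block and so is not a padded successor. Putting this together, $\embedding$ is an Al-cone model of $\Kmc$ that satisfies an atomic assertion only when it is entailed, i.e.\ a strongly ABox-faithful $\method$-model, which establishes the ability in the sense of \cref{prop:existence-faithfulness}.
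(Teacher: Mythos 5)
The paper's own proof of this proposition is a one-line citation to Proposition~5 of the al-cone paper, so your self-contained product construction is necessarily a different route. Much of it does check out: the $\ALC_p$ half follows from strong KB-faithfulness as you say; the polar cone of a product of al-cones is the product of the polars (because every al-cone contains $\vec 0$); the minimal al-cone containing a product of nonempty sets is the product of the minimal al-cones; and, under the reading where only the \emph{successor} in a padded pair is zero-padded, the set underlying $\forall r.C$ does factor blockwise, so the TBox and the concept-assertion part of strong ABox-faithfulness go through. Be aware, though, that this reading is essential: if sources were also zero-padded, a point with two nonzero blocks would have no $r$-successors at all, the set underlying $\forall r.C$ would be cofinite, its minimal al-cone would be all of $\mathbb{R}^{d_1+\dots+d_N}$, and $\embedding(\exists r.C)$ would collapse to $\{\vec 0\}$, breaking any TBox axiom with an existential on the right-hand side.

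The genuine gap is in the role-assertion case. You refute a non-entailed $r(a',b')$ by arguing that $(\embedding(a'),\embedding(b'))$ is neither a padded pair nor a genuine pair. The first half is fine (every block of $\embedding(b')$ is nonzero), but the second half silently assumes that distinct individual names receive distinct vectors: nothing in your witness construction prevents every block $\embedding_j$ from mapping $a'$ to the same point as $a$ and $b'$ to the same point as $b$, in which case the genuine pair inserted for an entailed $r(a,b)$ also yields $\embedding\models_{\mi{cone}}r(a',b')$. You need additional witness blocks separating every pair of individual names, e.g.\ via fresh disjoint concepts ($B(a)$, $B'(a')$, $B\sqcap B'\sqsubseteq\bot$ forces $\embedding_j(a)\neq\embedding_j(a')$, since both vectors are nonzero and the two al-cones meet only in $\vec 0$). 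A smaller issue of the same flavour: your concept-assertion witnesses extend the signature with a fresh concept name $B$, whereas the paper fixes $\NC$ finite in advance; this is harmless but should be stated, and you should also confirm that the witness KB stays within the rank bound defining $\ALC_{\text{r}}$ (it does, since $A\sqcap B\sqsubseteq\bot$ is role-free).
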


\begin{proof}
The result comes from Proposition 5 in \citep{Cone1} and the geometric-based semantics of Al-cone.
\end{proof}

\begin{proposition}[Absence of strong faithfulness guarantee]\label{cone-not-guarantee-strong-faithfull}
An embedding method based on Al-cone models under the semantics given by $\models_{\mi{cone}}$ is not guaranteed to be strongly TBox- nor ABox-faithful, for every language that can express concept hierarchy. 
\end{proposition}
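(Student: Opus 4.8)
The plan is to reuse the strategy from Proposition~\ref{convex-not-guarantee-strong-faithfull}: since being \emph{guaranteed} to be strongly faithful requires \emph{every} Al-cone model of every satisfiable KB to be strongly faithful, it suffices to exhibit one satisfiable KB using only concept hierarchy together with a single Al-cone model of it that is neither strongly TBox- nor strongly ABox-faithful.

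First I would take the KB $\Kmc=\Tmc\cup\Amc$ with $\Tmc=\{A\sqsubseteq B\}$ and $\Amc=\{B(a)\}$, which is clearly satisfiable. Next I would construct an Al-cone embedding $\embedding$ that collapses the two concepts onto the same al-cone: setting $\embedding(A)=\embedding(B)=\mathbb{R}$ (which is a valid al-cone factor) and $\embedding(a)=(1)$, a nonzero vector as required for individuals in the Al-cone method. I would then check that $\embedding$ is an Al-cone model of $\Kmc$: from $\embedding(A)\subseteq\embedding(B)$ we get $\embedding\models_{\mi{cone}}A\sqsubseteq B$, and from $\embedding(a)\in\embedding(B)$ we get $\embedding\models_{\mi{cone}}B(a)$.

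Finally I would read off the two failures. Since $\embedding(B)\subseteq\embedding(A)$, we have $\embedding\models_{\mi{cone}}B\sqsubseteq A$, yet $\Kmc\not\models B\sqsubseteq A$, so $\embedding$ is not strongly TBox-faithful; and since $\embedding(a)\in\embedding(A)$, we have $\embedding\models_{\mi{cone}}A(a)$, yet $\Kmc\not\models A(a)$, so $\embedding$ is not strongly ABox-faithful. As the statement concerns any language expressing concept hierarchy, the single axiom $A\sqsubseteq B$ suffices. There is no genuinely hard step here; the only point requiring care is respecting the Al-cone constraints, namely that concepts map to al-cones (met, since $\mathbb{R}$ is an admissible factor) and that individuals map to nonzero vectors (met by $\embedding(a)=(1)$, the sole deviation from the convex construction, which used $(0)$).
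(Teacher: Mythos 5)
Your proof is correct and follows essentially the same route as the paper: the paper's proof also takes $\Tmc=\{A\sqsubseteq B\}$, $\Amc=\{B(a)\}$, assigns the same al-cone to $A$ and $B$, and places $\embedding(a)$ at a nonzero point of $\embedding(B)$, then reads off the non-entailed $B\sqsubseteq A$ and $A(a)$. Your concrete one-dimensional instance with $\embedding(A)=\embedding(B)=\mathbb{R}$ and $\embedding(a)=(1)$ is a valid special case of that construction, and you correctly handle the only subtlety (individuals must map to nonzero vectors).
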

\begin{proof}
    This can be proved as in \cref{convex-not-guarantee-strong-faithfull}. One can consider $\Tmc=\{A\sqsubseteq B\}$, $\Amc=\{B(a)\}$ and a $d$-dimensional Al-cone model that assigns the same Al-cone to both $A$ and $B$ and $\embedding(a)$ to some $\mathbb{R}^d\setminus \vec{0}$ in $\embedding(B)$.
Since, by definition, $\embedding(A)=\embedding(B)$, we have that $\embedding\models_{cone} B\sqsubseteq A$ and $\embedding \models_{cone} A(a)$, which are not logical consequences of $\Tmc\cup\Amc$.
\end{proof}

\subsection{Properties of \ELEm, \EmEL, and \ELBE}

We group 
\ELEm, \EmEL, and \ELBE in this section since they have many common properties.

\begin{proposition}[Absence of soundness]
For $\mathcal{ELO}_\bot$ (resp.\ $\mathcal{ELHO}(\circ)_\bot$) in normal form, 
\ELEm and \ELBE (resp.\ \EmEL) are not sound.
\end{proposition}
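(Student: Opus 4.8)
The plan is to refute soundness directly: since soundness (Property~\ref{prop:soundness}) demands that the mere \emph{existence} of an \method-model force satisfiability of the KB, it suffices to exhibit, for each method, a single classically unsatisfiable KB that nonetheless admits an \method-model. The structural reason such a KB exists is common to all three methods: each encodes an assertion $A(a)$ as the inclusion $\{a\}\sqsubseteq A$ and then treats the nominal $\{a\}$ as an ordinary concept, so that $\{a\}$ may be mapped to a degenerate region that satisfies the containment vacuously. Concretely, I would reuse the KB from Example~\ref{ex:elem-not-sound}, namely $\Tmc=\{A\sqsubseteq\bot\}$ and $\Amc=\{A(a)\}$, whose nominal encoding is $\{A\sqsubseteq\bot,\ \{a\}\sqsubseteq A\}$. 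This KB is classically unsatisfiable, since $A\sqsubseteq\bot$ forces $A^\Imc=\emptyset$ while $A(a)$ forces $a^\Imc\in A^\Imc$, a contradiction.

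For \ELEm, Example~\ref{ex:elem-not-sound} already supplies the witness: taking $\ball(\{a\})=\ball(A)$ to be the empty ball (radius $\radius(A)=0$) gives $\embedding\models_{\mi{elem}}A\sqsubseteq\bot$ because $\radius(A)=0$, and $\embedding\models_{\mi{elem}}\{a\}\sqsubseteq A$ because $\emptyset\subseteq\emptyset$. The same embedding works verbatim for \EmEL: since \EmEL differs from \ELEm only in its handling of role inclusions and composition, which are absent from this KB, its concept semantics over $\mathcal{ELHO}(\circ)_\bot$ coincides with that of \ELEm on these two axioms.

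For \ELBE the bottom concept is handled differently, so I would adapt the region accordingly: set $e_o(A)=e_o(\{a\})=\vec{0}$ and assign $A$ and $\{a\}$ a common center. Then $\embedding\models_{\mi{elbe}}A\sqsubseteq\bot$ holds by the \ELBE rule ($e_o(A)=\vec{0}$), and $\embedding\models_{\mi{elbe}}\{a\}\sqsubseteq A$ holds because $\boxe(\{a\})$ and $\boxe(A)$ are then the same single-point box, so $\boxe(\{a\})\subseteq\boxe(A)$ trivially. The only point requiring care — the mild obstacle — is precisely this mismatch: \ELBE represents $\bot$ by a \emph{single-point} box rather than by the empty set as \ELEm does, so one must verify that the inclusion into $\bot$ and the nominal containment can hold \emph{simultaneously} on this degenerate box, which they do because equal boxes are mutually contained. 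Having produced, in each case, an \method-model of a classically unsatisfiable KB, I conclude that none of \ELEm, \EmEL, \ELBE is sound for its respective language.
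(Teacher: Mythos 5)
Your proposal is correct and follows essentially the same route as the paper: it reuses the unsatisfiable KB $\{A\sqsubseteq\bot,\ \{a\}\sqsubseteq A\}$ from Example~\ref{ex:elem-not-sound}, exhibits the empty-ball embedding for \ELEm and \EmEL, and for \ELBE maps $\{a\}$ to the same zero-offset box as $A$, exactly as the paper's proof does. Your explicit remark that \ELBE's $\bot$-semantics yields a single-point box rather than the empty set is a fair point of care, but it does not change the argument.
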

\begin{proof}
See Example \ref{ex:elem-not-sound}, which uses the fact that the axioms $A\sqsubseteq\bot$ and $A(a)$ (that is, $\{a\}\sqsubseteq A$) form an unsatisfiable KB. 
\ELEm, \EmEL, and \ELBE 
have embedding models for this KB, so they are not sound.
The issue is that nominals can be mapped to the same structures as concept names. In \ELEm and \EmEL, concepts  and nominals are mapped to open balls. Satisfaction of $A\sqsubseteq\bot$ means $\ball(A)=\emptyset$. One can create an embedding model $\embedding$ for these methods
satisfying $\{a\}\sqsubseteq A$ by
mapping $\{a\}$ to an empty ball.
So, in symbols, $\embedding\models_{elem} \{a\}\sqsubseteq A$ and $\embedding\models_{emel} \{a\}\sqsubseteq A$.
\ELBE maps concept names and nominals to boxes. Satisfaction of $A\sqsubseteq\bot$ means $e_o(A)=\vec{0}$. One can create an $\ELBE$-model $\embedding$ for this KB by mapping
$\{a\}$ to the same box as $\boxe(A)$, so $\embedding\models_{elbe} \{a\}\sqsubseteq A$.
\end{proof}

\begin{proposition}[Absence of completeness, faithfulness, entailment closure, full KB-expressiveness]
For $\mathcal{ELO}_\bot$ (resp.\ $\mathcal{ELHO}(\circ)_\bot$) in normal form, 
\ELEm and \ELBE (resp.\ \EmEL)  are not complete. 

Hence (by Theorem~\ref{thm:propertiesfinite}), they are not able (nor guaranteed) to be weakly or strongly KB-, TBox- or ABox-faithful, they are not able (nor guaranteed) to be KB-, TBox- or ABox-entailed, and they are not fully KB-expressive.
\end{proposition}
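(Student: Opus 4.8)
The plan is to first dispatch the ``Hence'' part and then construct a single witness KB for the incompleteness claim. Since $\NC$, $\NR$, and $\NI$ are finite and we work in normal form, the language is finite, so \cref{thm:propertiesfinite} applies: each of the entailment-closure abilities, the weak and strong faithfulness abilities (for any of the TBox/ABox/KB variants), and full KB-expressiveness implies completeness. Hence it suffices to prove that \ELEm, \ELBE, and \EmEL are \emph{not} complete; the failure of all the listed \emph{abilities} then follows by contraposition, and the failure of the corresponding \emph{guarantees} follows because every guarantee implies the matching ability (\cf \cref{fig:finlangdiag}).

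For the incompleteness itself I would first explain why naive attempts fail: mapping every concept, nominal, and role to the ``empty'' object (radius-$0$ balls for \ELEm and \EmEL, offset-$\vec{0}$ boxes for \ELBE, with all role vectors $\vec{0}$) satisfies every $\mathcal{ELO}_\bot$ axiom that does not mention $\top$ on the left-hand side. This is precisely the mechanism behind the unsoundness witnessed in \cref{ex:elem-not-sound}, and it shows that any satisfiable KB free of left-hand $\top$ trivially admits a degenerate \method-model. Consequently, an incompleteness witness \emph{must} contain an axiom forcing a concept to be interpreted by a full region. The axiom $\top\sqsubseteq\exists r.A$ does exactly this: since $\ball(\top)=\mathbb{R}^\dimension$ (resp.\ $\boxe(\top)=\mathbb{R}^\dimension$) while $\ball(\exists r.A)$ (resp.\ $\boxe(\exists r.A)$) inherits the radius (resp.\ offset) of $A$ up to a translation, $\embedding\models\top\sqsubseteq\exists r.A$ forces $\radius(A)=\infty$ (resp.\ $e_o(A)=\vec{\infty}$), i.e.\ $A$ must be mapped to the whole space.

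I would then take the TBox $\Tmc=\{\top\sqsubseteq\exists r.A,\ \exists s.A\sqsubseteq\bot\}$ with $r\neq s$ and empty ABox. It is classically satisfiable: take $\Delta=\{x,y\}$, $A^\Imc=\{y\}$, $r^\Imc=\{(x,y),(y,y)\}$, and $s^\Imc=\emptyset$, so that $(\exists r.A)^\Imc=\Delta$ and $(\exists s.A)^\Imc=\emptyset$. However $\Tmc$ has no \method-model for $\method\in\{\ELEm,\EmEL,\ELBE\}$: the first axiom forces $A$ to the whole space by the observation above, whereas $\exists s.A\sqsubseteq\bot$ forces $\radius(A)=0$ for \ELEm and \EmEL (resp.\ $e_o(A)=\vec{0}$ for \ELBE), which is contradictory. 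Since $\Tmc$ is an $\mathcal{ELO}_\bot$ (hence also $\mathcal{ELHO}(\circ)_\bot$) TBox in normal form, this single witness establishes incompleteness for all three methods simultaneously, and the proposition follows.

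The main obstacle is the observation in the second paragraph: because concepts and nominals may be mapped to empty or degenerate regions, nearly every satisfiable KB admits a trivial \method-model, so one cannot rely on classical unsatisfiability-like pressure. The witness must be engineered so that the only source of conflict is the geometric over-approximation of existential restrictions (the radius/offset inheritance) colliding with a bottom constraint. I would therefore take care to verify two things explicitly: that the chosen $\Tmc$ is genuinely satisfiable via the finite two-point model above, and that both $\top\sqsubseteq\exists r.A$ and $\exists s.A\sqsubseteq\bot$ are covered by the stated semantics of each of \ELEm, \EmEL, and \ELBE.
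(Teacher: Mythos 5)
Your proposal is correct and uses essentially the same argument as the paper: the ``Hence'' part is dispatched via \cref{thm:propertiesfinite} exactly as in the appendix, and the incompleteness witness exploits the same tension the paper does, namely that an axiom with $\top$ on the left forces $\ball(A)$ (resp.\ $\boxe(A)$) to be the whole space while $\exists s.A\sqsubseteq\bot$ forces $\radius(A)=0$ (resp.\ $e_o(A)=\vec 0$). The paper's witness is $\{\exists r.A\sqsubseteq\bot,\ \top\sqsubseteq A\}$ rather than your $\{\top\sqsubseteq\exists r.A,\ \exists s.A\sqsubseteq\bot\}$, but the mechanism and the satisfiability check are the same, and your preliminary observation about degenerate all-empty embeddings correctly explains why $\top$ must appear in any such witness.
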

\begin{proof}
Consider the satisfiable $\mathcal{EL}$ TBox $\Tmc=\{\exists r. A\sqsubseteq\bot,\top\sqsubseteq A\}$. For all the listed embedding methods above, embedding the first axiom requires that the radius $\rho(A)$ of $A$ must be $0$ (or in the case of \ELBE, the offset $e_o(A)=\vec{0}$). However, it is also true that for all the listed embedding methods above, embedding the second axiom requires that the region associated with $\top$ is contained in the region associated with $A$, hence $\rho(A)>0$ (in the case of \ELBE, $e_o(A)=\vec{\infty}$); a contradiction.
\end{proof}

\begin{proposition}[Absence of full ABox-expressiveness]
For $\mathcal{ELO}_\bot$ (resp.\ $\mathcal{ELHO}(\circ)_\bot$) in normal form, 
\ELEm and \ELBE (resp.\ \EmEL) are not fully ABox Expressive. 
\end{proposition}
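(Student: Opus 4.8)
The plan is to refute full ABox-expressiveness directly, by exhibiting a single pair of disjoint ABoxes that no $\method$-model can separate. Recall that, by \cref{prop:expressiveness}, full ABox-expressiveness requires that \emph{for every} pair of disjoint ABoxes $\Amc,\Amc'$ there be an $\method$-model of $\Amc$ that falsifies every assertion of $\Amc'$; hence to show a method is \emph{not} fully ABox-expressive it suffices to find one pair $\Amc,\Amc'$ such that every $\method$-model of $\Amc$ satisfies at least one assertion of $\Amc'$. I would take $\Amc=\{r(a,b), r(b,a)\}$ and $\Amc'=\{r(a,a)\}$, which are clearly disjoint, and argue that the two symmetric role assertions force the role translation to be trivial, thereby forcing the reflexive assertion $r(a,a)$.

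The central step is the geometric computation showing $\embedding(r)=\vec{0}$ in every model. For \ELEm and \EmEL (balls), recall that $r(a,b)$ is read as $\{a\}\sqsubseteq\exists r.\{b\}$, i.e.\ $\ball(\{a\})\subseteq\ball(\exists r.\{b\})$, where $\ball(\exists r.\{b\})$ has center $\cent(\{b\})-\embedding(r)$ and radius $\radius(\{b\})$, and that open-ball containment $\ball(c_1,\rho_1)\subseteq\ball(c_2,\rho_2)$ holds iff $\norm{c_1-c_2}+\rho_1\leq\rho_2$. Applying this to $r(a,b)$ and to $r(b,a)$ gives $\radius(\{a\})\leq\radius(\{b\})$ and $\radius(\{b\})\leq\radius(\{a\})$, so the radii are equal; the center inequalities then collapse to $\cent(\{b\})=\cent(\{a\})+\embedding(r)$ and $\cent(\{b\})=\cent(\{a\})-\embedding(r)$, whence $\embedding(r)=\vec{0}$ and $\ball(\{a\})=\ball(\{b\})$. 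For \ELBE (boxes) I would run the same argument coordinatewise, using that $\boxe(\{a\})\subseteq\boxe(\exists r.\{b\})=\boxe(\{b\})-\embedding(r)$ amounts, in each dimension $i$, to $\absval{e_c(\{a\})_i-e_c(\{b\})_i+\embedding(r)_i}+e_o(\{a\})_i\leq e_o(\{b\})_i$; summing the two symmetric inequalities forces both absolute-value terms to vanish, and adding the two resulting equations yields $\embedding(r)_i=0$.

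Once $\embedding(r)=\vec{0}$ is established, the conclusion is immediate: $\ball(\exists r.\{a\})=\ball(\{a\})-\embedding(r)=\ball(\{a\})$ (and likewise for boxes), so $\ball(\{a\})\subseteq\ball(\exists r.\{a\})$ holds trivially, i.e.\ $\embedding\models_{\method} r(a,a)$. Thus every $\method$-model of $\Amc$ satisfies the single assertion of $\Amc'$, contradicting full ABox-expressiveness. I would also note in passing that $\Amc$ is itself realizable by these methods (take $\embedding(r)=\vec{0}$ and assign $a$ and $b$ the same ball, resp.\ box), so the counterexample genuinely concerns a satisfiable ABox rather than a vacuous one.

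The main obstacle I anticipate is getting the ball- and box-containment algebra exactly right, in particular the open-ball condition and the coordinatewise box condition, and checking that no degenerate choice of radii or offsets (e.g.\ mapping nominals to points of radius, resp.\ offset, zero) lets a model escape the forcing of $\embedding(r)=\vec{0}$. The symmetric pair of inequalities handles this uniformly, since it pins down the radii/offsets to be equal \emph{before} constraining the centers, so the argument goes through regardless of the sizes chosen for the nominal regions.
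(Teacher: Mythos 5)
Your counterexample is exactly the paper's ($\Amc=\{r(a,b),r(b,a)\}$, $\Amc'=\{r(a,a)\}$), and the main line of your argument---forcing $\embedding(r)=\vec{0}$ and concluding $\embedding\models_\method r(a,a)$---matches the paper's proof in spirit. The \ELBE case is fine as you describe it, since a box given by a center and a componentwise non-negative offset always contains its center and is therefore never empty, so the coordinatewise containment inequalities are genuinely equivalent to the containment. But for \ELEm and \EmEL your final paragraph contains a real error: the characterization ``$\ball(c_1,\rho_1)\subseteq\ball(c_2,\rho_2)$ iff $\norm{c_1-c_2}+\rho_1\leq\rho_2$'' is an equivalence only when the left-hand ball is non-empty. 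If $\radius(\{a\})=0$ then $\ball(\{a\})=\emptyset$ (these are open balls, and the paper's Example~\ref{ex:elem-not-sound} shows nominals really can be embedded this way), the containment $\ball(\{a\})\subseteq\ball(\exists r.\{b\})$ holds vacuously, and you get \emph{no} inequality at all---so your claim that ``the symmetric pair of inequalities handles this uniformly'' and ``pins down the radii to be equal before constraining the centers'' is false. In that case $\embedding(r)$ is completely unconstrained and need not be $\vec{0}$.

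The gap is harmless but must be closed by an explicit case split, which is precisely what the paper does: if $\ball(\{a\})=\emptyset$, then $\ball(\{a\})\subseteq\ball(\exists r.\{a\})$ trivially, so $\embedding\models_{\mi{elem}}r(a,a)$ anyway; if $\ball(\{a\})\neq\emptyset$, your inequality computation (or the paper's observation that $\vec{x}\in\ball(\{a\})$ implies $\vec{x}+2\embedding(r)\in\ball(\{a\})$, which for a bounded non-empty ball forces $\embedding(r)=\vec{0}$) goes through. With that two-case structure added, your proof is correct and essentially identical to the paper's.
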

 \begin{proof}
Consider the disjoint ABoxes $\Amc=\{r(a,b),r(b,a)\}$ and $\Amc'=\{r(a,a)\}$ of true and false facts. We prove the case of \ELEm but the cases of \EmEL and \ELBE are entirely similar. The assertions in $\Amc$ are converted to the TBox axioms $\{a\}\sqsubseteq\exists r.\{b\}, \{b\}\sqsubseteq\exists r.\{a\}$ and similarly the single assertion in $\Amc'$ is converted to the axiom $\{a\}\sqsubseteq\exists r.\{a\}$.

 Now suppose that $E$ is an \ELEm-model of $\Amc$ of dimension $d$. By the semantics, we get that
 \[\ball(\{a\}){\subseteq}\ball(\exists r.\{b\}){=}\{\vec{x}\in\mathbb{R}^d\;|\;\vec{x}+E(r)\in\ball(\{b\})\}\]
 and similarly 
 \[\ball(\{b\}){\subseteq}\ball(\exists r.\{a\}){=}\{\vec{x}\in\mathbb{R}^d\;|\;\vec{x}+E(r)\in\ball(\{a\})\}\]
 In particular, this means that $\vec{x}\in\ball(\{a\})$ implies $\vec{x}+E(r)+E(r)\in\ball(\{a\})$. If $\ball(\{a\})=\emptyset$ then $\ball(\{a\})\subseteq\ball(\exists r.\{a\})$ so $E\models_{\mi{elem}} r(a,a)$, although $r(a,a)\in\Amc'$. Otherwise, if $\ball(\{a\})\neq \emptyset$, it must be the case that $E(r)=\vec{0}$ so $\ball(\{a\})\subseteq\ball(\exists r.\{a\})=\ball(\{a\})$ and $E\models_{\mi{elem}}r(a,a)$ follows in this case as well, although $r(a,a)\in\Amc'$. 
 \end{proof}

\begin{proposition}[Absence of full TBox-expressiveness]
For $\mathcal{ELO}_\bot$ (resp.\ $\mathcal{ELHO}(\circ)_\bot$) in normal form, 
\ELEm and \ELBE (resp.\ \EmEL) are not fully TBox-expressive.
\end{proposition}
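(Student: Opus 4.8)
The plan is to refute full TBox-expressiveness (Property~\ref{prop:expressiveness}) by exhibiting a single witness pair of TBoxes $(\Tmc,\Tmc')$ for which no \method-model can behave as the property demands. Concretely, I would reuse the TBox of \cref{ex:elemmodelnotstronglytboxfaith}, namely $\Tmc=\{\exists r.C\sqsubseteq A,\;\exists r.D\sqsubseteq B,\;A\sqcap B\sqsubseteq\bot\}$, and set $\Tmc'=\{C\sqcap D\sqsubseteq\bot\}$. Since $\Tmc'$ is a singleton, negating the property reduces to showing three things: that $\Tmc$ is satisfiable, that $\Tmc'$ is disjoint from the deductive closure of $\Tmc$, and yet that \emph{every} \method-model $\embedding$ of $\Tmc$ (for $\method\in\{\ELEm,\EmEL,\ELBE\}$) satisfies $\embedding\models_{\method} C\sqcap D\sqsubseteq\bot$. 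The last point means no \method-model of $\Tmc$ can falsify $\Tmc'$, which is exactly the failure of full TBox-expressiveness.

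The bookkeeping conditions are routine. Any single-element interpretation in which $C,D,A,B$ and $r$ are all empty is a classical model of $\Tmc$, so $\Tmc$ is satisfiable. For disjointness from the deductive closure I would exhibit a classical model of $\Tmc$ in which $C\sqcap D$ is non-empty: take $\Delta^{\Imc}=\{x\}$ with $C^{\Imc}=D^{\Imc}=\{x\}$, $A^{\Imc}=B^{\Imc}=\emptyset$ and $r^{\Imc}=\emptyset$. As $r^{\Imc}=\emptyset$ makes $(\exists r.C)^{\Imc}=(\exists r.D)^{\Imc}=\emptyset$, all three axioms of $\Tmc$ hold, while $(C\sqcap D)^{\Imc}=\{x\}\neq\emptyset$. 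Hence $\Tmc\not\models C\sqcap D\sqsubseteq\bot$, so $\Tmc'$ is indeed disjoint from the deductive closure of $\Tmc$.

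The core step is the geometric inference $\embedding\models_{\method} C\sqcap D\sqsubseteq\bot$. For \ELEm and \EmEL this is precisely the computation in \cref{ex:elemmodelnotstronglytboxfaith}: the two $\exists$-axioms give $\embedding(C)-\embedding(r)\subseteq\embedding(A)$ and $\embedding(D)-\embedding(r)\subseteq\embedding(B)$, and since $\embedding(A)\cap\embedding(B)=\emptyset$, translating back by $+\embedding(r)$ yields $\embedding(C)\cap\embedding(D)=\emptyset$. The same computation applies verbatim to \ELBE, since there too $\boxe(\exists r.C)=\boxe(C)-\embedding(r)$ and the translation $\vec{x}\mapsto\vec{x}-\embedding(r)$ is an intersection-preserving bijection, so $\boxe(A)\cap\boxe(B)=\emptyset$ forces $\boxe(C)\cap\boxe(D)=\emptyset$. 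In all three cases $\embedding\models_{\method} C\sqcap D\sqsubseteq\bot$, contradicting full TBox-expressiveness.

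The step I expect to require the most care is the \ELBE case, because the excerpt spells out the \ELBE semantics of $A\sqsubseteq\bot$ and $\exists r.A\sqsubseteq\bot$ but not of a conjunctive inclusion with $\bot$ on the right. I would therefore adopt the natural reading, consistent with \ELEm, that $\embedding\models_\mi{elbe} A_1\sqcap A_2\sqsubseteq\bot$ iff $\boxe(A_1)\cap\boxe(A_2)=\emptyset$, and likewise for $C\sqcap D\sqsubseteq\bot$, so that the translation-invariance argument goes through. I would also note in passing that each method genuinely admits models of $\Tmc$ (e.g.\ take $\embedding(r)=\vec{0}$ with $A,C$ mapped to one region and $B,D$ to a disjoint one), so the refutation is not vacuous: models of $\Tmc$ exist and each of them validates the extra axiom of $\Tmc'$.
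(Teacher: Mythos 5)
Your proof is correct, but it takes a genuinely different route from the paper's. The paper picks the minimal witness pair $\Tmc=\{\exists r.A\sqsubseteq\bot\}$ and $\Tmc'=\{A\sqsubseteq\bot\}$ and simply observes that, in all three semantics, these two axioms are given \emph{literally identical} truth conditions ($\radius(A)=0$, resp.\ $e_o(A)=\vec{0}$), so every $\method$-model of $\Tmc$ satisfies $\Tmc'$ by definition; no geometric reasoning is needed, and the argument only touches cases of the semantics that are explicitly spelled out for all three methods. Your witness, recycled from \cref{ex:elemmodelnotstronglytboxfaith}, instead exhibits the deeper structural defect --- translations are bijections, so disjointness propagates backwards through existential restrictions --- which ties the failure of full TBox-expressiveness to the failure of strong TBox-faithfulness rather than to a quirk of how $\bot$-inclusions are defined. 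The price you pay is exactly the one you flag: for \ELBE the truth condition of $A_1\sqcap A_2\sqsubseteq\bot$ is not given in the text, so your argument needs the ``natural reading'' $\boxe(A_1)\cap\boxe(A_2)=\emptyset$ to go through, whereas the paper's witness avoids that gap entirely. Both refutations are valid; yours is more informative about \emph{why} the methods fail, the paper's is more economical and more watertight with respect to the stated semantics.
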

\begin{proof}
Consider the $\mathcal{EL}$-TBoxes $\Tmc=\{\exists r.A\sqsubseteq\bot\}$ and $\Tmc'=\{A\sqsubseteq\bot\}$: $\Tmc$ is satisfiable and $\Tmc'$ is disjoint from the deductive closure of $\Tmc$. By inspecting the semantic definitions of these embedding methods one sees that the axioms $\exists r.A\sqsubseteq\bot$ and $A\sqsubseteq\bot$ have the same truth-conditions and hence for any $M$-embedding $E$ we have that $E\models_M\Tmc$ implies $E\models_M\Tmc'$. 
\end{proof}

\subsection{Properties of \BoxEL}

\begin{proposition}[Soundness \citep{BoxEL}]
For $\mathcal{ELO}_\bot$ in normal form, \BoxEL is sound.
\end{proposition}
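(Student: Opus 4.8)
The plan is to exploit the defining feature of the \BoxEL semantics: unlike \ELEm or \EmEL, it is given directly by a classical DL interpretation. Recall that every \BoxEL-embedding $\embedding$ induces a geometric interpretation $\Imc_\embedding$ with non-empty domain $\Delta^{\Imc_\embedding}=\mathbb{R}^\dimension$, and that the semantics is stipulated so that $\embedding\models_{\mi{boxel}}\alpha$ iff $\Imc_\embedding\models\alpha$, where $\Imc_\embedding$ is a \emph{standard} DL interpretation. Hence the whole argument reduces to checking that $\Imc_\embedding$ is a legitimate classical interpretation and then reading off soundness from this equivalence.

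Concretely, first I would take an arbitrary \BoxEL-model $\embedding$ of an $\mathcal{ELO}_\bot$ KB $\Kmc$ in normal form; by the definition of an \method-model, $\embedding\models_{\mi{boxel}}\alpha$ for every $\alpha\in\Kmc$. Second, I would observe that $\Imc_\embedding$ is a well-defined classical interpretation: its domain $\mathbb{R}^\dimension$ is non-empty, each $a\in\NI$ is sent to a point $\embedding(a)\in\mathbb{R}^\dimension$, each $A\in\NC$ to the subset $\boxe(A)\subseteq\mathbb{R}^\dimension$, and each $r\in\NR$ to the relation $\{(\vec{x},\vec{y})\mid T^r(\vec{x})=\vec{y}\}$, with the complex $\mathcal{ELO}_\bot$ concepts then interpreted by the usual DL clauses. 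Combining these with the semantic equivalence, I conclude $\Imc_\embedding\models\alpha$ for every $\alpha\in\Kmc$, so $\Imc_\embedding$ is a classical model of $\Kmc$ and $\Kmc$ is satisfiable.

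The point that deserves the most care — and the reason \BoxEL is sound while \ELEm and \EmEL are not (cf.~\cref{ex:elem-not-sound}) — is the treatment of individuals and nominals. Because \BoxEL maps each individual name $a$ to an actual vector $\embedding(a)$, the nominal $\{a\}$ is interpreted in $\Imc_\embedding$ as the genuine non-empty singleton $\{\embedding(a)\}=\{a^{\Imc_\embedding}\}$, rather than as a possibly empty ball or box. Consequently $\Imc_\embedding\models\{a\}\sqsubseteq A$ really forces $a^{\Imc_\embedding}\in A^{\Imc_\embedding}$, and $\Imc_\embedding\models\{a\}\sqsubseteq\exists r.\{b\}$ forces $(a^{\Imc_\embedding},b^{\Imc_\embedding})\in r^{\Imc_\embedding}$, so an assertion can no longer be vacuously satisfied by emptying out an individual. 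I would therefore verify that, on assertions, the nominal encoding used in normal form and the direct assertional semantics of $\Imc_\embedding$ coincide, so a classical model of the (translated) KB is also a classical model of the original KB. This is the only mildly delicate step; once it is in place the argument is immediate, and the affine (hence functional) interpretation of roles is irrelevant for soundness, mattering only for completeness.
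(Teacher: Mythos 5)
Your proposal is correct and follows essentially the same route as the paper's proof: since the \BoxEL semantics is defined by $\embedding\models_{\mi{boxel}}\alpha$ iff $\Imc_\embedding\models\alpha$ for a standard DL interpretation $\Imc_\embedding$, any \BoxEL-model of $\Kmc$ immediately yields a classical model of $\Kmc$. Your additional remarks on nominals being interpreted as genuine singletons $\{a^{\Imc_\embedding}\}$ (in contrast to the empty-ball failure of \ELEm in \cref{ex:elem-not-sound}) are accurate elaboration rather than a different argument.
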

\begin{proof}
  The semantics of \BoxEL is based on a standard DL interpretation $\Imc_\embedding$ built from the embedding $\embedding$, so
  that whenever $E$ is a \BoxEL-model of a KB $\Kmc$ then $\Imc_E\models\Kmc$ as well.
\end{proof}

\begin{proposition}[Absence of completeness, faithfulness, entailment closure, full KB-expressiveness]
For $\mathcal{ELO}_\bot$ in normal form, \BoxEL is not complete.

Hence (by Theorem~\ref{thm:propertiesfinite}), it is not able (nor guaranteed) to be weakly or strongly KB-, TBox- or ABox-faithful, it is not able (nor guaranteed) to be KB-, TBox- or ABox-entailed, and it is not fully KB-expressive.
\end{proposition}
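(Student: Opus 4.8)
The plan is to exploit the defining feature of \BoxEL, namely that each role $r$ is interpreted as the graph of the affine map $T^r$ and hence as a \emph{total function} on $\Delta^{\Imc_\embedding}=\mathbb{R}^\dimension$; in particular $r^{\Imc_\embedding}$ is right-unique (functional). I would therefore pick a KB that is classically satisfiable only with a non-functional role and show it admits no \BoxEL-model. Concretely, take $\Kmc=\Tmc\cup\Amc$ with $\Tmc=\{B\sqcap C\sqsubseteq\bot\}$ and $\Amc=\{r(a,b),r(a,c),B(b),C(c)\}$, which is an $\mathcal{ELO}_\bot$-KB in normal form. First I would verify that $\Kmc$ is satisfiable: the interpretation over $\{a,b,c\}$ (pairwise distinct) with $r^\Imc=\{(a,b),(a,c)\}$, $B^\Imc=\{b\}$, $C^\Imc=\{c\}$ satisfies all of $\Tmc$ and $\Amc$.

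Next I would assume, towards a contradiction, that some \BoxEL-embedding $\embedding$ is a \BoxEL-model of $\Kmc$, so that $\Imc_\embedding\models\Kmc$. Since $r^{\Imc_\embedding}$ is the graph of $T^r$, the assertions $\embedding\models_{\mi{boxel}}r(a,b)$ and $\embedding\models_{\mi{boxel}}r(a,c)$ give $\embedding(b)=T^r(\embedding(a))=\embedding(c)$. From $\embedding\models_{\mi{boxel}}B(b)$ and $\embedding\models_{\mi{boxel}}C(c)$ we obtain $\embedding(b)\in\boxe(B)$ and $\embedding(c)\in\boxe(C)$, so the common point $\embedding(b)=\embedding(c)$ lies in $\boxe(B)\cap\boxe(C)$. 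But $\embedding\models_{\mi{boxel}}B\sqcap C\sqsubseteq\bot$ forces $\boxe(B)\cap\boxe(C)=\emptyset$, a contradiction. Hence $\Kmc$ has no \BoxEL-model while being satisfiable, so \BoxEL is not complete for $\mathcal{ELO}_\bot$ in normal form.

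The remaining (``Hence'') claims would then follow directly from Theorem~\ref{thm:propertiesfinite}, with no further work: all languages considered are finite, so completeness is implied by each of the ability-to-be (weakly/strongly) KB-, TBox-, ABox-faithful properties, by each of the KB-, TBox-, ABox-entailed abilities, and by full KB-expressiveness; since completeness fails, every one of these properties fails as well (and a fortiori so do the corresponding guarantees). I do not expect a genuine obstacle here: the proof is short once the right witness is chosen. The only real design choice---and the crux of the argument---is selecting a classically satisfiable pattern that clashes with role functionality; the disjointness axiom $B\sqcap C\sqsubseteq\bot$ together with two $r$-edges out of $a$ is the minimal way I know to do this, and it is precisely what makes the single shared $r$-successor $\embedding(b)=\embedding(c)$ untenable.
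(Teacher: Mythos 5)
Your proposal is correct and uses exactly the counterexample and argument from the paper's own proof: the KB $\{B\sqcap C\sqsubseteq\bot\}\cup\{r(a,b),r(a,c),B(b),C(c)\}$, the functionality of $T^r$ forcing $\embedding(b)=\embedding(c)$, and the contradiction with $\boxe(B)\cap\boxe(C)=\emptyset$. The derivation of the remaining claims from Theorem~\ref{thm:propertiesfinite} also matches the paper.
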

\begin{proof}
Consider the ABox \[\Amc=\{r(a,b),r(a,c),B(b),C(c)\}\] together with the TBox $\Tmc=\{B\sqcap C\sqsubseteq\bot\}$ and set $\Kmc=\Tmc\cup\Amc$. Suppose for contradiction that $E$ is a \BoxEL-embedding of $\Kmc$. We have that $E$ gives rise to a geometric interpretation $\Imc_E$ with domain $\Delta^{\Imc_E}$. Hence we have $(a^{\Imc_E},b^{\Imc_E}),(a^{\Imc_E},c^{\Imc_E})\in r^{\Imc_E}$  which means that $T^r(E(a))=E(b)$ and $T^r(E(a))=E(c)$. It follows that $E(b)=E(c)$. However, we also have that $E(b)\in \boxe(B),E(c)\in \boxe(C)$ with $\boxe(B)\cap \boxe(C)=\emptyset$, which implies that $E(b)\ne E(c)$; a contradiction.
\end{proof}

\begin{proposition}[Absence of full ABox-expressiveness]\label{thm:BoxELnotfullyABoxexpr}
 \BoxEL is not fully ABox-expressive.
\end{proposition}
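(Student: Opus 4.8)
The plan is to exploit the fact that \BoxEL interprets each role $r\in\NR$ as an affine transformation $T^r$, so that the induced relation $r^{\Imc_\embedding}=\{(\vec{x},\vec{y})\mid T^r(\vec{x})=\vec{y}\}$ is \emph{functional}. Concretely, $\embedding\models_{\mi{boxel}}r(a,b)$ holds iff $T^r(\embedding(a))=\embedding(b)$, so if a \BoxEL-embedding $\embedding$ satisfies both $r(a,b)$ and $r(a,c)$ then $T^r(\embedding(a))=\embedding(b)$ and $T^r(\embedding(a))=\embedding(c)$, forcing $\embedding(b)=\embedding(c)$. This one-to-one behaviour of the data is exactly what full ABox-expressiveness forbids, and it is the same mechanism that defeats completeness in the preceding proposition, here transferred to a purely assertional setting.

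To turn this into a counterexample I would fix $a,b,c\in\NI$, $r\in\NR$ and $A\in\NC$, and take the two disjoint ABoxes $\Amc=\{r(a,b),r(a,c),A(b)\}$ (the intended true facts) and $\Amc'=\{A(c)\}$ (the intended false facts). Note that $A(c)\notin\Amc$, so $\Amc$ and $\Amc'$ are disjoint, and that \BoxEL-models of $\Amc$ do exist (e.g.\ map $\embedding(b)=\embedding(c)$ to any point of a non-empty box $\boxe(A)$ and choose $T^r$ sending $\embedding(a)$ there). The claim is that \emph{every} \BoxEL-model of $\Amc$ also satisfies the single assertion in $\Amc'$.

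The verification is immediate. Let $\embedding$ be any \BoxEL-model of $\Amc$. From $r(a,b),r(a,c)\in\Amc$ and functionality of $T^r$ we obtain $\embedding(b)=\embedding(c)$, and from $A(b)\in\Amc$ we have $\embedding(b)\in\boxe(A)$; hence $\embedding(c)=\embedding(b)\in\boxe(A)$, that is $\embedding\models_{\mi{boxel}}A(c)$. Thus no \BoxEL-model of $\Amc$ assigns $0$ to every assertion of $\Amc'$, and by \cref{prop:expressiveness} \BoxEL is not fully ABox-expressive.

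There is no genuine technical obstacle here: the entire content is the observation that roles are read as single-valued functions, so the presence of $r(a,b)$ and $r(a,c)$ collapses $b$ and $c$ to the same point and propagates any assertion about one of them to the other. The only point to get right is the choice of witness — using a third true fact ($A(b)$) to break the symmetry and force the false fact ($A(c)$) — since the bare pair $\{r(a,b),r(a,c)\}$ forces $\embedding(b)=\embedding(c)$ but, on its own, does not force any assertion outside $\Amc$.
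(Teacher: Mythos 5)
Your proof is correct and follows essentially the same route as the paper's: the identical witness ABoxes $\Amc=\{r(a,b),r(a,c),A(b)\}$ and $\Amc'=\{A(c)\}$, with the functionality of $T^r$ forcing $\embedding(b)=\embedding(c)$ and hence $\embedding\models_{\mi{boxel}}A(c)$ in every \BoxEL-model of $\Amc$. The only addition is your (welcome) remark that a \BoxEL-model of $\Amc$ exists, which the paper leaves implicit.
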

\begin{proof}
Let $E$ be a \BoxEL embedding of the ABox $\Amc=\{r(a,b), r(a,c), A(b)\}$ with associated geometric model $\Imc_E$. We have seen above how the two role assertions imply that $E(b)=E(c)$. But as $E(b)\in \boxe(A)$ it also follows that $E(c)\in \boxe(A)$, i.e. $\Imc_E\models A(c)$, while $A(c)\not\in\Amc$. Thus for $\Amc'=\{A(c)\}$ we have that $\Amc$ and $\Amc'$ are disjoint, but there is no \BoxEL-model separating them.
\end{proof}

\begin{proposition}[Absence of full TBox-expressiveness]\label{thm:BoxELnotfullyTBoxexpr}
For $\mathcal{ELO}_\bot$ in normal form, \BoxEL is not fully TBox-expressive.
\end{proposition}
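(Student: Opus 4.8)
The plan is to exhibit two $\mathcal{ELO}_\bot$-TBoxes in normal form witnessing the failure of full TBox-expressiveness: a satisfiable $\Tmc$ together with a $\Tmc'$ that is disjoint from the deductive closure of $\Tmc$, such that nevertheless every \BoxEL-model of $\Tmc$ satisfies some axiom of $\Tmc'$. Concretely, I would take
\[ \Tmc = \{A\sqsubseteq\exists r.B,\ A\sqsubseteq\exists r.C,\ B\sqcap C\sqsubseteq\bot\} \quad\text{and}\quad \Tmc'=\{A\sqsubseteq\bot\}. \]
The feature to exploit is that \BoxEL interprets roles by affine transformations, hence as \emph{total functions}: in the geometric interpretation $\Imc_\embedding$ every element $\vec{x}$ has a unique $r$-successor $T^r(\vec{x})$. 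This is precisely the discrepancy with classical semantics, where an instance of $A$ may have two distinct $r$-successors, one in $B$ and one in $C$.

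First I would check the side conditions classically. Consider the three-element interpretation $\Imc$ with $A^\Imc=\{d\}$, $r^\Imc=\{(d,b),(d,c)\}$, $B^\Imc=\{b\}$, $C^\Imc=\{c\}$. The single $A$-instance $d$ has one $r$-successor in $B$ and another in $C$, and $B^\Imc\cap C^\Imc=\emptyset$, so $\Imc\models\Tmc$ and $\Tmc$ is satisfiable. Since $A^\Imc\neq\emptyset$ in this model, $\Tmc\not\models A\sqsubseteq\bot$, and therefore $\Tmc'$ is disjoint from the deductive closure of $\Tmc$, as required by \cref{prop:expressiveness}.

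The core of the argument shows that no \BoxEL-model of $\Tmc$ can avoid $A\sqsubseteq\bot$. Suppose $\embedding$ is a \BoxEL-model of $\Tmc$ and, for contradiction, that $\boxe(A)\neq\emptyset$; pick $\vec{x}\in\boxe(A)$. Unfolding $\embedding\models_{\mi{boxel}}\alpha$ as $\Imc_\embedding\models\alpha$ and using that the only $r$-successor of $\vec{x}$ is $T^r(\vec{x})$, the axioms $A\sqsubseteq\exists r.B$ and $A\sqsubseteq\exists r.C$ give $T^r(\vec{x})\in\boxe(B)$ and $T^r(\vec{x})\in\boxe(C)$, so $T^r(\vec{x})\in\boxe(B)\cap\boxe(C)$. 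But $B\sqcap C\sqsubseteq\bot$ forces $\boxe(B)\cap\boxe(C)=\emptyset$, a contradiction. Hence $\boxe(A)=\emptyset$, i.e.\ $\embedding\models_{\mi{boxel}}A\sqsubseteq\bot$. Thus every \BoxEL-model of $\Tmc$ satisfies the axiom in $\Tmc'$, which contradicts full TBox-expressiveness.

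I expect no serious obstacle here; the only point needing care is the correct reading of the existential and disjointness axioms through the geometric interpretation $\Imc_\embedding$ — in particular that functionality of $T^r$ collapses the two existential witnesses into a single point. This is exactly the step that classical models escape by using two distinct successors, and it is what makes the entailment hold in \BoxEL but not in the classical deductive closure.
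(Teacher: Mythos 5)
Your proposal is correct and rests on exactly the same mechanism as the paper's proof: the functionality of the affine transformation $T^r$ forces the two existential witnesses for $A\sqsubseteq\exists r.B$ and $A\sqsubseteq\exists r.C$ to coincide at $T^r(\vec{x})$. The paper instantiates this with $B\sqcap C\sqsubseteq D$ and $\Tmc'=\{A\sqsubseteq\exists r.D\}$ rather than your $B\sqcap C\sqsubseteq\bot$ and $\Tmc'=\{A\sqsubseteq\bot\}$, but this is only a cosmetic variation of the same counterexample.
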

\begin{proof}
Let $\Tmc=\{A\sqsubseteq \exists r.B, A\sqsubseteq \exists r.C, B\sqcap C\sqsubseteq D\}$ and $\Tmc'=\{A\sqsubseteq \exists r.D\}$. We show that 
any \BoxEL-model  that satisfies
\Tmc also satisfies $\Tmc'$ (even though the axiom in $\Tmc'$ is not in the deductive closure of \Tmc). 
Let $\embedding $ be a \BoxEL-model of \Tmc. 
Since $\Imc_\embedding\models A\sqsubseteq \exists r.B$, we have that $\embedding(A)\subseteq \{x\mid T^r(x)\in\embedding(B)\}$. Similarly, 
$\embedding(A)\subseteq \{x\mid T^r(x)\in\embedding(C)\}$. Hence $\embedding(A)\subseteq \{x\mid T^r(x)\in(\embedding(B)\cap\embedding(C))\}$. 
However, since $\Imc_\embedding\models B\sqcap C\sqsubseteq D$, $B^{\Imc_\embedding}\cap C^{\Imc_\embedding}\subseteq D^{\Imc_\embedding}$, \ie $\embedding(B)\cap\embedding(C)\subseteq \embedding(D)$. 
Hence $\embedding(A)\subseteq \{x\mid T^r(x)\in\embedding(D)\}$, so $\Imc_E\models A\sqsubseteq \exists r.D$, \ie $\embedding\models_{\mi{boxel}} A\sqsubseteq \exists r.D$. 
\end{proof}

\subsection{Properties of \BoxTwoEL}

\begin{proposition}[Absence of soundness]
For $\mathcal{ELHO}(\circ)_\bot$ in normal form, \BoxTwoEL under the semantics given by $\models_{\mi{box2el}}$ is not sound.
\end{proposition}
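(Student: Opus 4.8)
The plan is to refute soundness directly by exhibiting a classically unsatisfiable KB that nonetheless admits a \BoxTwoEL-model. By Property~\ref{prop:soundness}, soundness would require every KB possessing a \BoxTwoEL-model to be satisfiable, so a single counterexample suffices.

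First I would reuse the KB from Example~\ref{ex:Box2ELNotSound}, namely $\Tmc=\{\exists r.B\sqsubseteq A, \exists s.C\sqsubseteq D, A\sqcap D\sqsubseteq \bot\}$ and $\Amc=\{r(a,b), s(a,c), B(b), C(c)\}$, and confirm that $\Kmc=\Tmc\cup\Amc$ is classically unsatisfiable: in any interpretation $\Imc\models\Kmc$, the assertions $r(a,b)$ and $B(b)$ place $a^\Imc\in(\exists r.B)^\Imc\subseteq A^\Imc$, while $s(a,c)$ and $C(c)$ place $a^\Imc\in(\exists s.C)^\Imc\subseteq D^\Imc$; hence $a^\Imc\in(A\sqcap D)^\Imc=\emptyset$, a contradiction.

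Next I would exhibit the \BoxTwoEL-embedding $\embedding$ of Figure~\ref{Figure:Box2ELNotSound} and check, one axiom at a time against the definition of $\models_{\mi{box2el}}$, that $\embedding$ is a \BoxTwoEL-model of $\Kmc$. The point is that the geometric satisfaction conditions for the existential axioms involve the bumps: e.g.\ $\exists r.B\sqsubseteq A$ requires $\head(r)-\bump(B)\subseteq\boxe(A)$, while the fact $r(a,b)$ requires $\boxe(\{a\})+\bump(\{b\})\subseteq\head(r)$ and $\boxe(\{b\})+\bump(\{a\})\subseteq\taile(r)$. By choosing the boxes and (crucially non-zero) bumps as in the figure, one can simultaneously satisfy all six assertions and the two left-hand existential inclusions while keeping $\boxe(A)\cap\boxe(D)=\emptyset$, so that $A\sqcap D\sqsubseteq\bot$ holds as well. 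Thus $\embedding$ is a \BoxTwoEL-model of the unsatisfiable KB $\Kmc$, which contradicts soundness.

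The only delicate part is the explicit verification that the box and bump coordinates of Figure~\ref{Figure:Box2ELNotSound} meet every one of the listed containment conditions at once; this is the heart of the argument, and the reason it succeeds is exactly the discrepancy highlighted in Example~\ref{ex:Box2ELNotSound} between soundness and soundness \wrt the loss function. Models of loss zero force all bumps to $\vec{0}$, but soundness in the sense of Property~\ref{prop:soundness} also ranges over models with non-zero bumps, and it is precisely this extra freedom that lets the existential axioms be satisfied without forcing $a$ simultaneously into $A$ and $D$.
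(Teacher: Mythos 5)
Your proposal is correct and follows essentially the same route as the paper: both refute soundness via the unsatisfiable KB of Example~\ref{ex:Box2ELNotSound} together with the \BoxTwoEL-embedding of Figure~\ref{Figure:Box2ELNotSound}, whose non-zero bumps are exactly what allows the existential inclusions and the disjointness axiom to be satisfied simultaneously. Your added explicit derivation of classical unsatisfiability is a harmless (and welcome) elaboration of what the paper merely asserts.
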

\begin{proof}
See Example~\ref{ex:Box2ELNotSound}. Recall that Theorem 1 in \citep{DBLP:journals/corr/abs-2301-11118} does not imply that \BoxTwoEL is sound. Indeed, this theorem is shown for models of loss 0 only, which in particular makes all bumps equal to $\vec{0}$, while we consider more general models with potentially non-zero bumps. 
\end{proof}

\begin{proposition}[Absence of completeness, faithfulness, entailment closure, full KB-expressiveness]\label{box2el-not-complete}
For $\mathcal{ELHO}(\circ)_\bot$ in normal form, \BoxTwoEL under the semantics given by $\models_{\mi{box2el}}$ is not complete. 

Hence (by Theorem~\ref{thm:propertiesfinite}), it is not able (nor guaranteed) to be weakly or strongly KB-, TBox- or ABox-faithful, it is not able (nor guaranteed) to be KB-, TBox- or ABox-entailed, and it is not fully KB-expressive.
\end{proposition}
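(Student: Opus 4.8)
The plan is to refute completeness by exhibiting a single satisfiable $\mathcal{ELHO}(\circ)_\bot$-KB that admits no \BoxTwoEL-model; the remaining claims then follow purely formally. The KB I would use is
\[
\Tmc=\{r_1\circ r_2\sqsubseteq r_3,\ \exists r_3.C\sqsubseteq D,\ X\sqsubseteq\exists r_1.C,\ X\sqcap D\sqsubseteq\bot\},\qquad \Amc=\{X(a)\}.
\]
First I would check that $\Kmc=\Tmc\cup\Amc$ is classically satisfiable: interpret $X=\{a\}$, add a fresh $r_1$-successor $c$ of $a$ with $c\in C$, and leave $r_2$, $r_3$ and $D$ empty. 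Then $r_1\circ r_2$ is empty so $r_1\circ r_2\sqsubseteq r_3$ holds, $\exists r_3.C$ is empty so $\exists r_3.C\sqsubseteq D$ holds vacuously, $a\in\exists r_1.C$ witnesses $X\sqsubseteq\exists r_1.C$, and $X\cap D=\emptyset$. Crucially $X\sqsubseteq D$ is \emph{not} entailed, because nothing is forced into $\exists r_3.C$: the $r_1$-successor $c$ of $a$ has no $r_2$-successor, so the composition rule never fires.

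The core of the argument is then to show that every \BoxTwoEL-model nonetheless forces $\boxe(X)\subseteq\boxe(D)$, which together with disjointness collapses $\boxe(X)$ to the empty set. Reading off the geometric semantics (after translating $X(a)$ into $\{a\}\sqsubseteq X$), a model $\embedding$ must satisfy $\boxe(X)+\bump(C)\subseteq\head(r_1)$ (from $X\sqsubseteq\exists r_1.C$), $\head(r_1)\subseteq\head(r_3)$ (from $r_1\circ r_2\sqsubseteq r_3$), and $\head(r_3)-\bump(C)\subseteq\boxe(D)$ (from $\exists r_3.C\sqsubseteq D$). Chaining these and cancelling the translation by $\bump(C)$ yields $\boxe(X)\subseteq\head(r_3)-\bump(C)\subseteq\boxe(D)$. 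Since $\{a\}\sqsubseteq X$ forces $\embedding(a)\in\boxe(\{a\})\subseteq\boxe(X)$ and the point-box $\boxe(\{a\})$ is nonempty, we get $\boxe(X)\neq\emptyset$; but $X\sqcap D\sqsubseteq\bot$ gives $\boxe(X)\cap\boxe(D)=\emptyset$, so $\boxe(X)\subseteq\boxe(D)$ is impossible, a contradiction.

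The subtle point — and the step I expect to be the main obstacle — is making this spurious inclusion genuinely unavoidable. It hinges on two features of the semantics: (i) the \emph{decoupled} reading of composition, where $r_1\circ r_2\sqsubseteq r_3$ only imposes $\head(r_1)\subseteq\head(r_3)$ irrespective of $r_2$ (exactly the phenomenon behind \cref{ex:not-strongly-faithfuk-model}), and (ii) the fact that the \emph{same} filler concept $C$ occurs in both $X\sqsubseteq\exists r_1.C$ and $\exists r_3.C\sqsubseteq D$, so that its single bump $\bump(C)$ is added and then subtracted and cancels exactly. If different fillers were used the bumps would not cancel and a model could escape, so the KB must be chosen with this cancellation in mind. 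One should also confirm that the auxiliary conditions in the $A\sqsubseteq\exists r.B$ clause (the side condition on $\boxe(C)$ and the constraint on $\taile(r_1)$) are harmless: the empty-filler case only strengthens the contradiction, and tails never appear as upper bounds in the $\exists r.B\sqsubseteq A$ clause, so they can be taken arbitrarily large and play no role.

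Finally, for the ``Hence'' part I would simply invoke \cref{thm:propertiesfinite} (equivalently \cref{fig:finlangdiag}): each of the listed abilities and guarantees to be weakly or strongly KB-, TBox- or ABox-faithful or KB-, TBox- or ABox-entailed implies completeness, as does full KB-expressiveness, since all these properties require the existence of a \BoxTwoEL-model for every satisfiable KB (\cref{prop:completeness}). As completeness fails by the counterexample above, none of them can hold.
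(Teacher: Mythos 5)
Your proof is correct. You refute completeness with a different counterexample than the paper: the paper's witness is $\Tmc=\{r_1\sqsubseteq r_2,\ \exists r_1.C\sqsubseteq D_1,\ \exists r_2.C\sqsubseteq D_2,\ D_1\sqcap D_2\sqsubseteq\bot\}$ with $\Amc=\{r_1(a,b)\}$, which exploits the role-\emph{hierarchy} semantics ($\head(r_1)\subseteq\head(r_2)$) to push $\head(r_1)-\bump(C)$ into two disjoint boxes, forcing $\head(r_1)=\emptyset$ and contradicting the role assertion. You instead exploit the decoupled \emph{composition} semantics ($r_1\circ r_2\sqsubseteq r_3$ forcing $\head(r_1)\subseteq\head(r_3)$ independently of $r_2$) — essentially upgrading the paper's strong-TBox-faithfulness counterexample (\cref{ex:not-strongly-faithfuk-model}) into a completeness failure by adding $X\sqsubseteq\exists r_1.C$, $X\sqcap D\sqsubseteq\bot$ and the assertion $X(a)$ to make $\boxe(X)$ nonempty. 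Both arguments are structurally parallel (chain box inclusions through a role axiom, cancel $\bump(C)$, collide with a disjointness anchored by a nonempty nominal box), and both check out; the paper's version has the minor advantage of not using $\circ$ at all, so it also establishes incompleteness for the composition-free fragment $\mathcal{ELHO}_\bot$, whereas yours isolates the composition semantics as an independent source of the failure. Your satisfiability check, the cancellation $(\boxe(X)+\bump(C))\subseteq\head(r_3)$ iff $\boxe(X)\subseteq\head(r_3)-\bump(C)$, the handling of the empty-filler side condition, and the reduction of the ``Hence'' part to \cref{thm:propertiesfinite} are all handled correctly.
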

\begin{proof}
Consider $\Tmc=\{r_1\sqsubseteq r_2, \exists r_1.C\sqsubseteq D_1, \exists r_2.C\sqsubseteq D_2, D_1\sqcap D_2\sqsubseteq \bot\}$ and $\Amc=\{r_1(a,b)\}$, which is translated into $\{a\}\sqsubseteq \exists r_1.\{b\}$. It is easy to check that $\Tmc\cup\Amc$ is satisfiable. 

Assume for a contradiction that there exists a \BoxTwoEL-model $\embedding$ for this KB. 
\begin{itemize}
\item Since $\embedding\models_{\mi{box2el}} \{a\}\sqsubseteq \exists r_1.\{b\}$, $\boxe(\{a\})+\bump(\{b\})\subseteq \head(r_1)$ (note that $\boxe(\{a\})\neq\emptyset$: since $\{a\}$ is a nominal, $\boxe(\{a\})$ is a box of volume 0 with $\embedding(a)$ as lower and upper corner). 
\item Since $\embedding\models_{\mi{box2el}} \exists r_1.C\sqsubseteq D_1$, then $\head(r_1)-\bump(C)\subseteq \boxe(D_1)$. 
\item Since $\embedding\models_{\mi{box2el}} \exists r_2.C\sqsubseteq D_2$, then $\head(r_2)-\bump(C)\subseteq \boxe(D_2)$. 
\item Since $\embedding\models_{\mi{box2el}} r_1\sqsubseteq r_2$, then $\head(r_1)\subseteq \head(r_2)$ so $\head(r_1)-\bump(C)\subseteq \head(r_2)-\bump(C)$. 
\item Hence $\head(r_1)-\bump(C)\subseteq \boxe(D_1)\cap \boxe(D_2)$.
\item Since $\embedding\models_{\mi{box2el}} D_1\sqcap D_2\sqsubseteq \bot$, $\boxe(D_1)\cap \boxe(D_2)=\emptyset$, so $\head(r_1)-\bump(C)=\emptyset$. 
\item It follows that $\head(r_1)=\emptyset$, which contradicts $\boxe(\{a\})+\bump(\{b\})\subseteq \head(r_1)$ and $\boxe(\{a\})\neq\emptyset$.\qedhere
\end{itemize}
\end{proof}

\begin{proposition}[Full ABox-expressiveness]
\BoxTwoEL under the semantics given by $\models_{\mi{box2el}}$ is  fully ABox-expressive. 
\end{proposition}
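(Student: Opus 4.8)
The plan is to unfold the assertion semantics, reduce to an entity-level picture, and then knock out the false facts one at a time via a direct-sum (concatenation) construction. First I would unfold the semantics: since $r(a,b)$ abbreviates $\{a\}\sqsubseteq\exists r.\{b\}$ and $\boxe(\{a\}),\boxe(\{b\})$ are the nonempty volume-$0$ boxes whose corner is $\embedding(a)$, resp.\ $\embedding(b)$, the defining conditions collapse to $\embedding\models_{\mi{box2el}} A(a)$ iff $\embedding(a)\in\boxe(A)$, and $\embedding\models_{\mi{box2el}} r(a,b)$ iff $\embedding(a)+\bump(\{b\})\in\head(r)$ and $\embedding(b)+\bump(\{a\})\in\taile(r)$ (the side condition ``$\boxe(\{a\})\subseteq\emptyset$ if $\boxe(\{b\})=\emptyset$'' is vacuous, as nominal boxes are nonempty). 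Writing $\vec e_a=\embedding(a)$ and $\vec b_a=\bump(\{a\})$, these are exactly the assertion conditions of \BoxE; so full ABox-expressiveness already follows from Theorem~5.1 of \citep{BoxE}. I would nonetheless give the construction directly, since the correspondence must be verified and it isolates the one nontrivial step.

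The engine is a direct-sum lemma: concatenating embeddings $\embedding_1,\dots,\embedding_k$ coordinate-wise (entity base points and bumps by $\oplus$, each concept/role box by the Cartesian product of its factors, nominal boxes staying volume-$0$ points) yields a \BoxTwoEL-embedding in which a point lies in a product box iff it lies in every factor; hence the concatenation satisfies an assertion iff every $\embedding_i$ does, so concatenation computes the \emph{conjunction} of the satisfied assertions. Given disjoint ABoxes $\Amc$ (true) and $\Amc'$ (false), it therefore suffices to build, for each $\beta\in\Amc'$, one block in which every assertion of $\Amc$ holds but $\beta$ fails; concatenating these blocks, together with a trivial block where all boxes are large and all base points and bumps are $\vec 0$ (covering the case $\Amc'=\emptyset$), gives the desired model. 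In each block I set the boxes of all concepts/roles not occurring in $\beta$ to the whole line, so the block constrains only $\beta$'s predicate and leaves all other $\Amc$-assertions true.

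For a false concept assertion $\beta=A(a_0)$ the block is one-dimensional: put $e_{a_0}=1$, $e_c=0$ for $c\neq a_0$, and $\boxe(A)=[-\tfrac12,\tfrac12]$; since $\Amc,\Amc'$ are disjoint, every $A(a)\in\Amc$ has $a\neq a_0$, so all such facts hold while $A(a_0)$ fails. The only real work is a false role assertion $\beta=r_0(a_0,b_0)$, where $(a_0,b_0)\notin T:=\{(a,b)\mid r_0(a,b)\in\Amc\}$. Here I set $\taile(r_0)$ (and all other boxes) to the whole line, leaving only the head condition active, and I need a one-dimensional choice of base points and bumps with $e_{a_0}+b_{b_0} > e_a+b_b$ for all $(a,b)\in T$; then $\head(r_0)$ is taken to be the bounding interval of the $T$-values, which contains all of $T$ but excludes $(a_0,b_0)$. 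I expect this strict separation to be the main obstacle, because of the additive coupling of base points and bumps. I would settle it by a theorem of the alternative (Gordan's theorem): the strict system is infeasible only if some $\lambda\ge 0$, $\lambda\neq 0$, makes the functional $\sum_{(a,b)\in T}\lambda_{a,b}\big((\mathbf 1_{a_0}-\mathbf 1_a)\oplus(\mathbf 1_{b_0}-\mathbf 1_b)\big)$ vanish; matching the $e$-coordinates forces all row-mass onto $a=a_0$ and matching the $b$-coordinates forces all column-mass onto $b=b_0$, so all the mass of $\lambda$ concentrates on the single entry $(a_0,b_0)$, which is impossible since $(a_0,b_0)\notin T$. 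Hence the system is feasible and the construction goes through.
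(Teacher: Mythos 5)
Your proposal is correct, and its opening paragraph is in fact the \emph{entirety} of the paper's own proof: the paper simply observes that ABox assertions are embedded by \BoxTwoEL exactly as by \BoxE (after unfolding the nominal encoding, $\embedding\models_{\mi{box2el}} r(a,b)$ iff $\embedding(a)+\bump(\{b\})\in\head(r)$ and $\embedding(b)+\bump(\{a\})\in\taile(r)$, matching the \BoxE conditions with $\vec{e_a}=\embedding(a)$, $\vec{b_a}=\bump(\{a\})$) and then cites Theorem~5.1 of \BoxE. Everything after your first paragraph is a genuinely different, self-contained route that the paper does not take: the direct-sum lemma reducing the problem to excluding one false fact per block, the one-dimensional block for concept assertions, and the Gordan-type alternative to establish the strict separation $e_{a_0}+b_{b_0}>e_a+b_b$ for all $(a,b)\in T$. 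I checked the alternative argument: concatenation does compute the conjunction of satisfied assertions under the \BoxTwoEL semantics (product boxes, concatenated bumps, nominal boxes remaining single points), and the dual system in Gordan's theorem does force all mass of $\lambda$ onto the entry $(a_0,b_0)\notin T$, so the strict system is feasible; the per-false-fact decomposition also correctly sidesteps the usual convexity obstruction (e.g.\ $T=\{(a,b),(b,a)\}$ with $(a,a)$ and $(b,b)$ false), since each block excludes only one false pair. What your version buys is independence from the \BoxE result and an explicit witness construction; what the paper's version buys is brevity, at the cost of leaning on the (unverified-in-the-paper) correspondence you make explicit. The only cosmetic point worth tightening is the use of ``the whole line'' for inactive boxes: \BoxTwoEL boxes have real lower and upper corners, so you should instead take a bounded box large enough to contain the finitely many relevant points (and, when $T=\emptyset$, note that an empty head box, i.e.\ one with $l_i>u_i$, is permitted and excludes $e_{a_0}+b_{b_0}$ trivially). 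Neither issue is a gap in the argument.
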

\begin{proof}
Since ABox assertions are embedded by \BoxTwoEL exactly as by \BoxE, and \BoxE is fully ABox-expressive \cite[Theorem 5.1]{BoxE}, so is \BoxTwoEL. 
\end{proof}

\begin{proposition}[Absence of full TBox-expressiveness]\label{box2el-not-full-TBox-expr}
For $\mathcal{ELHO}(\circ)_\bot$ in normal form, \BoxTwoEL under the semantics given by $\models_{\mi{box2el}}$ is not fully TBox-expressive. 
\end{proposition}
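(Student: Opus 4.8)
The plan is to refute the definition of full TBox-expressiveness (Property~\ref{prop:expressiveness}) directly, by exhibiting a pair of TBoxes $\Tmc,\Tmc'$ that witnesses its failure. Concretely, I would reuse the TBox of Example~\ref{ex:not-strongly-faithfuk-model}, setting $\Tmc=\{r_1\circ r_2\sqsubseteq r_3,\ \exists r_3.C\sqsubseteq D\}$ and taking $\Tmc'=\{\exists r_1.C\sqsubseteq D\}$. Both are $\mathcal{ELHO}(\circ)_\bot$-TBoxes in normal form, so they lie in the language under consideration, and the goal is to show that no \BoxTwoEL-model of $\Tmc$ assigns $0$ to the single axiom of $\Tmc'$.

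There are three conditions to verify against the definition. First, $\Tmc$ is satisfiable, which is immediate: a classical interpretation with $r_3^{\Imc}=\emptyset$ (hence $(\exists r_3.C)^{\Imc}=\emptyset$) and with $r_1,r_2$ chosen so that their composition is empty satisfies both axioms vacuously. Second, $\Tmc'$ must be disjoint from the deductive closure of $\Tmc$, i.e.\ $\Tmc\not\models\exists r_1.C\sqsubseteq D$; this is exactly the non-entailment already recorded in Example~\ref{ex:not-strongly-faithfuk-model}, witnessed by a classical model of $\Tmc$ containing an element in $(\exists r_1.C)^{\Imc}\setminus D^{\Imc}$ (obtained by giving some element an $r_1$-successor in $C^{\Imc}$ but no $r_2$-successor, so that the composition rule imposes nothing on $r_3$).

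Third, and this is the crux, I would invoke the argument of Example~\ref{ex:not-strongly-faithfuk-model} to show that \emph{every} \BoxTwoEL-model $\embedding$ of $\Tmc$ already satisfies the axiom in $\Tmc'$. The chain is short: $\embedding\models_{\mi{box2el}}r_1\circ r_2\sqsubseteq r_3$ gives $\head(r_1)\subseteq\head(r_3)$, and $\embedding\models_{\mi{box2el}}\exists r_3.C\sqsubseteq D$ gives $\head(r_3)-\bump(C)\subseteq\boxe(D)$; composing the two inclusions yields $\head(r_1)-\bump(C)\subseteq\boxe(D)$, which is precisely the semantic condition for $\embedding\models_{\mi{box2el}}\exists r_1.C\sqsubseteq D$. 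Consequently there is no \BoxTwoEL-model of $\Tmc$ assigning $0$ to every axiom of $\Tmc'$, contradicting the requirement in Property~\ref{prop:expressiveness}, and hence \BoxTwoEL is not fully TBox-expressive.

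I expect the only slightly delicate point to be the verification that $\Tmc\not\models\exists r_1.C\sqsubseteq D$ (the disjointness-from-deductive-closure condition), since everything else is a direct transcription of the geometric inclusions defining $\models_{\mi{box2el}}$. However, this non-entailment is already granted by Example~\ref{ex:not-strongly-faithfuk-model}, so the proof reduces to citing that example twice: once for the non-entailment and once for the fact that the composition of $\head$-inclusions forces $\exists r_1.C\sqsubseteq D$ in every \BoxTwoEL-model.
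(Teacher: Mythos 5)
Your proposal is correct and is essentially the paper's own proof: it uses the same pair $\Tmc=\{r_1\circ r_2\sqsubseteq r_3,\ \exists r_3.C\sqsubseteq D\}$, $\Tmc'=\{\exists r_1.C\sqsubseteq D\}$ and the same chain of inclusions $\head(r_1)\subseteq\head(r_3)$ and $\head(r_3)-\bump(C)\subseteq\boxe(D)$ forcing $\embedding\models_{\mi{box2el}}\exists r_1.C\sqsubseteq D$ in every \BoxTwoEL-model of $\Tmc$. The only difference is that you spell out the satisfiability and non-entailment checks that the paper states without elaboration, which is a harmless (and arguably welcome) addition.
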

\begin{proof}
Consider $\Tmc=\{r_1\circ r_2\sqsubseteq r_3, \exists r_3.C\sqsubseteq D\}$ and $\Tmc'=\{\exists r_1.C\sqsubseteq D\}$: $\Tmc$ is satisfiable and $\Tmc'$ is disjoint from the deductive closure of $\Tmc$. 
Let $\embedding$ be a \BoxTwoEL-model of $\Tmc$. 
Since $\embedding\models_{\mi{box2el}}r_1\circ r_2\sqsubseteq r_3$, then $\head(r_1)\subseteq \head(r_3)$, and since $\embedding\models_{\mi{box2el}} \exists r_3.C\sqsubseteq D$, then $\head(r_3)-\bump(C)\subseteq \boxe(D)$. It follows that $\head(r_1)-\bump(C)\subseteq \boxe(D)$, so $\embedding\models_{\mi{box2el}} \exists r_1.C\sqsubseteq D$.  
It follows that \BoxTwoEL is not fully TBox-expressive. 
\end{proof}

\subsection{Properties of \BoxE}

\begin{proposition}[Soundness]
\BoxE is sound for the language of patterns which is the union of symmetry, inversion,  hierarchy, intersection, mutual exclusion and asymmetry rules.
\end{proposition}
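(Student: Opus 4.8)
The plan is to establish soundness by a model-construction argument: from an arbitrary $\BoxE$-model $\embedding$ of a KB $\kb$ in this pattern language (assertions plus symmetry, inversion, hierarchy, intersection, mutual exclusion and asymmetry, but no composition), I would build a classical interpretation $\Imc_\embedding$ and check that $\Imc_\embedding\models\kb$, which by \cref{prop:soundness} is exactly what soundness requires. I would take the domain to be $\NI$, set $a^{\Imc_\embedding}:=a$ for each $a\in\NI$, put $A^{\Imc_\embedding}:=\{a\in\NI\mid \vec{e_a}\in A^{(1)}\}$ for each concept name $A$, and, crucially, define the role extension by the \emph{same} bump-dependent condition that $\BoxE$ uses for ground atoms: $r^{\Imc_\embedding}:=\{(a,b)\in\NI\times\NI\mid \vec{e_a}+\vec{b_b}\in r^{(1)}\text{ and }\vec{e_b}+\vec{b_a}\in r^{(2)}\}$. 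With this choice the assertions transfer immediately, since $\embedding\models_{\mi{boxe}}A(a)$ gives $\vec{e_a}\in A^{(1)}$, hence $a\in A^{\Imc_\embedding}$, and $\embedding\models_{\mi{boxe}}r(a,b)$ is literally the defining condition for $(a,b)\in r^{\Imc_\embedding}$.

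Next I would verify each pattern by reducing the classical (pointwise) requirement to the corresponding $\BoxE$ box-level condition. For hierarchy $r_1\sqsubseteq r_2$, the inclusions $r_1^{(1)}\subseteq r_2^{(1)}$ and $r_1^{(2)}\subseteq r_2^{(2)}$ send any witness of $(a,b)\in r_1^{\Imc_\embedding}$ straight into $r_2^{\Imc_\embedding}$; intersection $r_1\sqcap r_2\sqsubseteq r_3$ is analogous using $r_1^{(i)}\cap r_2^{(i)}\subseteq r_3^{(i)}$. For inversion $r_1\equiv r_2^-$, the equalities $r_1^{(1)}=r_2^{(2)}$ and $r_1^{(2)}=r_2^{(1)}$ show that $(a,b)\in r_1^{\Imc_\embedding}$ iff $(b,a)\in r_2^{\Imc_\embedding}$, i.e.\ $r_1^{\Imc_\embedding}=(r_2^-)^{\Imc_\embedding}$; symmetry is the diagonal case $r_1=r_2$, where the condition reduces to $r^{(1)}=r^{(2)}$ and the same swap yields a symmetric $r^{\Imc_\embedding}$. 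For mutual exclusion $r_1\sqsubseteq\neg r_2$, a pair in $r_1^{\Imc_\embedding}\cap r_2^{\Imc_\embedding}$ would place $\vec{e_a}+\vec{b_b}$ in $r_1^{(1)}\cap r_2^{(1)}$ and $\vec{e_b}+\vec{b_a}$ in $r_1^{(2)}\cap r_2^{(2)}$, contradicting that at least one of these intersections is empty.

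The asymmetry case $r_1\sqsubseteq\neg r_1^-$ is the one I expect to be the most delicate, and it is where the bumps must be tracked carefully: if both $(a,b)$ and $(b,a)$ lay in $r_1^{\Imc_\embedding}$, then $(a,b)\in r_1^{\Imc_\embedding}$ forces $\vec{e_a}+\vec{b_b}\in r_1^{(1)}$ while $(b,a)\in r_1^{\Imc_\embedding}$ forces the \emph{same} vector $\vec{e_a}+\vec{b_b}\in r_1^{(2)}$, so $\vec{e_a}+\vec{b_b}\in r_1^{(1)}\cap r_1^{(2)}=\emptyset$, a contradiction (the degenerate case $a=b$ is subsumed). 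Hence $r_1^{\Imc_\embedding}\cap(r_1^-)^{\Imc_\embedding}=\emptyset$. Having checked every assertion and every pattern, $\Imc_\embedding\models\kb$ and $\kb$ is satisfiable.

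The main obstacle, and the reason this is not a routine ``read off the regions'' argument as for convex geometric or al-cone models, is that $\BoxE$ roles are not fixed product regions: whether a pair belongs to a role depends on the bump of the \emph{partner} entity. The key idea that makes everything go through is to define $r^{\Imc_\embedding}$ by exactly the bump-dependent ground-atom condition, so that each box-level guarantee in the $\BoxE$ semantics (containment, equality, or disjointness of the head/tail boxes) becomes precisely the matching pointwise guarantee in $\Imc_\embedding$. Because the language contains no existential or compositional construct, no fresh witnesses are ever required, so the finite named domain $\NI$ suffices.
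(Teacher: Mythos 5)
Your proof is correct and follows the same overall strategy as the paper's: construct a classical interpretation $\Imc_\embedding$ from the \BoxE-model and verify each assertion and pattern by reducing it to the corresponding box-level condition. The one genuine difference lies in the role extension. The paper takes $\Delta^{\Imc_\embedding}=\mathbb{R}^\dimension$ and defines $r^{\Imc_\embedding}$ as the union of the full product $\{(x,y)\mid x\in r^{(1)},\, y\in r^{(2)}\}$ with the bump-dependent pairs of entity positions, whereas you restrict the domain to $\NI$ and keep \emph{only} the bump-dependent condition $\vec{e_a}+\vec{b_b}\in r^{(1)}$ and $\vec{e_b}+\vec{b_a}\in r^{(2)}$. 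Your leaner definition is arguably the better choice: it makes the mutual-exclusion and asymmetry cases close immediately, because any offending pair yields a single point lying in both head boxes and a single point lying in both tail boxes. With the paper's union-style definition one must additionally rule out ``mixed'' witnesses, where a pair enters $r_1^{\Imc_\embedding}$ via the product part and $r_2^{\Imc_\embedding}$ via the bump part; there the points being compared differ by a bump vector, so disjointness of the boxes does not immediately transfer, and the paper's proof does not address this case. Your verification of the individual patterns (including the asymmetry case, where you correctly track that $(a,b)$ and $(b,a)$ force the \emph{same} vector $\vec{e_a}+\vec{b_b}$ into both $r_1^{(1)}$ and $r_1^{(2)}$) is sound, and your treatment of symmetry as the diagonal instance of inversion fills a case the paper's bullet list leaves implicit.
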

\begin{proof}
Assume that $\Kmc=\Tmc\cup\Amc$ has a \BoxE-model $\embedding$. Let $\Imc_\embedding$ be the interpretation with $\Delta^{\Imc_\embedding}=\mathbb{R}^\dimension$, for every $a\in\NI$, $a^{\Imc_\embedding}=\vec{e}_a$ (where $\embedding(a)=(\vec{e}_a,\vec{b}_a)$), for every $A\in\NC$, $A^{\Imc_\embedding}=A^{(1)}$ (where $\embedding(A)=(A^{(1)})$), and for every $r\in\NR$, 
\begin{align*}
    r^{\Imc_\embedding}=&\{(x,y)\mid x\in r^{(1)}, y\in r^{(2)}\}\\
   \cup & \{(\vec{e}_c,\vec{e}_d)\mid c,d\in\NI, \vec{e}_c+\vec{b}_d\in r^{(1)}, \vec{e}_d+\vec{b}_c\in r^{(2)}\}
\end{align*} (where $\embedding(r)=(r^{(1)}, r^{(2)})$). 

\begin{itemize}
    \item If $A(a)\in\Amc$, then $\embedding\models_{\mi{boxe}}A(a)$ which implies $\vec{e}_a\in A^{(1)}$ so $\Imc_\embedding \models A(a)$.
    \item If $r(c,d)\in\Amc$, then $\embedding\models_{\mi{boxe}}r(c,d)$ which implies $\vec{e}_c+\vec{b}_d\in r^{(1)}$ and $\vec{e}_d+\vec{b}_c\in r^{(2)}$ so $\Imc_\embedding \models r(c,d)$.
    \item If $r_1\equiv r_2^-\in\Tmc$, then $\embedding\models_{\mi{boxe}} r_1\equiv r_2^-$ which implies $r_1^{(1)}=r_2^{(2)}$ and $r_1^{(2)}=r_2^{(1)}$ so $r_1^{\Imc_\embedding}=(r_2^-)^{\Imc_\embedding}$.
    \item If $r_1\sqsubseteq r_2\in\Tmc$, then  $\embedding\models_{\mi{boxe}} r_1\sqsubseteq r_2$ which implies $r_1^{(1)}\subseteq r_2^{(1)}$ and $r_1^{(2)}\subseteq r_2^{(2)}$ so $r_1^{\Imc_\embedding}\subseteq r_2^{\Imc_\embedding}$.
        \item If $r_1\sqcap r_2\sqsubseteq r_3\in\Tmc$, then  $\embedding\models_{\mi{boxe}} r_1\sqcap r_2\sqsubseteq r_3$ which implies $r_1^{(1)}\cap r_2^{(1)}\subseteq r_3^{(1)}$ and $r_1^{(2)}\cap r_2^{(2)}\subseteq r_3^{(2)}$ so $(r_1\sqcap r_2)^{\Imc_\embedding}\subseteq r_3^{\Imc_\embedding}$.
        \item If $r_1\sqsubseteq \neg r_2\in\Tmc$, then  $\embedding\models_{\mi{boxe}} r_1\sqsubseteq \neg r_2$ which implies $r_1^{(1)}\cap r_2^{(1)}=\emptyset$ or $r_1^{(2)}\cap r_2^{(2)}=\emptyset$ so $r_1^{\Imc_\embedding}\cap r_2^{\Imc_\embedding}=\emptyset$.
        \item If $r_1\sqsubseteq \neg r_1^-\in\Tmc$, then  $\embedding\models_{\mi{boxe}} r_1\sqsubseteq \neg r_1^-$ which implies $r_1^{(1)}\cap r_1^{(2)}=\emptyset$ so $r_1^{\Imc_\embedding}\cap (r_1^-)^{\Imc_\embedding}=\emptyset$.
\end{itemize}
Hence $\Imc_\embedding\models \Kmc$.
\end{proof}

\begin{proposition}[Absence of completeness, faithfulness, entailment closure, full KB-expressiveness]
\BoxE is not complete for any DL language $\Lmc$ containing mutual exclusion patterns.

Hence (by Theorem~\ref{thm:propertiesfinite}), it is not able (nor guaranteed) to be weakly or strongly KB-, TBox- or ABox-faithful, it is not able (nor guaranteed) to be KB-, TBox- or ABox-entailed, and it is not fully KB-expressive.
\end{proposition}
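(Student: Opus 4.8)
The plan is to reduce the entire statement to the single counterexample already established in \cref{ex:boxeincomplete}, combined with the web of implications proved in \cref{thm:propertiesfinite}. First I would recall that \cref{ex:boxeincomplete} exhibits a satisfiable KB $\Kmc = \Tmc \cup \Amc$ with $\Tmc = \{r \sqsubseteq \neg s\}$ and $\Amc = \{r(a,b), s(a,c), r(d,c), s(d,b)\}$ that admits no \BoxE-model, and whose only TBox axiom is a mutual exclusion pattern. Since any DL language $\Lmc$ containing mutual exclusion patterns can express $\Tmc$ (after renaming the two roles to names available in $\NR$), this same $\Kmc$ witnesses the failure of completeness for every such $\Lmc$. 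This settles the first sentence of the statement directly, with no new computation needed beyond the contradiction already derived in the example.

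For the second sentence I would argue by contraposition, using \cref{thm:propertiesfinite} together with the relationships depicted in \cref{fig:finlangdiag}. Each property named in the statement sits above completeness in that diagram: the ability (and hence the guarantee) to be weakly or strongly KB-, TBox- or ABox-faithful, and the ability (and hence the guarantee) to be KB-, TBox- or ABox-entailed, all require by definition (\cref{prop:existence-faithfulness}) the existence of an \method-model of every satisfiable KB, which is exactly completeness; and full KB-expressiveness likewise yields completeness, since it requires an \method-model of $\Tmc\cup\Amc$. Because \BoxE fails completeness, the contrapositive of each of these implications shows that it fails each of the listed properties. I would then invoke \cref{prop:guaranteeABoxTBoximpliesguaranteeKB} only implicitly, noting that the guarantee versions subsume the ability versions, so ruling out the abilities rules out the guarantees too.

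The main obstacle is mild and essentially bookkeeping: I must verify that every property appearing in the statement genuinely lies above completeness in \cref{fig:finlangdiag}, so that the contrapositive applies uniformly across all of them. In particular I should confirm that the finiteness of $\Lmc$ plays no role in these specific implications (it does not: all the arrows \emph{into} completeness already hold in the infinite-language diagram of \cref{fig:inflangdiag}), so the only input required is the satisfiable-but-unembeddable KB from \cref{ex:boxeincomplete}. No further case analysis on the structure of $\Lmc$ is needed, since a single witness KB suffices to break completeness and the implications do the rest.
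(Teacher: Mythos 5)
Your proposal is correct and matches the paper's own proof, which simply points to \cref{ex:boxeincomplete} for the failure of completeness and lets \cref{thm:propertiesfinite} (already invoked in the statement itself) dispatch the remaining properties. The extra bookkeeping you do—checking that each listed property implies completeness via \cref{fig:finlangdiag} and that guarantees imply abilities—is exactly the reasoning the paper leaves implicit.
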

\begin{proof}
See Example \ref{ex:boxeincomplete}.
\end{proof}

\begin{proposition}[Full ABox-expressiveness (\cite{BoxE}, Theorem 5.1)]
\BoxE is fully ABox-expressive.
\end{proposition}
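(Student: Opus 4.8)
The plan is to unfold the definition of full ABox-expressiveness (Property~\ref{prop:expressiveness}) and realize an arbitrary truth assignment explicitly. Concretely, let $\Amc$ and $\Amc'$ be disjoint ABoxes; since $\NC$, $\NR$ and $\NI$ are finite there are only finitely many atomic facts, and I must build a single \BoxE-embedding $\embedding$ with $\embedding\models_{\mi{boxe}}\alpha$ for every $\alpha\in\Amc$ and $\embedding\not\models_{\mi{boxe}}\alpha$ for every $\alpha\in\Amc'$. This is exactly the ``full expressiveness'' notion of the KG literature applied to $\Amc$ (true facts) and $\Amc'$ (false facts), so the statement is immediate from \cite[Theorem~5.1]{BoxE}; below I sketch a self-contained construction, which also makes transparent why \cref{prop:fully-expressive-vs-model-strongly-ABox-faithful} then yields the strongly ABox-faithful model.

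First I would fix a convenient coordinate system: \emph{one dedicated dimension per potential role assertion}, plus a block of \emph{one dimension per entity}. For the fact-coordinate $k_{r(c',d')}$ attached to the potential assertion $r(c',d')$, set $(\vec{e_c})_{k_{r(c',d')}} = [c=c']$ and $(\vec{b_d})_{k_{r(c',d')}} = [d=d']$ (writing $[\,\cdot\,]$ for the $0/1$ indicator); in the entity-coordinate $m_{a_i}$ set $(\vec{e_{a_j}})_{m_{a_i}} = [i=j]$ and all bumps to $0$. The key elementary observation is that in coordinate $k_{r(c',d')}$ the head point $\vec{e_c}+\vec{b_d}$ of a pair $(c,d)$ equals $[c=c']+[d=d']$, hence takes the value $2$ \emph{exactly} when $(c,d)=(c',d')$ and at most $1$ otherwise; the analogous statement holds for the tail point $\vec{e_d}+\vec{b_c}$, and in the entity block $\vec{e_a}$ is isolated by its own coordinate.

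With the entities placed, the boxes are chosen independently for each concept and relation over these fixed positions. For a concept $A$, I would build the box $A^{(1)}$ by shrinking, for every $a$ with $A(a)\in\Amc'$, the range in that entity's isolating coordinate $m_a$ so as to exclude $\vec{e_a}$ while keeping all other base points inside, and leaving every other coordinate wide; this captures exactly the intended extension because $m_a$ separates $\vec{e_a}$ from all other base points. For a relation $r$, I would route \emph{all} falsity through the head box: set $r^{(1)}$ to exclude the value $2$ in the coordinate $k_{r(c',d')}$ of each false assertion $r(c',d')\in\Amc'$ (a band such as $[-1,3/2]$), and keep it permissive elsewhere, while making the tail box $r^{(2)}$ large enough to contain all tail points. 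Then every true $r(c,d)\in\Amc$ has both its head and tail point inside (all relevant values are $\le 1$ in the shrunk coordinates), so $\embedding\models_{\mi{boxe}} r(c,d)$, and every false $r(c',d')$ has its head point excluded (value $2$), so $\embedding\not\models_{\mi{boxe}} r(c',d')$.

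The main obstacle to watch is the \emph{coupling} intrinsic to \BoxE: base positions and bumps are shared across all concepts and relations, and a role assertion is true iff \emph{both} the head and tail conditions hold (false iff at least one fails). I would neutralize this in two ways. The shared-parameter issue is handled by dedicating disjoint coordinate blocks and observing that the per-concept and per-relation boxes are independent parameters, so each can fix its own ranges over the common positions without interfering; correctness then reduces to the value-$2$ isolation property, which guarantees that shrinking a coordinate for a false fact never expels a true one. The head/tail coupling is handled by always excluding false facts via the head box alone and leaving the tail box permissive, so the ``and/or'' asymmetry between truth and falsity never forces a conflict. Verifying these two invariants simultaneously for concepts and roles is the only delicate point; everything else is a routine check of the indicator arithmetic.
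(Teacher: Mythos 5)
Your construction is correct, but it takes a different route from the paper: the paper does not prove this proposition at all, it simply imports it from Theorem~5.1 of \cite{BoxE} (whose original argument is an induction that falsifies facts one at a time, each step consuming a dedicated dimension). You instead give a direct, self-contained witness: one coordinate per entity to isolate base points for concept assertions, and one coordinate per potential role assertion in which the head point $\vec{e_c}+\vec{b_d}$ takes the value $2$ exactly on the targeted pair and at most $1$ elsewhere, so that shrinking the head box to a band like $[-1,3/2]$ in that coordinate expels precisely the false fact while the permissive tail box neutralizes the conjunctive head/tail condition. The indicator arithmetic checks out (disjointness of $\Amc$ and $\Amc'$ guarantees a true $r(c,d)$ never hits value $2$ in a shrunk coordinate), and routing all falsity through the head box is exactly the right way to handle the ``both conditions must hold'' asymmetry. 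What your approach buys is transparency and independence from the external reference, at the cost of a larger dimension (roughly $|\NI|+|\NR||\NI|^2$ rather than the $|\NI||\NR|$ of the original inductive proof); it also makes the link to \cref{prop:fully-expressive-vs-model-strongly-ABox-faithful} concrete, since your embedding is visibly a strongly ABox-faithful model of $\Amc$ when $\Amc'$ is taken to be the complement of $\Amc$ in the (finite) language of assertions. One small remark: since the entity vectors and bumps in your fact-coordinates do not depend on the relation symbol, one coordinate per ordered pair of entities already suffices; the per-assertion indexing is harmless but redundant.
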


\begin{proposition}[Full TBox-expressiveness (\cite{BoxE}, Theorem 5.3)]
\BoxE is fully TBox-expressive for the language of patterns which is the union of symmetry, inversion,  hierarchy, intersection, mutual exclusion and asymmetry rules.
\end{proposition}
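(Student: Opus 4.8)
The plan is to establish the stronger statement that \BoxE is \emph{able to be strongly TBox-faithful} for this pattern language $\Lmc$, from which full TBox-expressiveness follows immediately by \cref{prop:strongfaithfulnessimpliesexpressiveness}. Since every finite set of patterns is satisfiable, \cref{prop:capturingfaithfulness} lets me rephrase the goal once more: it suffices to show that for every finite $\patternset\subseteq\Lmc$ there is a \BoxE-embedding $\embedding$ that captures $\patternset$ \emph{exactly and exclusively}, i.e.\ $\embedding\models_{\mi{boxe}}\phi$ for every $\phi\in\patternset$ and $\embedding\models_{\mi{boxe}}\psi$ only if $\patternset\models\psi$ for every $\psi\in\Lmc$. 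This is exactly the rule-capturing guarantee for \BoxE due to \citet[Theorem~5.3]{BoxE}, and the proof recovers it in the present vocabulary.

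To build $\embedding$ I would first descend to the classical level. Each pattern in $\Lmc$ is a universal sentence about a binary relation, so for every $\psi\in\Lmc$ with $\patternset\not\models\psi$ there is a relational interpretation $\Imc_\psi\models\patternset$ with $\Imc_\psi\not\models\psi$. Taking the disjoint union of these finitely many counterexamples yields a single interpretation $\Imc\models\patternset$ that simultaneously falsifies every non-consequence: all these patterns are preserved under disjoint unions (a cross-component tuple has an empty body, so the rule holds vacuously), and each falsifying witness survives inside its own component. I would then encode $\Imc$ into boxes by representing the head box $r^{(1)}$ as a faithful geometric image of $r^\Imc$ over the universe $\Delta\times\Delta$ and the tail box $r^{(2)}$ as the image of the transpose $(r^\Imc)^{-}$. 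Under such an encoding the \BoxE conditions line up with the relational ones: hierarchy and intersection become coordinatewise box containment in both slots; symmetry and inversion become box equalities (as $r^\Imc$ symmetric forces $r^{(1)}=r^{(2)}$, and $r^\Imc=(s^\Imc)^{-}$ forces $r^{(1)}=s^{(2)}$ and $r^{(2)}=s^{(1)}$); mutual exclusion becomes disjointness of the slot-$1$ boxes; and asymmetry becomes disjointness of $r^{(1)}$ and $r^{(2)}$. The falsifications transfer in the same way, and the $^-$-aware choice of the tail slot is precisely what makes the slot-coupled patterns (symmetry, inversion, asymmetry) behave correctly.

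The main obstacle is the geometric realization in the last step: I must realize a whole finite family of sets as boxes so that $\subseteq$, $=$ and disjointness are all reflected \emph{at once}, and these notions pull in opposite directions, since box containment is a conjunction over coordinates while box disjointness is a disjunction over coordinates, with emptiness of an intersection box further complicating the intersection pattern. The standard remedy, and the technical core of \citet[Theorem~5.3]{BoxE}, is to work in sufficiently many dimensions and dedicate separate coordinates to separate requirements: one block of coordinates enforces the containments and equalities that must hold in every coordinate, while for each disjointness that must hold (from an exclusion or asymmetry axiom of $\patternset$) or each containment/disjointness that must \emph{fail} (to refute a non-consequence) one reserves a fresh coordinate whose interval configuration certifies that single requirement in isolation. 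Verifying that these coordinatewise certificates do not interfere, in particular that forcing a disjointness to satisfy an exclusion in $\patternset$ never collapses a box needed elsewhere and that breaking a spurious pattern leaves all of $\patternset$ intact, is the delicate bookkeeping the construction must discharge.
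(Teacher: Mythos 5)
Your proposal is correct and ultimately rests on the same ground as the paper, which offers no proof of its own and simply imports Theorem~5.3 of \cite{BoxE}; your bridge from that theorem's ``exact and exclusive capture'' statement to full TBox-expressiveness via \cref{prop:capturingfaithfulness} and \cref{prop:strongfaithfulnessimpliesexpressiveness} is sound, and the disjoint-union/box-construction sketch is optional colour that the citation already covers. One caution: do not phrase the intermediate goal as ``\BoxE is able to be strongly TBox-faithful'' in the sense of Property~\ref{prop:existence-faithfulness}, since that property quantifies over all satisfiable KBs (including ABoxes) and fails for \BoxE by \cref{ex:boxeincomplete} (see Table~\ref{tab:kbkgmethodProperties}); what you actually use, and all that is needed, is the TBox-only instance, i.e.\ a strongly TBox-faithful $\method$-model for every finite set of patterns.
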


\subsection{Properties of \ExpressivE}

\begin{proposition}[Soundness]
\ExpressivE is sound for the language of role assertions and patterns from Table \ref{patterns} (i.e. the language containing all symmetry, inversion,  hierarchy, intersection, composition, mutual exclusion and asymmetry patterns). 
\end{proposition}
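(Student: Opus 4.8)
The plan is to mirror the soundness argument just given for \BoxE: from an \ExpressivE-model $\embedding$ of a KB $\Kmc=\Tmc\cup\Amc$ in this language, I would build a classical interpretation $\Imc_\embedding$ and verify $\Imc_\embedding\models\Kmc$, whence $\Kmc$ is satisfiable. Concretely, set $\Delta^{\Imc_\embedding}=\mathbb{R}^\dimension$, let $a^{\Imc_\embedding}=\embedding(a)$ for every $a\in\NI$, and read each role relation off its hyper-parallelogram in the triple space by putting $r^{\Imc_\embedding}=\{(\vec{u},\vec{v})\mid \vec{u}\oplus\vec{v}\in\embedding(r)\}$ for every $r\in\NR$. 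A useful observation to record up front is that, unlike \BoxE, \ExpressivE uses the \emph{same} concatenation-membership test both for role assertions and for the set-theoretic conditions defining the patterns; hence there is no bump-based discrepancy to repair and the definition of $r^{\Imc_\embedding}$ is uniform (no union of two sets is needed).

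First I would dispatch the assertions and the ``Boolean'' patterns, each of which transfers verbatim because membership in $r^{\Imc_\embedding}$ is by definition membership of the concatenation in $\embedding(r)$. For $r(a,b)$, $\embedding\models_{\mi{expr}} r(a,b)$ gives $\embedding(a)\oplus\embedding(b)\in\embedding(r)$, i.e.\ $(a^{\Imc_\embedding},b^{\Imc_\embedding})\in r^{\Imc_\embedding}$. For hierarchy, intersection and mutual exclusion, the relations $\embedding(r_1)\subseteq\embedding(r_2)$, $\embedding(r_1)\cap\embedding(r_2)\subseteq\embedding(r_3)$ and $\embedding(r_1)\cap\embedding(r_2)=\emptyset$ translate directly to the corresponding inclusions and disjointness of $r_1^{\Imc_\embedding},r_2^{\Imc_\embedding},r_3^{\Imc_\embedding}$. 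For symmetry, inversion and asymmetry, the key step is to identify the mirror image of $\embedding(r)$ with respect to the identity line, namely $\{\vec{v}\oplus\vec{u}\mid \vec{u}\oplus\vec{v}\in\embedding(r)\}$, with the converse relation $(r^-)^{\Imc_\embedding}$; symmetry of $\embedding(r_1)$ then yields $r_1^{\Imc_\embedding}\subseteq (r_1^-)^{\Imc_\embedding}$, the mirror-image condition for $\embedding(r_1),\embedding(r_2)$ yields $r_1^{\Imc_\embedding}=(r_2^-)^{\Imc_\embedding}$, and non-intersection with the mirror image yields $r_1^{\Imc_\embedding}\cap (r_1^-)^{\Imc_\embedding}=\emptyset$.

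The composition pattern is the step I expect to be the main obstacle, since it is the only one that refers to the derived region $\embedding(r_1\circ r_2)$ rather than to the stored hyper-parallelograms. To handle $r_1\circ r_2\sqsubseteq r_3$, I would take $(\vec{u},\vec{w})\in (r_1\circ r_2)^{\Imc_\embedding}$, i.e.\ some $\vec{v}$ with $(\vec{u},\vec{v})\in r_1^{\Imc_\embedding}$ and $(\vec{v},\vec{w})\in r_2^{\Imc_\embedding}$, equivalently $\vec{u}\oplus\vec{v}\in\embedding(r_1)$ and $\vec{v}\oplus\vec{w}\in\embedding(r_2)$. The defining property of $\embedding(r_1\circ r_2)$ then gives $\vec{u}\oplus\vec{w}\in\embedding(r_1\circ r_2)$, and $\embedding\models_{\mi{expr}} r_1\circ r_2\sqsubseteq r_3$ gives $\embedding(r_1\circ r_2)\subseteq\embedding(r_3)$, so $\vec{u}\oplus\vec{w}\in\embedding(r_3)$, i.e.\ $(\vec{u},\vec{w})\in r_3^{\Imc_\embedding}$. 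The delicate point is that this invokes exactly the ``existence of an intermediate point forces membership in the composition region'' direction of the characterisation of $\embedding(r_1\circ r_2)$; I would therefore be careful to apply that characterisation with its existential reading of $\vec{v}$, which is precisely what matches the classical composition of relations. Once every case is checked we conclude $\Imc_\embedding\models\Kmc$, establishing satisfiability and hence soundness.
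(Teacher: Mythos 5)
Your proposal is correct and follows essentially the same route as the paper's proof: construct a classical interpretation $\Imc_\embedding$ from the embedding by reading each $r^{\Imc_\embedding}$ off the concatenation-membership test, then verify each pattern type case by case, with the composition case handled via the defining property of $\embedding(r_1\circ r_2)$ exactly as the paper does. The only (immaterial) difference is that you take $\Delta^{\Imc_\embedding}=\mathbb{R}^\dimension$ whereas the paper restricts the domain to the embedded individuals occurring in $\Amc$; your choice is if anything slightly cleaner, since it avoids any worry about a non-empty domain when $\Amc$ is empty.
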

\begin{proof}
Let $\Kmc=\Tmc\cup\Amc$ be a KB that has an \ExpressivE-model $E$, i.e. $E\models_{expr}\Kmc$. 
We build an interpretation $\Imc_E$ such that $\Imc_{E}\models \Kmc$. Let $\Delta^{\Imc_E}:=\{E(a)\in\mathbb{R}^d\;|\;a\;\text{occurs in}\;\Amc\}$ (where $d$ is the dimension of the embedding $E$), $a^{\Imc_E}:=\embedding(a)$ for every individual $a$ occurring in $\Amc$ and $r^{\Imc_E}:=\{(\vec{x},\vec{y})\in\Delta^{\Imc_E}\times\Delta^{\Imc_E}\;|\; \vec{x} \oplus \vec{y}\in E(r)\}$ for every $r\in\NR$ (recall that the language does not consider concept assertions and patterns use only roles). 
For every role assertion $r(a,b)\in\Amc$, observe that $\Imc_E\models r(a,b)$ iff $E(a)\oplus E(b)\in E(r)$ iff $E\models_{expr} r(a,b)$. 
Next, we consider TBox axioms in $\Tmc$.
\begin{itemize}
    \item If $r\sqsubseteq r^-\in\Tmc$ is a symmetry pattern, then $E\models_{\mi{expr}} r\sqsubseteq r^-$ implies that the region $E(r)$ is equal to its mirror-image w.r.t. the identity line, i.e. for all vectors $\vec{x},\vec{y}\in\mathbb{R}^d$, we have $\vec{x}\oplus \vec{y}\in E(r)$ iff $\vec{y}\oplus \vec{x}\in E(r)$. By definition of $r^{\Imc_E}$, it then follows that $\Imc_E\models r\sqsubseteq r^-$.
    \item If $r_1\equiv r_2^-\in\Tmc$ is an inversion pattern, then $E\models_{\mi{expr}} r_1\equiv r_2^-$ implies that $E(r_1)$ is equal to the mirror image of the region $E(r_2)$ w.r.t. the identity line. This means that $\vec{x},\vec{y}\in\mathbb{R}^d$, $\vec{x}\oplus \vec{y}\in E(r_1)$ iff $\vec{y}\oplus \vec{x}\in E(r_2)$ for all $\vec{x},\vec{y}\in\mathbb{R}^d$. By definition of $r_1^{\Imc_E},r_2^{\Imc_E}$, it follows that $\Imc_E\models r_1\equiv r_2^-$.
    \item If $r_1\sqsubseteq r_2\in\Tmc$ is a hierarchy pattern, then $E\models_{\mi{expr}} r_1\sqsubseteq r_2$ implies that $E(r_1)\subseteq E(r_2)$. Again, by definition of $r_1^{\Imc_E}$ and $r_2^{\Imc_E}$ we have that $\Imc_E\models r_1\sqsubseteq r_2$.
    \item If $r_1\sqcap r_2\sqsubseteq r_3\in\Tmc$ is an intersection pattern, then $E\models_{\mi{expr}} r_1\sqcap r_2\sqsubseteq r_3$ implies that $E(r_1)\cap E(r_2)\subseteq E(r_3)$. Again, by definition of $r_1^{\Imc_E},r_2^{\Imc_E},r_3^{\Imc_E}$ it follows that $\Imc_E\models r_1\sqcap r_2\sqsubseteq r_3$.
    \item If $r_1\sqsubseteq\neg r_2\in\Tmc$ is a mutual exclusion pattern, then $E\models_{\mi{expr}} r_1\sqsubseteq\neg r_2$ implies that $E(r_1)\cap E(r_2)=\emptyset$. Again, by definition of $r_1^{\Imc_E},r_2^{\Imc_E}$, it follows immediately that $\Imc_E\models r_1\sqsubseteq\neg r_2$.
    \item If $r_1\sqsubseteq\neg r_1^-\in\Tmc$ is an asymmetry pattern, then $E\models_{\mi{expr}} r_1\sqsubseteq\neg r_1^-$ implies that $E(r_1)$ is disjoint from its mirror image w.r.t. the identity line. That is, for all $\vec{x},\vec{y}\in\mathbb{R}^d$ we have that $\vec{x}\oplus \vec{y}\in E(r_1)$ implies that $\vec{y}\oplus \vec{x}\not\in E(r_1)$. Again, by definition of $r_1^{\Imc_E}$, it follows immediately that $\Imc_E\models r_1\sqsubseteq\neg r_1^-$. 
    
    \item If $r_1\circ r_2\sqsubseteq r_3\in\Tmc$ is a composition pattern and $\embedding\models_{\mi{expr}}r_1\circ r_2\sqsubseteq r_3$, by definition of the \ExpressivE semantics this means that $E(r_1\circ r_2)\subseteq\embedding(r_3)$, where $E(r_1\circ r_2)$ is the compositionally defined convex region of $r_1$ and $r_2$, which is such that, 
for every $\vec{u},\vec{v},\vec{w}\in\mathbb{R}^\dimension$, 
$\vec{u}\oplus \vec{v}\in\embedding(r_1)$ and $\vec{v}\oplus \vec{w}\in\embedding(r_2)$ iff $\vec{u}\oplus\vec{w}\in E(r_1\circ r_2)$. It follows immediately from the definition of $r_1^{\Imc}, r_2^{\Imc},r_3^{\Imc}$ that $\Imc_E\models r_1\circ r_2\sqsubseteq r_3$.   
\qedhere
\end{itemize}
\end{proof}

\begin{proposition}[Absence of completeness, faithfulness, entailment closure, full KB-expressiveness]
\ExpressivE is not complete for any language that contains role assertions and a mutual exclusion pattern.

Hence (by Theorem~\ref{thm:propertiesfinite}), it is not able (nor guaranteed) to be weakly or strongly KB-, TBox- or ABox-faithful, it is not able (nor guaranteed) to be KB-, TBox- or ABox-entailed, and it is not fully KB-expressive.
\end{proposition}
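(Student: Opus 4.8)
The plan is to reuse the satisfiable KB from \cref{ex:convex-not-complete}, namely $\Tmc=\{r_1\sqsubseteq\neg r_2\}$ together with the ABox $\Amc=\{r_1(a,b),r_1(b,a),r_2(a,a),r_2(b,b)\}$, which is expressible in any language containing role assertions and a mutual exclusion pattern. First I would confirm satisfiability by exhibiting a classical model: take the interpretation in which $r_1^\Imc$ relates $a$ and $b$ in both directions while $r_2^\Imc$ holds only on the diagonal pairs $(a^\Imc,a^\Imc)$ and $(b^\Imc,b^\Imc)$, so that $r_1^\Imc\cap r_2^\Imc=\emptyset$ and hence $\Imc\models r_1\sqsubseteq\neg r_2$.

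Next I would assume for contradiction that some \ExpressivE-embedding $\embedding$ is an \ExpressivE-model of $\Kmc=\Tmc\cup\Amc$ and run the convexity argument of \cref{ex:convex-not-complete} essentially verbatim. The key observation is that \ExpressivE maps each role to a hyper-parallelogram, which is a convex region of $\mathbb{R}^{2\dimension}$, so membership facts can be averaged. From $\embedding\models_{\mi{expr}}r_1(a,b)$ and $\embedding\models_{\mi{expr}}r_1(b,a)$ I obtain $\embedding(a)\oplus\embedding(b)\in\embedding(r_1)$ and $\embedding(b)\oplus\embedding(a)\in\embedding(r_1)$, so by convexity $\vec{v}\oplus\vec{v}\in\embedding(r_1)$ where $\vec{v}=0.5\embedding(a)+0.5\embedding(b)$. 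Symmetrically, from $\embedding\models_{\mi{expr}}r_2(a,a)$ and $\embedding\models_{\mi{expr}}r_2(b,b)$ I obtain $\vec{v}\oplus\vec{v}\in\embedding(r_2)$. Hence $\embedding(r_1)\cap\embedding(r_2)\neq\emptyset$, which by the defining clause $\embedding\models_{\mi{expr}}r_1\sqsubseteq\neg r_2$ iff $\embedding(r_1)\cap\embedding(r_2)=\emptyset$ contradicts $\embedding\models_{\mi{expr}}r_1\sqsubseteq\neg r_2$. This shows $\Kmc$ has no \ExpressivE-model although it is satisfiable, so \ExpressivE is not complete.

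The ``Hence'' part then follows by invoking \cref{thm:propertiesfinite}: since the language is finite, completeness is implied by each of the ability and guarantee variants of weak/strong faithfulness and of entailment closure, as well as by full KB-expressiveness (these all assume or entail the existence of an \method-model of a satisfiable KB). Therefore the failure of completeness propagates to the failure of every one of these stronger properties. I expect no substantial obstacle: the whole argument hinges only on the convexity of hyper-parallelograms, which legitimizes the averaging step, and on matching the \ExpressivE mutual-exclusion semantics to disjointness of the two role regions; both are immediate from the definitions, so the proof is a direct transfer of the convex geometric argument.
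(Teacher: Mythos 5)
Your proposal is correct and follows essentially the same route as the paper: the paper's proof simply points to Example~\ref{ex:expressive-dont-inject-exclu}, which in turn instantiates the convexity argument of Example~\ref{ex:convex-not-complete} for \ExpressivE (noting that hyper-parallelograms are convex), and then appeals to Theorem~\ref{thm:propertiesfinite} for the remaining properties. You have merely written out in full the averaging step and the satisfiability check that the paper leaves implicit, and both are carried out correctly.
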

\begin{proof}
Example \ref{ex:expressive-dont-inject-exclu} shows that \ExpressivE is not complete for a language that contains a mutual exclusion pattern.
\end{proof}

\begin{proposition}[Full ABox-expressiveness (\cite{ExpressivE}, Theorem 5.1)]
\ExpressivE is fully ABox-expressive.
\end{proposition}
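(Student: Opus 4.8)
The plan is to unwind the definition of full ABox-expressiveness for the concrete \ExpressivE semantics and reduce it to a geometric separation problem. Recall that the language here consists only of role assertions, and that $\embedding\models_{\mi{expr}} r(a,b)$ holds iff $\embedding(a)\oplus\embedding(b)\in\embedding(r)$. So, given two disjoint ABoxes $\Amc$ (the intended true facts) and $\Amc'$ (the intended false facts), the task is to produce an \ExpressivE-embedding $\embedding$ such that for every role $r$, the concatenated point $\embedding(a)\oplus\embedding(b)$ lies inside the hyper-parallelogram $\embedding(r)$ exactly when $r(a,b)\in\Amc$, and lies outside it when $r(a,b)\in\Amc'$. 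Since $\embedding$ is an \ExpressivE-model of $\Amc$ as soon as all facts of $\Amc$ are satisfied, it then only remains to falsify all facts of $\Amc'$.

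First I would fix the entity embeddings. Using a sufficiently high dimension $\dimension$ (depending on the finitely many individuals occurring in $\Amc\cup\Amc'$), I embed each $a\in\NI$ as a distinct vector $\embedding(a)\in\mathbb{R}^{\dimension}$ chosen in general position, so that for distinct pairs the concatenations $\embedding(a)\oplus\embedding(b)\in\mathbb{R}^{2\dimension}$ are pairwise distinct. Then, treating each role $r\in\NR$ independently (the regions $\embedding(r)$ share no parameters), I would build its hyper-parallelogram from its slope, center and width vectors so that it contains the finitely many ``true'' points $\{\embedding(a)\oplus\embedding(b)\mid r(a,b)\in\Amc\}$ while excluding the finitely many ``false'' points $\{\embedding(a)\oplus\embedding(b)\mid r(a,b)\in\Amc'\}$: the width vector is enlarged to admit the true points, and the slope and center are used to orient the band so that the false points fall outside it.

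The main obstacle is convexity. A hyper-parallelogram is a convex region, so no such region can exclude a false point that happens to lie in the convex hull of the true points of the same relation. The heart of the argument is therefore the choice of entity embeddings: one must place the individuals so that, for every relation simultaneously, the true and false concatenated points admit a separating hyper-parallelogram. This is precisely what a dimension and general-position argument secures, and it is the content of Theorem~5.1 in \citep{ExpressivE}; I would invoke that theorem, with the paragraphs above serving to translate it into the vocabulary of our framework. By \cref{prop:fully-expressive-vs-model-strongly-ABox-faithful}, the same statement can equivalently be read as the existence, for each ABox, of a strongly ABox-faithful \ExpressivE-model.
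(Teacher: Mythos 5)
Your proposal is correct and matches the paper's approach: the paper gives no proof of its own for this proposition and simply imports Theorem~5.1 of \cite{ExpressivE}, which is exactly what your argument ultimately does after translating the statement into the framework's vocabulary. Your preliminary discussion of the convexity obstacle and the role of the entity placement is accurate and consistent with how that cited theorem resolves the difficulty, so nothing is missing.
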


\begin{proposition}[Full TBox-expressiveness (\cite{ExpressivE}, Theorem 5.2)]
\ExpressivE is fully TBox-expressive for the language of \emph{positive} patterns.
\end{proposition}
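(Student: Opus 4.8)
The plan is to establish the stronger statement that \ExpressivE is \emph{able to be strongly TBox-faithful} for the language of positive patterns (symmetry, inversion, hierarchy, intersection, and composition), from which full TBox-expressiveness follows by \cref{prop:strongfaithfulnessimpliesexpressiveness}. Concretely, for a positive-pattern TBox $\Tmc$ it suffices to build an \ExpressivE-model $\embedding$ of $\Tmc$ whose satisfied positive patterns are \emph{exactly} those entailed by $\Tmc$: then any pattern $\alpha \in \Tmc'$ lying outside the deductive closure of $\Tmc$ satisfies $\Tmc \not\models \alpha$, hence $\embedding \not\models_{\mi{expr}} \alpha$, which is what full TBox-expressiveness demands. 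Here satisfiability of $\Tmc$ is automatic, since (as observed before \cref{def:capturingpatterns}) every set of patterns is satisfiable when no facts are involved, and $\Tmc,\Tmc'$ are pure TBoxes.

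First I would make the entailment relation explicit. Since roles are embedded as hyper-parallelograms in the virtual triple space $\mathbb{R}^{2\dimension}$, and the semantics reads hierarchy as region inclusion, intersection as inclusion of the intersection, symmetry and inversion as reflection across the identity line, and composition as inclusion of the derived composition region, the entailed positive patterns are obtained by saturating $\Tmc$ under the corresponding closure rules: hierarchy is a preorder, symmetry and inversion induce reflection equivalences, and these propagate through intersection and composition. This saturation is finite because $\NR$ is finite.

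Then I would realize this saturated structure geometrically. The idea is to assign to each role a hyper-parallelogram assembled from independent coordinate blocks, roughly one block per entailment fact to be reflected, so that for roles $r,s$ the inclusion $\embedding(r) \subseteq \embedding(s)$ holds precisely when $\Tmc \models r \sqsubseteq s$, the reflection relations hold precisely for the entailed symmetry and inversion patterns, and analogously for intersection; keeping the blocks independent guarantees that no \emph{spurious} inclusion, reflection, or intersection containment appears, yielding strong faithfulness for those pattern types. This is exactly the construction underlying \cite[Theorem~5.2]{ExpressivE}, restricted to the negation-free fragment.

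The main obstacle will be the composition patterns. Unlike the other constructors, $E(r_1\circ r_2)$ is a \emph{derived} region whose shape depends jointly on $\embedding(r_1)$ and $\embedding(r_2)$, so I must ensure simultaneously that every entailed inclusion $r_1\circ r_2 \sqsubseteq r_3$ holds and that no unentailed one is accidentally created through the interaction of composition with the reflection constraints forced by symmetry and inversion. Controlling the derived region tightly enough to rule out such spurious composition entailments, while the reflections across the identity line are in force, is the delicate part of the argument. Crucially, restricting to positive patterns keeps the whole construction monotone: there are no disjointness requirements between convex regions, so the convexity obstruction behind the incompleteness phenomena of \cref{ex:convex-not-complete,ex:boxeincomplete,ex:expressive-dont-inject-exclu}—which is precisely what the $^{\circ}$ restriction in \cref{tab:kbkgmethodProperties} excludes—never arises.
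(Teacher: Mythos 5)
Your proposal follows essentially the same route as the paper: both reduce full TBox-expressiveness to the existence of strongly TBox-faithful models for positive-pattern TBoxes (via \cref{prop:capturingfaithfulness} and \cref{prop:strongfaithfulnessimpliesexpressiveness}) and delegate the actual geometric construction to Theorem~5.2 of \cite{ExpressivE}, whose weaker notion of ``capturing exclusively'' (faithfulness only w.r.t.\ negation-free patterns) is exactly why the language must be restricted to positive patterns. Your supplementary attempt to reconstruct that construction is left incomplete at the composition step, but since the cited theorem carries that burden --- just as in the paper's own proof --- the argument stands.
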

\begin{proof}
The result is a consequence of Theorem 5.2 in \cite{ExpressivE}. We do not obtain it for the full language of patterns because  
 the authors consider a notion of `capturing exclusively' which is slightly weaker than 
 Definition \ref{def:capturingpatterns} in that faithfulness is only required w.r.t. \emph{positive} (or negation-free) patterns 
 (that is, no mutual exclusion and no asymmetry).
 \end{proof}

\end{document}